\documentclass{article}
\PassOptionsToPackage{hyphens}{url}
\usepackage{iclr2027_conference}
\usepackage[T1]{fontenc}
\usepackage{newtxtext}

\usepackage{hyperref}
\usepackage{graphicx}
\usepackage{adjustbox}
\usepackage{caption}
\usepackage{subcaption}
\usepackage{multirow}
\usepackage{xcolor}
\usepackage{algorithm}
\usepackage{algorithmicx}
\usepackage{algpseudocode}
\usepackage{amsthm}
\usepackage{amssymb}
\usepackage{amsmath}
\newtheorem{theorem}{Theorem}
\newtheorem{definition}{Definition}

\usepackage{wrapfig}   % 放在导言区

\usepackage{booktabs}
\renewcommand{\thesection}{\arabic{section}}
\renewcommand{\thesubsection}{\thesection.\arabic{subsection}}

\title{Convex-Hull-Neighborhood Smooth Dual Generalization:
Controlling Local Correction Propagation in Offline RL}

\author{%
\parbox{0.98\textwidth}{%
\centering
{\normalfont\bfseries\small
Yi Yang\textsuperscript{1},
Zhennan Chen\textsuperscript{2},
Mingfeng Lv\textsuperscript{1},\\[2pt]
HanleiLi\textsuperscript{1},
Zhengsen Ruan\textsuperscript{3,*},
Lvqing Yang\textsuperscript{1,*}
}\\[7pt]
{\normalfont\small
\textsuperscript{1}Xiamen University \quad
\textsuperscript{2}Nanjing University \quad
\textsuperscript{3}Zhejiang Gongshang University
}
}}

\hypersetup{
  pdftitle={Convex-Hull-Neighborhood Smooth Dual Generalization: Controlling Local Correction Propagation in Offline RL},
  pdfauthor={Yi Yang, Zhennan Chen, Mingfeng Lv, HanleiLi, Zhengsen Ruan, Lvqing Yang}
}

\iclrfinalcopy

\begin{document}

\maketitle
\lhead{Preprint}

\begingroup
\renewcommand{\thefootnote}{\fnsymbol{footnote}}
\footnotetext[1]{Corresponding authors}
\endgroup

\begin{abstract}
Offline reinforcement learning (offline RL) can benefit from nearby
out-of-distribution (OOD) actions, but estimation errors at these actions may be
amplified by bootstrapping. Existing regularization and local-generalization
methods control either the admissible OOD region or the influence of generalized
targets, often through separate mechanisms. We propose Convex Hull Neighborhood
Smooth Dual Generalization (CSDG), which expresses the Bellman backup as an
in-sample value target plus a CHN-local correction. This formulation makes the
generalized contribution explicit and separates it from the in-sample reference
path. The correction is obtained by smoothing in-sample-oriented and
OOD-oriented candidates sampled at different perturbation radii. A mixture
coefficient $\lambda$ scales its contribution to each backup, while the recursive
discount remains $\gamma$. Under boundedness and fixed perturbation kernels,
we derive an exact one-step correction identity, a time-varying iterate bound,
and a fixed-point bound that depends only on the branch discrepancy at the
fixed point. We further characterize the implicit policies induced by the
idealized operators and give a conditional non-degradation criterion. The
practical algorithm approximates these quantities using asymmetric bounded noise
and expectile regression, without exact support classification or an additional
pessimistic OOD penalty. Experiments on Gym-MuJoCo and AntMaze show strong
aggregate performance and stable value estimation. Code is available at: \url{https://github.com/YOUNG-fnxm/CSDG}
\end{abstract}

% Uncomment the following to link to your code, datasets, an extended version or similar.
% You must keep this block between (not within) the abstract and the main body of the paper.
% Make sure that you do not de-anonymize yourself with these links.
% \begin{links}
%     \link{Code}{https://aaai.org/example/code}
%     \link{Datasets}{https://aaai.org/example/datasets}
%     \link{Extended version}{https://aaai.org/example/extended-version}
% \end{links}

\section{Introduction}

While reinforcement learning (RL) has achieved success in sequential decision-making \citep{schrittwieser2020mastering}, standard approaches rely on active interaction. To address this limitation, offline RL has emerged as a data-driven paradigm that learns optimal policies entirely from previously gathered datasets \citep{levine2020offline}. This approach decouples learning from online interaction, enabling the direct exploitation of large-scale logs of past experience \citep{qu2023hokoff}. However, the distribution shift between the behavior and target policies presents a major challenge. This shift causes overestimation of OOD actions, leading to extrapolation errors that are exacerbated by bootstrapping. Consequently, these errors distort value estimates and prevent convergence to the optimal policy \citep{fujimoto2019off}.

%Standard
Prior offline RL methods mitigate OOD value errors through policy constraints
~\citep{wu2019behavior,kumar2019stabilizing,chen2020bail,
peng2019advantage,ghasemipour2021emaq,kumar2020conservative},
value penalization
~\citep{ma2021conservative,bai2022pessimistic,yang2022rorl},
or in-sample learning
~\citep{kostrikov2021offline,xiao2023sample,haarnoja2018soft,
xu2023offline}. More recent work seeks a middle ground through controlled OOD
generalization, including MCQ~\citep{lyu2022mildly},
DOGE~\citep{li2022data}, and SQOG~\citep{yao2025offline}, or through local
generalization and propagation control, such as
HUBL~\citep{geng2023improving} and DMG~\citep{DMG}. These methods regulate where generalization occurs or how it propagates. 
We instead approach this from a different perspective: 
isolating the local correction from the in-sample target and 
explicitly controlling how much of it enters each Bellman backup. 
This leads to the following question:
\textbf{\emph{Can we exploit locally generalized values while explicitly 
controlling their contribution to each recursive update under geometric 
approximation bounds?}}
\begin{table*}[t]
  \centering
  \small
  \renewcommand{\arraystretch}{1.3}
  \setlength{\tabcolsep}{8pt}
  \resizebox{0.9\textwidth}{!}{%
    \begin{tabular}{l c c c}
      \toprule
      Method &
      OOD Exploitation &
      Propagation Mode &
      Control Mechanism \\
      \midrule
      $\mathcal{X}$QL / IQL &
      None &
      In-Sample &
      In-Sample Reference \\
      CQL &
      Penalized &
      Standard &
      Pessimistic Penalty \\
      SQOG &
      CHN-Guided &
      Standard &
      CHN Regularization \\
      \midrule
      CSDG (Ours) &
      \textbf{Smoothed Local Values} &
      \textbf{Dual Target} &
      \textbf{Correction Scaling} \\
      \bottomrule
    \end{tabular}%
  }
\caption{Comparison of representative offline RL methods by OOD use, backup structure, and control mechanism.}
  \label{tab:comparison_ood_safety}
\end{table*}
\begin{wrapfigure}{r}{0.45\textwidth}   % r表示靠右，l表示靠左；第二个参数是环绕宽度
    \centering
    \includegraphics[width=\linewidth]{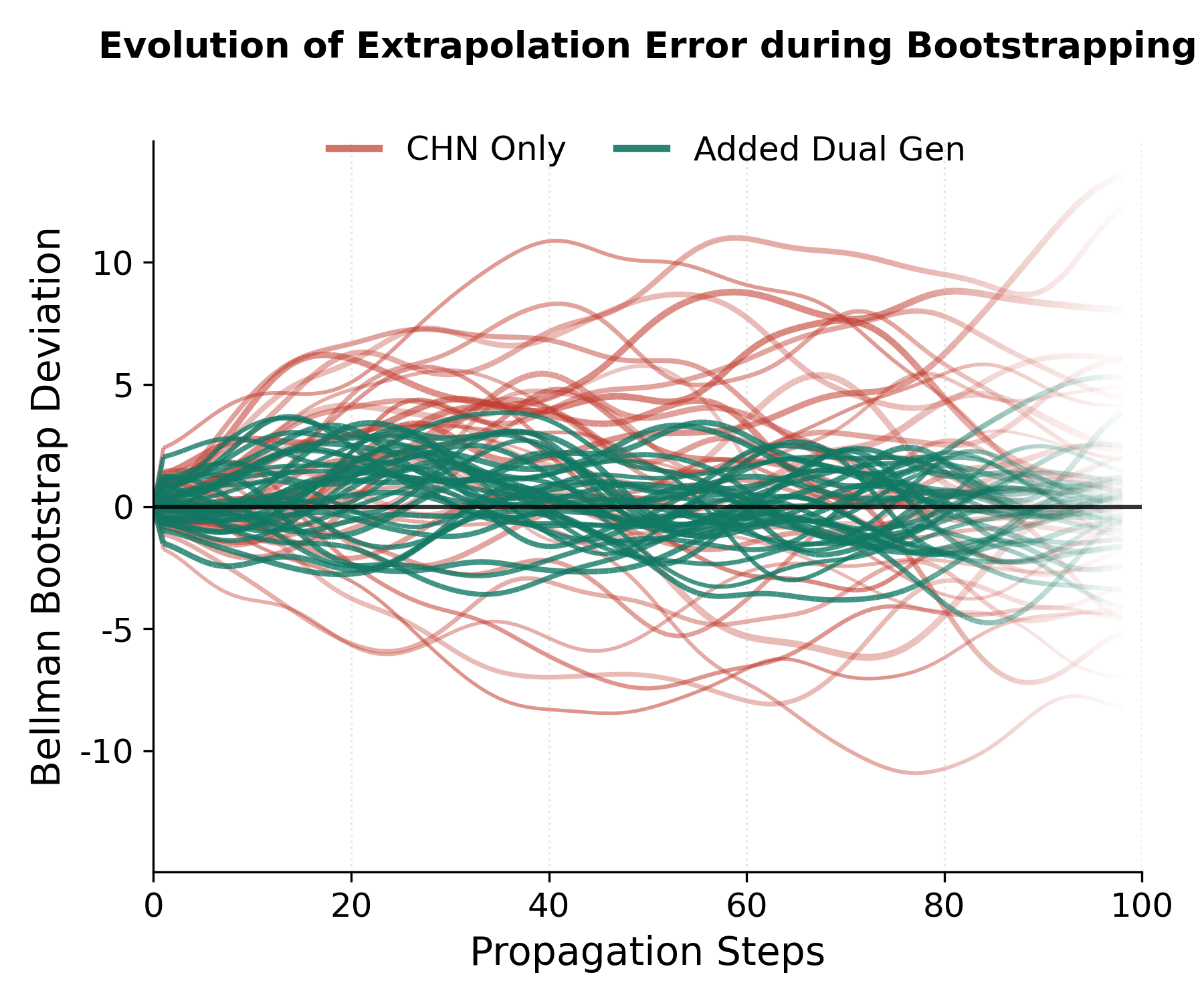}
    \caption{Schematic illustration of discrepancy propagation under repeated
    bootstrapping. The CSDG correction reduces the contribution of locally
    generalized values while retaining the same recursive discount.}
    \label{fig:int}
    \vspace{-1em}
\end{wrapfigure}
To answer this question, we build on the Convex Hull and its Neighborhood
(CHN)~\citep{yao2025offline} and introduce CHN Smooth Dual Generalization
(CSDG). CSDG organizes local smoothing and propagation control through one
correction to the in-sample Bellman target. Asymmetric perturbations first
produce an in-sample-oriented candidate and an OOD-oriented candidate at two
sampling radii. Their smoothed value defines a locally generalized target. The
difference between this target and the in-sample value is then treated as a
local correction, and $\lambda$ determines how much of it enters each recursive
update. In short, CHN specifies the local geometry, smoothing constructs the
generalized value, and $\lambda$ gates its contribution to bootstrapping.
Table~\ref{tab:comparison_ood_safety} contrasts this design with representative
approaches.

Our analysis characterizes the perturbation-averaged correction at the
operator, iterative, and policy levels. We first derive an exact one-step
decomposition and a time-varying bound that tracks the branch discrepancy along
the update sequence. We then establish fixed-point and a posteriori residual
bounds, with a nearest-anchor decomposition linking the branch discrepancy to
geometric and reference components. Finally, an implicit-policy representation
yields a signed-advantage criterion for comparing CSDG with the in-sample
reference. Empirically, CSDG performs strongly
on standard offline RL benchmarks~\citep{fu2020d4rl}, including Gym--MuJoCo
locomotion and the challenging AntMaze tasks. The two benchmark families cover
rather different control problems, providing a broader view of the method beyond
a single domain.

To summarize, our main contributions are as follows:

\begin{itemize}
    \item We formulate CSDG as an in-sample Bellman backup augmented by a
    CHN-local correction, with $\lambda$ directly controlling how much of the
    locally generalized value enters each recursive update.
    \item We derive exact one-step, time-varying iterate, fixed-point, and
a posteriori residual bounds, together with a conditional performance
criterion for the implicit policies induced by the idealized operators.
    \item We develop a practical algorithm that combines asymmetric two-scale
    smoothing with an expectile-referenced correction in the Bellman objective,
    without an additional pessimistic OOD penalty.
\end{itemize}
\section{Preliminaries}
\paragraph{RL.}
An RL problem is modeled as a Markov decision process (MDP)
$\mathcal M=(\mathcal S,\mathcal A,P,R,\gamma,d_0)$, where $\mathcal S$ and
$\mathcal A$ are the state and action spaces,
$P:\mathcal S\times\mathcal A\to\Delta(\mathcal S)$ is the transition kernel,
$R:\mathcal S\times\mathcal A\to[0,R_{\max}]$ is the bounded reward function,
$\gamma\in[0,1)$ is the discount factor, and $d_0$ is the initial-state
distribution~\citep{sutton1998reinforcement}. A policy
$\pi:\mathcal S\to\Delta(\mathcal A)$ seeks to maximize
\begin{equation}
J(\pi)=
\mathbb{E}_{\substack{
s_0\sim d_0,\,
a_t\sim\pi(\cdot|s_t),\,
s_{t+1}\sim P(\cdot|s_t,a_t)}}
\left[
\sum_{t=0}^{\infty}\gamma^{t}R(s_t,a_t)
\right].
\end{equation}
The corresponding value functions are
$V^\pi(s)=
\mathbb E_\pi[
\sum_{t=0}^\infty\gamma^tR(s_t,a_t)\mid s_0=s]$
and
$Q^\pi(s,a)=
\mathbb E_\pi[
\sum_{t=0}^\infty\gamma^tR(s_t,a_t)
\mid s_0=s,a_0=a]$.

\paragraph{Offline RL.}
Offline RL learns from a fixed dataset
$\mathcal D=\{(s_i,a_i,r_i,s'_i,d_i)\}_{i=0}^{n-1}$ collected by an unknown
behavior policy $\beta(\cdot|s)$, without further environment interaction
~\citep{lange2012batch,levine2020offline}. Because exact state repetitions are
rare in continuous control, for $1\le k\le n$ let $\mathcal N_k(s)$ index the
$k$ nearest dataset states, with distance ties resolved by the smallest dataset
index. Define the nonempty local action reference
$\mathcal A_{\mathcal D,k}(s)=\{a_i:i\in\mathcal N_k(s)\}$ and its
sample-weighted empirical distribution
$\hat\beta_k(\cdot|s)=k^{-1}\sum_{i\in\mathcal N_k(s)}\delta_{a_i}$.
Actions outside its effective support are locally OOD. A standard critic
minimizes the temporal-difference (TD) loss~\citep{sutton1998reinforcement}:
\begin{equation}
\begin{split}
\mathcal{L}_{\text{TD}}(\theta)
= \mathbb{E}_{(s,a,s')\sim\mathcal D}
\Bigg[ \Big( Q_\theta(s,a) &- R(s,a) - \gamma\max_{a'}Q_{\theta'}(s',a') \Big)^2 \Bigg].
\end{split}
\end{equation}
where $Q_\theta$ is the learned critic and $Q_{\theta'}$ is its target network,
updated by Polyak averaging~\citep{mnih2015human}.

\section{CHN Smooth Dual Generalization for Offline RL}
This section presents CSDG, which uses CHN as the geometric reference for local
generalization. Figure~\ref{zongtu} gives an overview. We first define the asymmetric smoothing target and its expectile-referenced
correction, analyze it at the one-step, iterative, fixed-point, and policy
levels, and then describe the practical algorithm.

\begin{wrapfigure}{r}{0.45\textwidth}   % r: 靠右, l: 靠左；宽度可调
    \centering
    \vspace{-3em}
    \includegraphics[width=\linewidth]{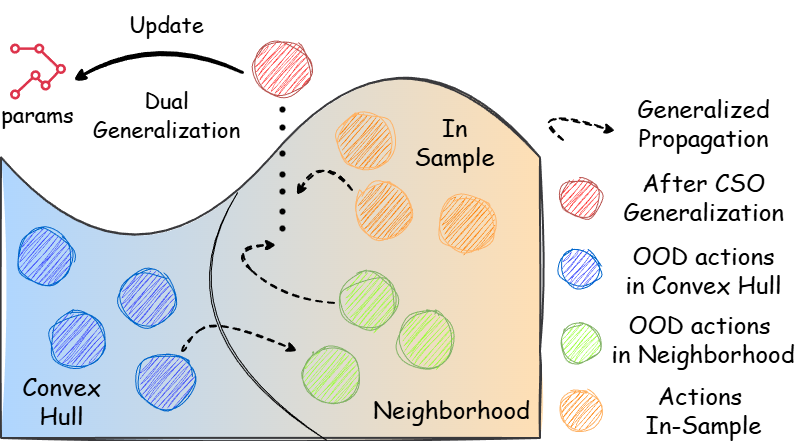}  % 原0.9\linewidth也可保留
    \caption{CSDG combines a locally smoothed Q-value target with an in-sample value
    target. Generated actions are associated with observed-action anchors through
    an effective geometric radius, and $\lambda$ scales the generalized correction
    during bootstrapping.}
    \label{zongtu}
    \vspace{-2em}
\end{wrapfigure}
\subsection{CHN Smooth Dual Generalization}
To formalize local generalization, we first specify the state-dependent action
region used by CSDG. We adopt Convex Hull and its Neighborhood
(CHN)~\citep{yao2025offline} as the geometric reference for this construction.

\begin{definition}[State-conditional CHN]
\label{de1}
Let $\mathcal S\subseteq\mathbb R^{d_s}$, and let
$\mathcal A\subseteq\mathbb R^{d_a}$ be a closed convex action box. Given a state
$s\in\mathcal S$, the local empirical action reference
$\mathcal A_{\mathcal D,k}(s)$, and a radius $\delta$, the
\emph{state-conditional CHN} is
\begin{equation}
\small
\operatorname{CHN}_{\delta}(s)
=
\left\{
a\in\mathcal A:
\min_{a'\in\operatorname{Conv}(\mathcal A_{\mathcal D,k}(s))}
\|a-a'\|_2\le\delta
\right\}.
\end{equation}
Here $\operatorname{Conv}(\mathcal A_{\mathcal D,k}(s))$ is the convex hull of
the local action reference.
\end{definition}

\paragraph{Interpretation.} This state-conditional construction provides a continuous local reference
without requiring exact state repetitions. CSDG uses it to describe the region
associated with the local correction and its geometric error; the corresponding
anchoring analysis is given in Appendix~\ref{chn}. For the operator analysis, equip $\mathcal S$ and $\mathcal A$ with their Borel
$\sigma$-algebras and assume that the reward, transition kernel, target selector,
perturbation kernels, and deterministic tie-breaking rules are measurable. We
work on the Banach space $\mathbb{B}_b(\mathcal S\times\mathcal A)$ of bounded
measurable functions with the supremum norm, and write
$\operatorname{clip}=\Pi_{\mathcal A}$ for Euclidean projection onto the action
box. 

We now define the two-scale smoothing mechanism used to construct the
generalized target.

\begin{definition}[CHN smooth operator]
\label{defcso}
Given a fixed deterministic target action selector
$\pi':\mathcal S\to\mathcal A$ and two bounded perturbation kernels
$\nu_{\mathrm{In}}$ and $\nu_{\mathrm{OOD}}$, draw $\eta_j\sim\nu_j$ and set
\begin{equation}
\begin{aligned}
a_{\mathrm{In}}
&=\operatorname{clip}\bigl(\pi'(s)+\eta_{\mathrm{In}}\bigr),
&\|\eta_{\mathrm{In}}\|_2&\le\delta_{\mathrm{In}},\\
a_{\mathrm{OOD}}
&=\operatorname{clip}\bigl(\pi'(s)+\eta_{\mathrm{OOD}}\bigr),
&\|\eta_{\mathrm{OOD}}\|_2&\le\delta_{\mathrm{OOD}},
\end{aligned}
\label{eq:asymmetric_candidates}
\end{equation}
where $0\le\delta_{\mathrm{In}}<\delta_{\mathrm{OOD}}$. The smaller-radius
action $a_{\mathrm{In}}$ is the in-sample-oriented candidate, and the
larger-radius action $a_{\mathrm{OOD}}$ is the OOD-oriented candidate. For
$\mu\in[0,1]$, define the expected CHN-smoothed value as
\begin{equation}
\begin{aligned}
\mathcal S_{\mu}Q(s)
=&\;
\mu\,\mathbb E_{\eta_{\mathrm{In}}\sim\nu_{\mathrm{In}}}
\left[Q(s,a_{\mathrm{In}})\right] +
(1-\mu)\,\mathbb E_{\eta_{\mathrm{OOD}}\sim\nu_{\mathrm{OOD}}}
\left[Q(s,a_{\mathrm{OOD}})\right].
\end{aligned}
\label{eq:theoretical_smooth_target}
\end{equation}
\end{definition}

The practical update draws one Monte Carlo sample from each kernel, yielding a
conditionally unbiased estimator of $\mathcal S_\mu Q(s)$. Our analysis uses the
perturbation-averaged operator.

\paragraph{Interpretation.}
The asymmetric perturbations avoid explicit CHN membership tests and
nearest-neighbor projection. The smaller-radius candidate provides a local
reference, whereas the larger-radius candidate introduces controlled
extrapolation. The two labels describe their sampling radii, not exact support
membership.

\paragraph{Local generalization in offline RL.}
Value updates can generalize to actions near the dataset, although their errors
may propagate through bootstrapping~\citep{DMG}. CSDG constructs the generalized
target directly with the two-scale smoother in
Eq.~\eqref{eq:theoretical_smooth_target}, rather than maximizing over an expanded
policy support. We distinguish the CHN radius from the effective
observed-action anchoring radius because distance to a convex hull need not equal
distance to an observed action.

\begin{definition}[CHN smooth dual-generalization operator]
\label{de4}
For $\tau\in(0,1)$, let
$L_2^\tau(u)=|\tau-\mathbf{1}\{u<0\}|u^2$. The in-sample expectile functional is
\begin{equation}
V_\tau^Q(s)
=
\arg\min_{v\in\mathbb R}
\mathbb E_{a\sim\hat\beta_k(\cdot|s)}
\left[L_2^\tau\bigl(Q(s,a)-v\bigr)\right].
\label{eq:theoretical_expectile}
\end{equation}
The objective is strictly convex for the nonempty empirical kernel and therefore
has a unique minimizer. Define the CHN-local correction by
$
\mathcal B_{\mathrm{CHN}}^\tau Q(s)
=
\mathcal S_\mu Q(s)-V_\tau^Q(s).
$
For $\lambda\in[0,1]$, the CSDG operator is
\begin{equation}
\begin{split}
\mathcal{T}_{\text{CSDG}}Q(s,a)
:=&\; R(s,a) 
   + \gamma\,\mathbb{E}_{s'\sim P(\cdot|s,a)}
     \Bigl[ V_\tau^Q(s')
     + \lambda\mathcal B_{\mathrm{CHN}}^\tau Q(s') \Bigr].
\end{split}
\label{eq:csdg_bias_injection}
\end{equation}
Equivalently, the target term is
$\lambda\mathcal S_\mu Q(s')+(1-\lambda)V_\tau^Q(s')$.
\end{definition}

\paragraph{Interpretation.} Equation~\eqref{eq:csdg_bias_injection} shows that
$\lambda$ scales the CHN-local correction
$\mathcal B_{\mathrm{CHN}}^\tau Q$ added to the in-sample backup, while the
recursive propagation factor remains $\gamma$. The analysis below first bounds
the direct branch discrepancy and then separates its geometric and reference
components through a nearest-anchor decomposition.

CSDG introduces controlled generalization at two complementary levels of the Bellman update.

\paragraph{Smooth OOD action generalization (first level).}
CSDG uses asymmetric perturbation kernels to generate actions in a controlled
local region. For each realized candidate, the nearest action among the local
dataset transitions serves as its anchor, and the corresponding value
difference contributes to $e_{\mathrm{geo}}(Q,s)$.

\paragraph{Smooth generalization propagation (second level).}
By combining the generalized target with an in-sample expectile target, CSDG
scales the local correction in each Bellman backup while retaining the recursive
discount factor $\gamma$.

Together, these two levels regulate the local correction through the Bellman
target without an auxiliary pessimistic penalty. We next analyze the correction at the one-step, iterative, fixed-point, and
policy levels.

\subsection{Analysis of the Generalized Correction}
\label{sec:theory_smoothness}

The branch discrepancy admits a geometric and reference decomposition.

\begin{definition}[Nearest-anchor decomposition]
\label{def:nearest_anchor_decomposition}
Let $\mathcal J=\{\mathrm{In},\mathrm{OOD}\}$,
$\alpha_{\mathrm{In}}=\mu$, and
$\alpha_{\mathrm{OOD}}=1-\mu$. For each $j\in\mathcal J$, let
$i_j(s,\eta_j)\in\mathcal N_k(s)$ minimize $\|a_j-a_i\|_2$, and write
$(s_{j,\mathrm{anc}},a_{j,\mathrm{anc}})=(s_{i_j},a_{i_j})$.
Distance ties are resolved by the smallest dataset index. For any branchwise quantity $g_j$, write
$\mathbb E_\alpha[g_j]
=\sum_{j\in\mathcal J}\alpha_j\mathbb E_{\eta_j}[g_j]$.
Define
\[
\begin{aligned}
e_{\mathrm{geo}}(Q,s)
&=
\mathbb E_\alpha
\left[
\left|
Q(s,a_j)
-
Q(s_{j,\mathrm{anc}},a_{j,\mathrm{anc}})
\right|
\right],\\
e_{\mathrm{ref}}(Q,s)
&=
\left|
\mathbb E_\alpha
\left[
Q(s_{j,\mathrm{anc}},a_{j,\mathrm{anc}})
\right]
-
V_\tau^Q(s)
\right|.
\end{aligned}
\]
The triangle inequality gives
\begin{equation}
\left|
\mathcal S_\mu Q(s)-V_\tau^Q(s)
\right|
\le
e_{\mathrm{geo}}(Q,s)
+
e_{\mathrm{ref}}(Q,s).
\label{eq:nearest_anchor_decomposition}
\end{equation}
\end{definition}

\paragraph{Interpretation.}
The selector associates each realized candidate with a local dataset
transition. The term $e_{\mathrm{geo}}(Q,s)$ measures the candidate-to-anchor
value difference, and $e_{\mathrm{ref}}(Q,s)$ measures the difference between
the averaged anchor value and the expectile reference. 

We next use the direct branch discrepancy to compare the generalized and
in-sample Bellman operators.

\begin{definition}[In-sample reference and branch discrepancy]
\label{def:branch_discrepancy}
The expectile-based in-sample operator is
$
\mathcal T_{\mathrm{In}}^\tau Q(s,a)
=
R(s,a)
+
\gamma
\mathbb E_{s'\sim P(\cdot|s,a)}
\left[
V_\tau^Q(s')
\right].
$
For every bounded $Q$, define the direct branch discrepancy as
\begin{equation}
e(Q)
=
\left\|
\mathcal S_\mu Q-V_\tau^Q
\right\|_\infty.
\label{eq:direct_branch_discrepancy}
\end{equation}
\end{definition}

\paragraph{Interpretation.}
The quantity $e(Q)$ measures the value difference between the smoothed and
expectile branches at $Q$. The nearest-anchor decomposition gives
$
e(Q)
\le
\sup_s
\left[
e_{\mathrm{geo}}(Q,s)
+
e_{\mathrm{ref}}(Q,s)
\right].
$

The following result characterizes the contribution of $e(Q)$ to one Bellman
update.

\begin{theorem}[Exact one-step correction]
\label{th:unified_bound}
For every bounded $Q$, the difference between the CSDG and in-sample Bellman operators is
$
\mathcal T_{\mathrm{CSDG}}Q(s,a)
-
\mathcal T_{\mathrm{In}}^\tau Q(s,a)
=
\gamma\lambda
\mathbb E_{s'\sim P(\cdot|s,a)}
\left[
\mathcal S_\mu Q(s')
-
V_\tau^Q(s')
\right].
$
Consequently,
\begin{equation}
\left\|
\mathcal T_{\mathrm{CSDG}}Q
-
\mathcal T_{\mathrm{In}}^\tau Q
\right\|_\infty
\le
\gamma\lambda e(Q).
\label{eq:one_step_correction_bound}
\end{equation}
\end{theorem}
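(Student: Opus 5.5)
The proof is a direct algebraic subtraction followed by a sup-norm estimate. No fixed-point or contraction argument is needed; the work is in confirming that every expectation is well defined.

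\textbf{Step 1: well-definedness.} Fix a bounded measurable $Q$ and a state $s'$.

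The expectile $V_\tau^Q(s')$ is the unique minimizer for the finite empirical kernel $\hat\beta_k(\cdot|s')$, as noted after Definition~\ref{de4}. It lies between the minimum and maximum of $Q(s',a_i)$ over $i\in\mathcal N_k(s')$. Hence $|V_\tau^Q(s')|\le\|Q\|_\infty$.

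The smoothed value $\mathcal S_\mu Q(s')$ is a convex combination of expectations of $Q$ at the clipped candidates. So it also satisfies $|\mathcal S_\mu Q(s')|\le\|Q\|_\infty$.

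Both maps are measurable under the standing measurability assumptions on the selector, the kernels and the tie-breaking rules. Therefore the correction $\mathcal B_{\mathrm{CHN}}^\tau Q=\mathcal S_\mu Q-V_\tau^Q$ is bounded and measurable, and its integral against $P(\cdot|s,a)$ exists.

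\textbf{Step 2: exact identity.} Subtract the definition of $\mathcal T_{\mathrm{In}}^\tau$ in Definition~\ref{def:branch_discrepancy} from Eq.~\eqref{eq:csdg_bias_injection}.

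The reward terms $R(s,a)$ cancel. By linearity of the conditional expectation, the terms $\gamma\,\mathbb E_{s'}[V_\tau^Q(s')]$ also cancel; this is legitimate because each integrand is integrable by Step 1. What remains is
\[
\gamma\lambda\,\mathbb E_{s'\sim P(\cdot|s,a)}\bigl[\mathcal S_\mu Q(s')-V_\tau^Q(s')\bigr],
\]
which is the claimed identity.

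\textbf{Step 3: sup-norm bound.} Take absolute values and use $|\mathbb E[X]|\le\mathbb E|X|$. Then bound the integrand pointwise by $\sup_{s'}|\mathcal S_\mu Q(s')-V_\tau^Q(s')|=e(Q)$, using Eq.~\eqref{eq:direct_branch_discrepancy}.

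Since $P(\cdot|s,a)$ is a probability measure and $\gamma\lambda\ge0$, this gives the bound $\gamma\lambda e(Q)$ at every $(s,a)$. Taking the supremum over $(s,a)$ yields Eq.~\eqref{eq:one_step_correction_bound}.

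\textbf{Main obstacle.} The only step that is not purely algebraic is the measurability and boundedness check in Step 1, particularly the measurability of $s'\mapsto V_\tau^Q(s')$ through the nearest-neighbour index set $\mathcal N_k(s')$. I would handle it as follows. $\mathcal N_k$ takes finitely many values, and with deterministic tie-breaking each level set is a Borel set. On each such set, $V_\tau^Q$ is the expectile of finitely many measurable functions $Q(s',a_i)$, and that expectile depends continuously on those values. Measurability therefore follows piecewise. The paper also states these measurability facts as standing assumptions, so they may simply be invoked.
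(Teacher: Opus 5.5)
Your proposal is correct and matches the paper's proof essentially step for step. You cancel the shared reward and expectile terms to get the signed identity, then bound the integrand by $e(Q)$ and take the supremum over $(s,a)$. The one difference is in the measurability check: the paper handles it in a separate well-posedness paragraph, writing $V_\tau^Q(s)$ as an infimum over rationals via the first-order function $F_Q(s,\cdot)$. Your piecewise argument over the finitely many neighbour index sets $\mathcal N_k(s')$ reaches the same conclusion.
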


\paragraph{Proof sketch.}
Expanding the two Bellman operators and subtracting their common terms gives
the stated identity. The triangle inequality and the definition of $e(Q)$ give
Eq.~\eqref{eq:one_step_correction_bound}.

\paragraph{Interpretation.}
The identity gives the signed contribution of the generalized branch to one
Bellman update. The parameter $\lambda$ scales this contribution, while
$\gamma$ propagates it through the transition dynamics.

We next track the branch discrepancy along the Bellman iterates and at the
fixed point.

\subsection{Iterate and Fixed-Point Guarantees}
\label{sec:robustness}

For the fixed target action selector and perturbation kernels,
$\mathcal S_\mu$ and $V_\tau^Q$ are non-expansive in the supremum norm.
Hence, $\mathcal T_{\mathrm{CSDG}}$ and
$\mathcal T_{\mathrm{In}}^\tau$ are $\gamma$-contractions with unique fixed
points.

\begin{theorem}[Iterate, fixed-point, and residual bounds]
\label{th:deviation}
Let
$Q_{\mathrm{CSDG}}^{k+1}
=\mathcal T_{\mathrm{CSDG}}Q_{\mathrm{CSDG}}^k$
and
$Q_{\mathrm{In}}^{k+1}
=\mathcal T_{\mathrm{In}}^\tau Q_{\mathrm{In}}^k$,
with $Q_{\mathrm{CSDG}}^0=Q_{\mathrm{In}}^0$. At iteration $k$, write
$e_k=e(Q_{\mathrm{CSDG}}^k)
=\|\mathcal S_\mu Q_{\mathrm{CSDG}}^k
-V_\tau^{Q_{\mathrm{CSDG}}^k}\|_\infty$.
Then, for every $k\ge1$,
\begin{equation}
\left\|
Q_{\mathrm{CSDG}}^k-Q_{\mathrm{In}}^k
\right\|_\infty
\le
\gamma\lambda
\sum_{t=0}^{k-1}
\gamma^{k-1-t}e_t.
\label{eq:deviation_bound}
\end{equation}

Let $Q_{\mathrm{CSDG}}^*$ and $Q_{\mathrm{In}}^*$ be the corresponding
fixed points, and write
$e_*=e(Q_{\mathrm{CSDG}}^*)$. For any bounded $\bar Q$, define
$r_{\mathrm{CSDG}}(\bar Q)
=\|\bar Q-\mathcal T_{\mathrm{CSDG}}\bar Q\|_\infty$.
Then
\begin{equation}
\begin{gathered}
\left\| Q_{\mathrm{CSDG}}^*-Q_{\mathrm{In}}^* \right\|_\infty
\le \frac{\gamma\lambda e_*}{1-\gamma},\\[4pt]
\left\| \bar Q-Q_{\mathrm{In}}^* \right\|_\infty
\le \frac{ r_{\mathrm{CSDG}}(\bar Q) + \gamma\lambda e(\bar Q) }{ 1-\gamma }.
\end{gathered}
\label{eq:fixed_point_residual_bounds}
\end{equation}
\end{theorem}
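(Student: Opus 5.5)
The plan is a standard perturbation-of-contractions argument. Every part rests on two facts: Theorem~\ref{th:unified_bound}, and the $\gamma$-contraction of $\mathcal T_{\mathrm{In}}^\tau$ in $\|\cdot\|_\infty$. The contraction follows from the stated non-expansiveness of $Q\mapsto V_\tau^Q$, since $|\mathbb E_{s'}[V_\tau^{Q}(s')-V_\tau^{Q'}(s')]|\le\|Q-Q'\|_\infty$. I would briefly justify that non-expansiveness: the expectile of $Q(s,\cdot)$ under the fixed empirical kernel $\hat\beta_k(\cdot|s)$ is monotone in $Q$ and translation-equivariant, hence $1$-Lipschitz in sup norm. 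I take it as given from the preceding paragraph.

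For the iterate bound, set $D_k=\|Q_{\mathrm{CSDG}}^k-Q_{\mathrm{In}}^k\|_\infty$, so $D_0=0$. Split
\[
Q_{\mathrm{CSDG}}^{k+1}-Q_{\mathrm{In}}^{k+1}
=\bigl(\mathcal T_{\mathrm{CSDG}}Q_{\mathrm{CSDG}}^k-\mathcal T_{\mathrm{In}}^\tau Q_{\mathrm{CSDG}}^k\bigr)
+\bigl(\mathcal T_{\mathrm{In}}^\tau Q_{\mathrm{CSDG}}^k-\mathcal T_{\mathrm{In}}^\tau Q_{\mathrm{In}}^k\bigr).
\]
Theorem~\ref{th:unified_bound} bounds the first term by $\gamma\lambda e_k$. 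Contraction bounds the second by $\gamma D_k$. This gives the recursion $D_{k+1}\le\gamma D_k+\gamma\lambda e_k$. Unrolling it from $D_0=0$ by induction yields $D_k\le\gamma\lambda\sum_{t=0}^{k-1}\gamma^{k-1-t}e_t$, which is Eq.~\eqref{eq:deviation_bound}. The key design choice is to apply the one-step correction at the CSDG iterate rather than the in-sample iterate, so that $e_t$ is evaluated along the CSDG sequence as the statement requires.

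For the fixed-point bound, I would not pass to the limit in the iterate bound, since $e_k\to e_*$ would require continuity of $e$. Instead I use the fixed-point equations directly:
\[
Q_{\mathrm{CSDG}}^*-Q_{\mathrm{In}}^*
=\bigl(\mathcal T_{\mathrm{CSDG}}Q_{\mathrm{CSDG}}^*-\mathcal T_{\mathrm{In}}^\tau Q_{\mathrm{CSDG}}^*\bigr)+\bigl(\mathcal T_{\mathrm{In}}^\tau Q_{\mathrm{CSDG}}^*-\mathcal T_{\mathrm{In}}^\tau Q_{\mathrm{In}}^*\bigr).
\]
Taking norms gives $\|Q_{\mathrm{CSDG}}^*-Q_{\mathrm{In}}^*\|_\infty\le\gamma\lambda e_*+\gamma\|Q_{\mathrm{CSDG}}^*-Q_{\mathrm{In}}^*\|_\infty$. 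Both fixed points are bounded, so the norm is finite and I can rearrange. For the residual bound, write
\[
\bar Q-Q_{\mathrm{In}}^*=(\bar Q-\mathcal T_{\mathrm{CSDG}}\bar Q)+(\mathcal T_{\mathrm{CSDG}}\bar Q-\mathcal T_{\mathrm{In}}^\tau\bar Q)+(\mathcal T_{\mathrm{In}}^\tau\bar Q-\mathcal T_{\mathrm{In}}^\tau Q_{\mathrm{In}}^*).
\]
The three terms are bounded by $r_{\mathrm{CSDG}}(\bar Q)$, by $\gamma\lambda e(\bar Q)$ from Theorem~\ref{th:unified_bound}, and by $\gamma\|\bar Q-Q_{\mathrm{In}}^*\|_\infty$. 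Rearranging gives the second inequality in Eq.~\eqref{eq:fixed_point_residual_bounds}.

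None of these steps is deep. The only place requiring care is the contraction property of $\mathcal T_{\mathrm{In}}^\tau$, specifically the sup-norm non-expansiveness of the expectile map and its measurability in $s$. If a gap appears there, I would argue the Lipschitz property directly from the first-order condition of the strictly convex expectile objective.
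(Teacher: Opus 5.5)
Your proposal is correct and follows the paper's proof essentially step for step. You use the same insertion of $\mathcal T_{\mathrm{In}}^\tau Q_{\mathrm{CSDG}}^k$ to obtain $D_{k+1}\le\gamma\lambda e_k+\gamma D_k$ unrolled from $D_0=0$, the same direct use of the fixed-point equations (rather than a limit of the iterate bound), the same three-term decomposition for the residual bound, and the same monotonicity-plus-translation-equivariance argument for the contraction of $\mathcal T_{\mathrm{In}}^\tau$.
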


\paragraph{Proof sketch.}
Let
$D_k=\|Q_{\mathrm{CSDG}}^k-Q_{\mathrm{In}}^k\|_\infty$.
Theorem~\ref{th:unified_bound} and the contraction of
$\mathcal T_{\mathrm{In}}^\tau$ give
$D_{k+1}\le\gamma\lambda e_k+\gamma D_k$.
Unrolling this recurrence from $D_0=0$ gives
Eq.~\eqref{eq:deviation_bound}. Applying the same inequality to the two fixed
points gives the first bound in
Eq.~\eqref{eq:fixed_point_residual_bounds}. For a bounded $\bar Q$, inserting
$\mathcal T_{\mathrm{CSDG}}\bar Q$ and
$\mathcal T_{\mathrm{In}}^\tau\bar Q$ between $\bar Q$ and
$Q_{\mathrm{In}}^*$ gives the second bound.

\paragraph{Interpretation.}
Equation~\eqref{eq:deviation_bound} accumulates the observed discrepancies
$e_t$ with their remaining discount factors. The fixed-point bound depends on
$e_*$, and the residual bound combines the Bellman residual of $\bar Q$ with
its branch discrepancy.

We next characterize the policies represented by the two fixed points.

\begin{definition}[Induced fixed-point policies]
\label{def:induced_fixed_point_policies}
Let $w_\tau(u)=\tau$ for $u\ge0$ and $w_\tau(u)=1-\tau$ otherwise.
Define $\pi_\tau^Q$ as the normalized reweighting
$
\pi_\tau^Q(da|s)
\propto
w_\tau\bigl(Q(s,a)-V_\tau^Q(s)\bigr)
\hat\beta_k(da|s).
$
The expectile first-order condition gives
$V_\tau^Q(s)=\mathbb E_{a\sim\pi_\tau^Q(\cdot|s)}[Q(s,a)]$.
Let $\kappa_\mu(\cdot|s)$ be the mixture distribution of
$a_{\mathrm{In}}$ and $a_{\mathrm{OOD}}$ with weights $\mu$ and $1-\mu$.
Then
$\mathcal S_\mu Q(s)
=\mathbb E_{a\sim\kappa_\mu(\cdot|s)}[Q(s,a)]$.
At the fixed points, define
\begin{equation}
\begin{gathered}
\pi_{\mathrm{In}}^* = \pi_\tau^{Q_{\mathrm{In}}^*},\\
\pi_{\mathrm{CSDG}}^* = (1-\lambda)\pi_\tau^{Q_{\mathrm{CSDG}}^*} + \lambda\kappa_\mu.
\end{gathered}
\label{eq:induced_policy_system}
\end{equation}
\end{definition}

\paragraph{Interpretation.}
The two branches induce a reweighted empirical policy and a perturbation
kernel, whose mixture defines $\pi_{\mathrm{CSDG}}^*$.

\begin{theorem}[Induced-policy performance]
\label{th:policy_performance_comparison}
The fixed points satisfy
$Q_{\mathrm{In}}^*=Q^{\pi_{\mathrm{In}}^*}$ and
$Q_{\mathrm{CSDG}}^*=Q^{\pi_{\mathrm{CSDG}}^*}$.
Let $A^\pi(s,a)=Q^\pi(s,a)-V^\pi(s)$, and let $d_\pi^\gamma$ denote the
normalized discounted state occupancy. For an action kernel $\rho$, let
$\bar G(\rho)$ be the expected value of
$A^{\pi_{\mathrm{In}}^*}(s,a)$ under
$s\sim d_{\pi_{\mathrm{CSDG}}^*}^\gamma$ and
$a\sim\rho(\cdot|s)$. Write
$\bar G_\tau=\bar G(\pi_\tau^{Q_{\mathrm{CSDG}}^*})$ and
$\bar G_\mu=\bar G(\kappa_\mu)$. If $\bar G_\tau\ge-\xi$ and $\bar G_\mu\ge g_\mu$, where
$\xi,g_\mu\ge0$, then
\begin{equation}
\begin{aligned}
J(\pi_{\mathrm{CSDG}}^*)-J(\pi_{\mathrm{In}}^*)
&=
\frac{
(1-\lambda)\bar G_\tau+\lambda\bar G_\mu
}{
1-\gamma
}\\
&\ge
\frac{
\lambda g_\mu-(1-\lambda)\xi
}{
1-\gamma
},\\
\lambda g_\mu\ge(1-\lambda)\xi
&\Longrightarrow
J(\pi_{\mathrm{CSDG}}^*)
\ge
J(\pi_{\mathrm{In}}^*).
\end{aligned}
\label{eq:policy_performance_result}
\end{equation}
\end{theorem}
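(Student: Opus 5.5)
The plan has two parts. First I identify each fixed point as the Q-function of its induced policy by rewriting the fixed-point equation as a policy-evaluation equation. Then I obtain the performance comparison from the performance difference lemma, using linearity of the advantage in the action kernel.

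\textbf{Step 1 (fixed points as policy values).} For $Q_{\mathrm{In}}^*$, the fixed-point equation is $Q_{\mathrm{In}}^*(s,a)=R(s,a)+\gamma\mathbb E_{s'}[V_\tau^{Q_{\mathrm{In}}^*}(s')]$. By the expectile first-order condition in Definition~\ref{def:induced_fixed_point_policies}, $V_\tau^{Q}(s')=\mathbb E_{a'\sim\pi_\tau^{Q}(\cdot|s')}[Q(s',a')]$. With $Q=Q_{\mathrm{In}}^*$ and $\pi_{\mathrm{In}}^*$ held fixed, this says that $Q_{\mathrm{In}}^*$ is a fixed point of the policy evaluation operator $\mathcal T^{\pi_{\mathrm{In}}^*}$. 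That operator is a $\gamma$-contraction on $\mathbb B_b(\mathcal S\times\mathcal A)$, and its unique fixed point is $Q^{\pi_{\mathrm{In}}^*}$, so $Q_{\mathrm{In}}^*=Q^{\pi_{\mathrm{In}}^*}$.

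For CSDG, the target term is $\lambda\mathcal S_\mu Q(s')+(1-\lambda)V_\tau^Q(s')$. This equals $\lambda\mathbb E_{\kappa_\mu}[Q]+(1-\lambda)\mathbb E_{\pi_\tau^Q}[Q]=\mathbb E_{a'\sim\pi^*_{\mathrm{CSDG}}}[Q(s',a')]$ at $Q=Q_{\mathrm{CSDG}}^*$, so the same uniqueness argument gives $Q_{\mathrm{CSDG}}^*=Q^{\pi_{\mathrm{CSDG}}^*}$.

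The subtle point is the self-referential nature of the induced policy, which depends on $Q$. The argument works because the policy is frozen at the fixed point, after which policy evaluation has a unique solution. Measurability of $\pi_\tau^Q$ and $\kappa_\mu$ as kernels follows from the standing measurability assumptions and the deterministic tie-breaking rules. The weights are bounded away from zero, so the normalization is well defined.

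\textbf{Step 2 (performance difference).} I apply the performance difference lemma with $\pi=\pi_{\mathrm{CSDG}}^*$ and $\pi'=\pi_{\mathrm{In}}^*$:
\[
J(\pi)-J(\pi')=\frac{1}{1-\gamma}\mathbb E_{s\sim d_\pi^\gamma}\mathbb E_{a\sim\pi(\cdot|s)}\bigl[A^{\pi'}(s,a)\bigr].
\]
It is proved by telescoping $V^{\pi'}$ along trajectories of $\pi$; rewards are bounded, so all sums converge absolutely. Because $\pi_{\mathrm{CSDG}}^*$ is the mixture $(1-\lambda)\pi_\tau^{Q_{\mathrm{CSDG}}^*}+\lambda\kappa_\mu$, the inner expectation is linear in the action kernel. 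Integrating over $d^\gamma_{\pi^*_{\mathrm{CSDG}}}$ therefore gives $(1-\lambda)\bar G_\tau+\lambda\bar G_\mu$, which proves the identity. The advantage is bounded by $R_{\max}/(1-\gamma)$, so Fubini applies.

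\textbf{Step 3 (inequalities).} Substituting $\bar G_\tau\ge-\xi$ and $\bar G_\mu\ge g_\mu$, with nonnegative weights $1-\lambda$ and $\lambda$, yields the lower bound $\bigl(\lambda g_\mu-(1-\lambda)\xi\bigr)/(1-\gamma)$. The final implication is immediate from this bound.

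The main care point is Step 1, specifically justifying that the expectile first-order condition represents $V_\tau^Q$ exactly as an expectation under the reweighted kernel. This follows from setting the derivative of the strictly convex objective to zero: $\mathbb E_{\hat\beta_k}[w_\tau(Q-v)(Q-v)]=0$. Dividing by the positive normalizer $\mathbb E_{\hat\beta_k}[w_\tau(Q-v)]$ gives $v=\mathbb E_{\pi_\tau^Q}[Q]$.
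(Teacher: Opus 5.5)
Your proposal is correct and follows essentially the same route as the paper. You use the expectile first-order condition (the paper's expectile policy representation lemma) and the mixture form of $\pi_{\mathrm{CSDG}}^*$ to turn each fixed-point equation into a policy-evaluation equation, invoke uniqueness of the bounded policy-evaluation fixed point, apply the performance-difference lemma with linearity in the action kernel, and then substitute the two advantage bounds, with your boundedness and normalizer remarks only making explicit what the paper's argument already relies on.
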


\paragraph{Proof sketch.}
The induced-policy identities turn the fixed-point equations into the Bellman
evaluation equations of $\pi_{\mathrm{In}}^*$ and
$\pi_{\mathrm{CSDG}}^*$. The performance-difference lemma and the mixture form
of $\pi_{\mathrm{CSDG}}^*$ give the first equality in
Eq.~\eqref{eq:policy_performance_result}; the bounds on $\bar G_\tau$ and
$\bar G_\mu$ give the remaining statements.

\paragraph{Interpretation.}
The averaged advantages separate the expectile-policy and generalized-branch
contributions. Their weighted balance determines the non-degradation
condition.

This completes the idealized operator analysis. We next present the practical
optimization procedure.

\subsection{Practical Algorithm}
The practical CSDG algorithm uses a deterministic actor $\pi_\phi$ and target
actor $\pi_{\phi'}$, twin critics $Q_{\theta_1},Q_{\theta_2}$ and their target
copies $Q_{\theta_1'},Q_{\theta_2'}$, and a value network $V_\psi$. We write
$Q_\theta=\min_iQ_{\theta_i}$ and $Q_{\theta'}=\min_iQ_{\theta_i'}$ for the
online and target clipped double-Q estimates, respectively, and update all
target networks by Polyak averaging.
\paragraph{Policy learning.}
The actor objective combines TD3BC-style Q maximization with
advantage-weighted behavior cloning. Define
\begin{equation}
w(s,a)
=
\operatorname{clip}
\left(
\exp
\left[
\alpha
\left(
Q_{\theta'}(s,a)-V_\psi(s)
\right)
\right],
w_{\max}
\right),
\label{eq:awr_weight}
\end{equation}
where $\alpha$ is the inverse temperature and $w_{\max}$ clips large advantage
weights. The adaptive Q-scaling coefficient is
\begin{equation}
\omega_Q
=
\left(
\mathbb{E}_{s\sim\mathcal D}
\left[
\left|
Q_{\theta}(s,\pi_\phi(s))
\right|
\right]
\right)^{-1}.
\label{eq:q_scale}
\end{equation}
The actor loss is
\begin{equation}
\begin{split}
L_\pi(\phi)
=
&-\omega_Q
\mathbb{E}_{s\sim\mathcal D}
\left[
Q_{\theta}(s,\pi_\phi(s))
\right] +
\nu
\mathbb{E}_{(s,a)\sim\mathcal D}
\left[
w(s,a)
\left\|
\pi_\phi(s)-a
\right\|_2^2
\right].
\end{split}
\label{eqpi}
\end{equation}
The first term favors high-value actions, while the second keeps the actor near
advantage-weighted dataset actions. Thus, behavior regularization is applied in
the actor update, whereas the critic uses no additional CQL-style OOD penalty.

\begin{table*}[htb]
  \footnotesize
  \small
  \centering
    \caption{Average normalized scores on Gym locomotion and AntMaze tasks. CSDG results are averaged over the final ten evaluations and 10 random seeds.
  m = medium, m-r = medium-replay, m-e = medium-expert, e = expert, r = random; u = umaze, u-d = umaze-diverse, m-p = medium-play, m-d = medium-diverse, l-p= large-play, l-d = large-diverse.
  }
  \label{table2}
  \setlength{\tabcolsep}{3.7pt}
  \begin{adjustbox}{max width=\textwidth}
  \begin{tabular}{@{}l rrrrrrrrrr r@{}}
    \toprule
    Dataset-v2 & BC & BCQ & BEAR & AWAC & TD3BC & CQL & IQL & CPI & CQL+C4 & UNIQ & CSDG(Ours) \\ \midrule
    halfcheetah-m & 42.0 & 46.6 & 43.0 & 47.9 & 48.3 & 47.0 & 47.4 & \textbf{64.4} & 48.5 & 48.9 & 55.3$\pm$0.4 \\ 
    hopper-m & 56.2 & 59.4 & 51.8 & 59.8 & 59.3 & 53.0 & 66.2 & 98.5 & 85.9 & 75.6 & \textbf{101.2$\pm$1.6} \\ 
    walker2d-m & 71.0 & 71.8 & -0.2 & 83.1 & 83.7 & 73.3 & 78.3 & 85.8 & 81.8 & 85.5 & \textbf{93.1$\pm$1.8} \\ 
    halfcheetah-m-r & 36.4 & 42.2  & 36.6 & 44.8 & 44.6 & 45.5 & 44.2 & \textbf{54.6} & 46.7 & 46.0 & 51.5$\pm$0.2 \\ 
    hopper-m-r & 21.8 & 60.9 & 52.2 & 69.8 & 60.9 & 88.7 & 94.7 & 101.7 & 95.0 & 101.6 & \textbf{102.5$\pm$1.4} \\ 
    walker2d-m-r & 24.9 & 57.0 & 7.0 & 78.1 & 81.8 & 81.8 & 73.8 & 91.8 & 90.9 & 89.4 & \textbf{96.3$\pm$2.5} \\ 
    halfcheetah-m-e & 59.6 & \textbf{95.4} & 46.0 & 64.9 & 90.7 & 75.6 & 86.7 & 94.7 & 91.6 & 94.8 & 94.1$\pm$1.7 \\ 
    hopper-m-e & 51.7 & 106.9 & 50.6 & 100.1 & 98.0 & 105.6 & 91.5 & 106.4 & 89.4 & \textbf{111.8} & 111.3$\pm$1.6 \\ 
    walker2d-m-e & 101.2 & 107.7 & 22.1 & 110.0 & 110.1 & 107.9 & 109.6 & 110.9 & 108.6 & 112.9 & \textbf{114.8$\pm$0.8} \\ 
    halfcheetah-e & 92.9 & 89.9 & 92.7 & 81.7 & \textbf{96.7} & 96.3 & 95.0 & 96.5 & 93.2 & -- & 95.6$\pm$1.4 \\ 
    hopper-e & 110.9 & 109.0 & 54.6 & 109.5 & 107.8 & 96.5 & 109.4 & 112.2 & 110.1 & -- & \textbf{113.3$\pm$3.2} \\ 
    walker2d-e & 107.7 & 106.3 & 106.6 & 110.1 & 110.2 & 108.5 & 109.9 & 110.6 & 109.7 & -- & \textbf{114.9$\pm$0.3} \\ 
    halfcheetah-r & 2.6 & 2.2 & 2.3 & 6.1 & 11.0 & 17.5 & 13.1 & 29.7 & -- & -- & \textbf{33.7$\pm$6.8} \\ 
    hopper-r & 4.1 & 7.8 & 3.9 & 9.2 & 8.5 & 7.9 & 7.9 & \textbf{29.5} & -- & -- & 8.2$\pm$1.1\\ 
    walker2d-r & 1.2 & 4.9 & 12.8 & 0.2 & 1.6 & 5.1 & 5.4 & 5.9 & -- & -- & \textbf{13.6$\pm$1.9} \\ \midrule
    locomotion total & 784.2 & 968.0 & 581.7 & 975.3 & 1013.2 & 1010.2 & 1033.1 & 1193.2 & -- & -- & \textbf{1199.4} \\ 
    \midrule
    antmaze-u & 66.8 & 78.9 & 73.0 & 80.0 & 73.0 & 82.6 & 89.6 & \textbf{98.8} & -- & -- & 92.6$\pm$1.9 \\ 
    antmaze-u-d & 56.8 & 55.0 & 61.0 & 52.0 & 47.0 & 10.2 & 65.6 & \textbf{88.6} & -- & -- & 80.1$\pm$7.1 \\ 
    antmaze-m-p & 0.0 & 0.0 & 0.0 & 0.0 & 0.0 & 59.0 & 76.4 & \textbf{82.4} & -- & -- & 81.6$\pm$4.1 \\ 
    antmaze-m-d & 0.0 & 0.0 & 0.0 & 0.6 & 0.2 & 46.6 & 72.8 & \textbf{80.4} & -- & -- & 74.2$\pm$4.7 \\ 
    antmaze-l-p & 0.0 & 6.7 & 0.0 & 0.0 & 0.0 & 16.4 & 42.0 & 20.6 & -- & -- & \textbf{56.1$\pm$4.3} \\ 
    antmaze-l-d & 0.0 & 2.2 & 0.0 & 0.0 & 0.0 & 3.2 & 46.0 & 45.2 & -- & -- & \textbf{61.2$\pm$3.7} \\ \midrule
    antmaze total & 123.6 & 142.8 & 142.0 & 132.2 & 120.2 & 218.0 & 392.4 & 416.0 & -- & -- & \textbf{445.8} \\ 
    \bottomrule
  \end{tabular}
  \end{adjustbox}
\end{table*}

\paragraph{Value learning.}
The value network is trained by expectile regression over the clipped target
critic:
\begin{equation}
L_V(\psi)
=
\mathbb{E}_{(s,a)\sim\mathcal D}
\left[
L_2^\tau
\left(
Q_{\theta'}(s,a)-V_\psi(s)
\right)
\right],
\label{eqv}
\end{equation}
where $L_2^\tau$ denotes the expectile loss. This learned value network is a
function-approximation surrogate for the ideal reference
$V_\tau^{Q_{\theta'}}$ in the theoretical operator. The idealized fixed-point
result analyzes the exact reference; practical performance also depends on the
residual $\sup_s|V_\psi(s)-V_\tau^{Q_{\theta'}}(s)|$ and optimization error.

\paragraph{Smoothed target construction.}
For each next state $s'$, CSDG samples two clipped target actions,
$\tilde a_j=\operatorname{clip}(\pi_{\phi'}(s')+\eta_j,
-a_{\max},a_{\max})$, where $j\in\{\mathrm{In},\mathrm{OOD}\}$.
The OOD-oriented branch uses a larger noise scale and clipping radius than the
in-sample-oriented branch. Their clipped double-Q values form the smoothed
target
\begin{equation}
\widetilde Q_{\theta'}(s')
=
\mu Q_{\theta'}(s',\tilde a_{\mathrm{In}})
+
(1-\mu)Q_{\theta'}(s',\tilde a_{\mathrm{OOD}}).
\label{eq:smooth_target}
\end{equation}
Each sampled pair gives a Monte Carlo estimate of
$\mathcal S_\mu Q_{\theta'}(s')$. The critic target adds the corresponding
local correction to the in-sample value:
\begin{equation}
y
=
R(s,a)
+
\gamma(1-d)
\left[
V_\psi(s')
+
\lambda
\left(
\widetilde Q_{\theta'}(s')-V_\psi(s')
\right)
\right].
\label{eq:critic_target}
\end{equation}
Both critics minimize
\begin{equation}
L_Q(\theta_i)
=
\mathbb E_{(s,a,s',d)\sim\mathcal D}
\left[
\left(
Q_{\theta_i}(s,a)-y
\right)^2
\right],
i\in\{1,2\}.
\label{eqq}
\end{equation}
Clipped double-Q estimation limits critic overestimation, while $\lambda$
scales the local correction.

\begin{algorithm}[htb]
\caption{Practical CSDG}
\label{alg:csdg}
\begin{algorithmic}[1]
  \State Initialize $\pi_\phi$, $Q_{\theta_1},Q_{\theta_2}$, $V_\psi$,
  and the actor/critic target networks
  \For{each gradient step}
    \State Sample $(s,a,r,s',d)\sim\mathcal D$
    \State Update $V_\psi$ using Eq.~\eqref{eqv}
    \State Generate $\tilde a_{\mathrm{In}},\tilde a_{\mathrm{OOD}}$ and
    compute $\widetilde Q_{\theta'}$ and $y$ using
    Eqs.~\eqref{eq:smooth_target}--\eqref{eq:critic_target}
    \State Update both critics using Eq.~\eqref{eqq}
    \If{delayed update}
      \State Update $\pi_\phi$ and all target networks
    \EndIf
  \EndFor
\end{algorithmic}
\end{algorithm}
\paragraph{Practical noise injection.}
Asymmetric bounded noise sets two action radii around the target action. Let
$C_k(s')=\operatorname{Conv}(\mathcal A_{\mathcal D,k}(s'))$ and
$\rho_k(s')=\operatorname{dist}(\pi_{\phi'}(s'),C_k(s'))$. For
$\|\eta_j\|_2\le\delta_j$, the perturbed candidate satisfies
\[
\operatorname{dist}(\tilde a_j,C_k(s'))
\le
\rho_k(s')+\delta_j.
\]
Thus, $\delta_{\mathrm{In}}<\delta_{\mathrm{OOD}}$ gives a tighter locality
bound for the in-sample-oriented candidate and a wider search region for the
OOD-oriented candidate. Appendix~\ref{PFNI} relates these perturbation radii to
the anchoring radii used in the analysis.

\paragraph{Overall algorithm.}
Algorithm~\ref{alg:csdg} summarizes the complete training procedure.
% \section{Experiments}
% In this section, we conduct several experiments to justify the validity of the proposed method CSDG. Experimental details and extended results are provided in Appendices D and E, respectively.

\section{Experiments}
We evaluate CSDG on the D4RL benchmark and then examine its main design choices
through ablations. Experimental details and extended results are provided in
Appendices~\ref{appd} and~\ref{appe}, respectively.
\subsection{D4RL Benchmark Results}

We compare CSDG with three groups of baselines on 21 D4RL
datasets~\citep{fu2020d4rl}: constraint-based and bootstrapping methods
(BCQ~\citep{fujimoto2019off},
BEAR~\citep{kumar2019stabilizing}, AWAC~\citep{nair2020awac},
TD3BC~\citep{fujimoto2021minimalist}, and
CQL~\citep{kumar2020conservative}), in-sample methods
(BC~\citep{pomerleau1988alvinn}, IQL~\citep{kostrikov2021offline}, and
$\mathcal{X}$QL~\citep{garg2023extreme}), and recent approaches including
CPI~\citep{cpi1}, C4~\citep{c4}, and
UNIQ~\citep{upadhyay2026uniq}. As shown in Table~\ref{table2}, CSDG achieves the
highest aggregate score in both benchmark groups, with 1199.4 on Gym--MuJoCo
and 445.8 on AntMaze. For a more detailed comparison of recent methods, please refer to Appendices~B and~C.

\subsection{Performance Improvement over In-sample Learning Approaches}

CSDG consistently improves both tested in-sample backbones across all D4RL v2
locomotion tasks. We integrate CSDG with $\mathcal{X}$QL~\citep{garg2023extreme}
and IQL~\citep{kostrikov2021offline}, with the task-level and aggregate results
reported in Table~\ref{table3}. Across the medium, medium-replay, and
medium-expert datasets, the total score increases from 725.3 to 779.3 for
$\mathcal{X}$QL and from 692.4 to 756.9 for IQL. The improvements across
different dataset qualities indicate that the two-scale target is not tied to
a particular in-sample learner. Instead, it incorporates locally generalized
OOD values while retaining the in-sample reference path.

\begin{table}[htp]  % 放在栏顶部，避免乱飘
\small
\centering
\caption{Average scores over five seeds when CSDG is combined with different in-sample methods.}
\label{table3}
\begin{tabular}{lrr}
\toprule
Dataset-v2 & $\mathcal{X}$QL (+CSDG) & IQL (+CSDG) \\
\midrule
halfcheetah-m & 47.7 $\rightarrow$ \textbf{51.3} & 47.4 $\rightarrow$ \textbf{48.3} \\
hopper-m & 71.1 $\rightarrow$ \textbf{94.6} & 66.2 $\rightarrow$ \textbf{76.3} \\
walker2d-m & 81.5 $\rightarrow$ \textbf{84.7} & 78.3 $\rightarrow$ \textbf{85.1} \\
halfcheetah-m-r & 44.8 $\rightarrow$ \textbf{48.3} & 44.2 $\rightarrow$ \textbf{45.1} \\
hopper-m-r & 97.3 $\rightarrow$ \textbf{101.6} & 94.7 $\rightarrow$ \textbf{100.6} \\
walker2d-m-r & 75.9 $\rightarrow$ \textbf{83.8} & 73.8 $\rightarrow$ \textbf{88.5} \\
halfcheetah-m-e & 89.8 $\rightarrow$ \textbf{94.7} & 86.7 $\rightarrow$ \textbf{91.2} \\
hopper-m-e & 107.1 $\rightarrow$ \textbf{109.6} & 91.5 $\rightarrow$ \textbf{109.4} \\
walker2d-m-e & 110.1 $\rightarrow$ \textbf{110.7} & 109.6 $\rightarrow$ \textbf{112.4} \\
\midrule
total & 725.3 $\rightarrow$ \textbf{779.3} & 692.4 $\rightarrow$ \textbf{756.9} \\
\bottomrule
\end{tabular}
\end{table}
\begin{table*}[t]
\small
\centering
\caption{Normalized scores of CSDG under different Gaussian noise scales and
clipping values on Gym locomotion and AntMaze tasks, averaged over five seeds.
Best scores are shown in bold.}
\label{tab:sqog_noise_clip}
\begin{tabular}{lcccc}
\toprule
\textbf{Dataset} &
\small scale{=}0.2, clip{=}0.3 &
\small scale{=}0.6, clip{=}0.5 &
\small scale{=}1.0, clip{=}0.7 &
\small scale{=}2.0, clip{=}1.0 \\
\midrule
halfcheetah-medium & \textbf{55.8\,$\pm$\,0.5} & 55.3\,$\pm$\,0.4 & 54.1\,$\pm$\,0.4 & 53.8\,$\pm$\,0.9 \\
hopper-medium                &  100.7\,$\pm$\,0.6 & \textbf{101.2\,$\pm$\,1.6} & 99.5\,$\pm$\,2.6 & 94.5\,$\pm$\,4.5 \\
walker2d-medium              &  92.1\,$\pm$\,2.3 &  \textbf{93.1\,$\pm$\,1.8} & 90.2\,$\pm$\,0.5 & 87.2\,$\pm$\,0.8 \\
halfcheetah-medium-expert   &  92.2\,$\pm$\,1.9 & \textbf{94.1\,$\pm$\,1.7} & 91.3\,$\pm$\,1.2 & 88.2\,$\pm$\,0.8 \\
hopper-medium-expert        & 87.4\,$\pm$\,3.8 & \textbf{111.3\,$\pm$\,1.6} & 108.3\,$\pm$\,1.9 & 74.1\,$\pm$\,5.8\\
walker2d-medium-expert      & 111.2\,$\pm$\,0.3 & \textbf{114.8\,$\pm$\,0.8} & 111.2\,$\pm$\,1.2 & 110.3\,$\pm$\,2.1 \\
\midrule
\textit{MuJoCo Average}      & 89.9 & \textbf{94.9} & 92.4 & 84.7 \\
\midrule
antmaze-large-play                  & 35.7\,$\pm$\,3.8 & \textbf{56.1\,$\pm$\,4.3} & 37.3\,$\pm$\,4.3 & 32.8\,$\pm$\,4.3 \\
antmaze-large-diverse                   & 56.1\,$\pm$\,3.2 & \textbf{61.2\,$\pm$\,3.7} & 44.3\,$\pm$\,3.1 & 47.4\,$\pm$\,3.0 \\
\midrule
\textit{AntMaze Average}      & 45.9 & \textbf{58.6} & 40.8 & 40.1 \\
\bottomrule
\end{tabular}
\end{table*}
\begin{figure*}[tb]
  \centering
  \includegraphics[width=\textwidth]{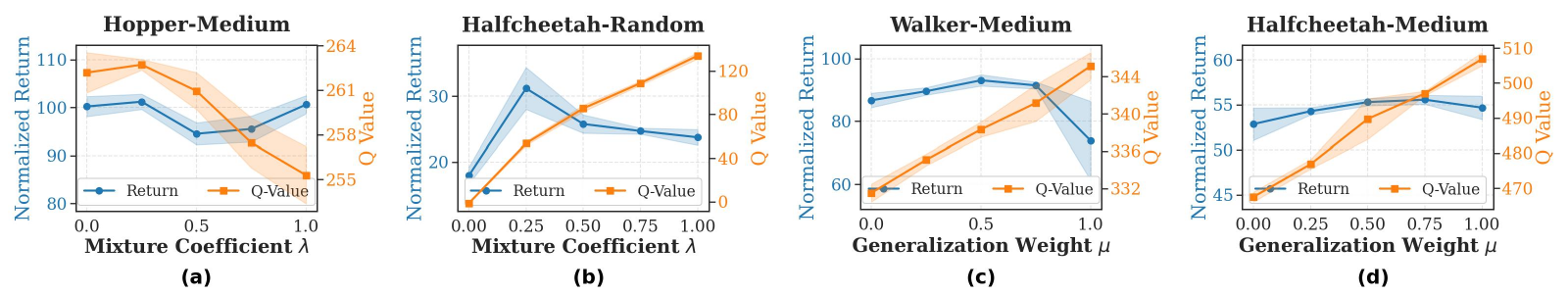}
  \caption{Performance and Q-value estimates of CSDG under different values of
the propagation weight $\lambda$ and OOD generalization weight $\mu$, averaged
over five random seeds.}
  \label{fig:ablation-mu-lambda1}
\end{figure*}
\subsection{Ablation Study}

\paragraph{Mixture coefficient $\lambda$.}
Intermediate values of $\lambda$ perform best. With $\mu=0.5$, we vary
$\lambda$ from 0 to 1 in Figures~\ref{fig:ablation-mu-lambda1}(a)
and~\ref{fig:ablation-mu-lambda1}(b). Small values underuse the local correction,
whereas large values introduce excessive smoothing. The middle range retains
useful generalization without either extreme.

\paragraph{OOD action generalization weight $\mu$.}
A moderate value of $\mu$ also gives the strongest results. With
$\lambda=0.25$, Figures~\ref{fig:ablation-mu-lambda1}(c)
and~\ref{fig:ablation-mu-lambda1}(d) sweep $\mu$ over $[0,1]$. Smaller values
place more weight on the expanded candidate and produce higher-variance
estimates, while larger values overemphasize the local candidate and constrain
the Q-function.

\paragraph{Additional ablations and sanity checks.}
The OOD-noise sweep shows that moderate perturbations (scale $=0.6$, clip $=0.5$) yield the strongest and most consistent performance, ranking best on seven of eight tasks and attaining the highest averages on both MuJoCo (94.9) and AntMaze (58.6); only halfcheetah-medium favors the smaller perturbation. Performance degrades under larger noise, especially on AntMaze, indicating that local OOD generalization benefits from a bounded perturbation radius. CSDG further maintains the lowest absolute Q-value bias throughout training. A full analysis is provided in Appendix~\ref{jianquanx}.

\section{Related Work}

Distribution shift makes OOD value estimation unreliable in offline RL.
Policy-constrained and pessimistic methods address this problem by restricting
policy deviation or suppressing uncertain OOD values
~\citep{kumar2020conservative,zhou2021plas}. In-sample methods, including
IQL~\citep{kostrikov2021offline} and $\mathcal{X}$QL~\citep{garg2023extreme}, take
a different route and avoid explicit maximization over OOD actions.

Several methods retain some form of controlled generalization. MCQ
~\citep{lyu2022mildly} combines an auxiliary generative model with mildly
conservative value learning. DOGE~\citep{li2022data} and SQOG
~\citep{yao2025offline} use dataset geometry to guide policy optimization and
Q-value control, respectively, while DMG~\citep{DMG} exploits mild local
generalization and limits its influence during bootstrapping. More recent work
uses clustered TD updates in C4~\citep{c4}, flow-based behavior modeling in
FAC~\citep{chae2026flow}, and uncertainty-adaptive in-sample learning in
UNIQ~\citep{upadhyay2026uniq}.

CSDG draws its CHN geometry from SQOG and shares DMG's motivation of controlling
generalized values during bootstrapping. The distinction lies in the target
construction: CSDG forms a two-scale smoothed target, measures it against an
expectile in-sample target, and inserts their difference as a local correction
in each Bellman update. In practice, asymmetric noise approximates the local and
expanded CHN candidates, while $\lambda$ scales the resulting correction.

\section{Conclusion and Limitations}
We presented CSDG, which combines a two-scale smoothed Q-value target with an
expectile in-sample target through a weighted local correction. The analysis
connects the anchor-based branch discrepancy to one-step, iterative, and
fixed-point behavior, and further gives a conditional performance comparison
for the induced policies. On 21 D4RL datasets, CSDG achieves the highest
aggregate scores on both Gym--MuJoCo and AntMaze, reaching 1199.4 and 445.8,
respectively. Moreover, it consistently improves both $\mathcal{X}$QL and IQL
across all evaluated D4RL v2 locomotion tasks, indicating that the two-scale
target generalizes across different in-sample backbones.

The current implementation uses task-specific noise scales and mixture
coefficients, and it does not track the effective anchoring geometry during
training. The ablations provide practical settings for the tested domains,
while adaptive tuning, geometry-aware diagnostics, and higher-dimensional
action spaces remain useful directions for future work.
\newpage
\bibliographystyle{iclr2027_conference}
\bibliography{iclr2027_conference}

\clearpage
\appendix
\setcounter{secnumdepth}{2}

\begin{center}
{\LARGE\bfseries Appendix}
\end{center}
\vspace{0.5em}

\section{Experimental Details}
\label{appd}

\subsection{Experimental Details in Offline Experiments}

We follow the standard D4RL evaluation protocol~\citep{fu2020d4rl}. For Gym
locomotion, each evaluation uses 10 trajectories, whereas each AntMaze
evaluation uses 100 trajectories. The main benchmark results are averaged over
the final 10 evaluations and 10 random seeds. The backbone comparison and
ablation studies use five random seeds, as stated in their captions. We report
the normalized D4RL score
\[
100\times
\frac{
\text{learned-policy return}-\text{random-policy return}
}{
\text{expert-policy return}-\text{random-policy return}
}.
\]

\begin{table}[htb]
\small
\centering
\caption{Hyperparameters used for CSDG.}
\label{tab:app_hyperparameters}
\begin{tabular}{lll}
\toprule
 & \textbf{Hyperparameter} & \textbf{Value} \\
\midrule
\multirow{11}{*}{CSDG}
  & Optimizer & Adam \\
  & Critic learning rate & $3\times10^{-4}$ \\
  & Actor learning rate & $3\times10^{-4}$ with cosine schedule \\
  & Batch size & 256 \\
  & Discount factor & 0.99 \\
  & Number of gradient steps & $10^6$ \\
  & Target update rate & 0.005 \\
  & Correction weight $\lambda$ & 0.25 \\
  & Candidate mixture weight $\mu$ & 0.5 \\
  & Behavior-cloning coefficient $\nu$ & 0.1 or 10 for Gym locomotion \\
  & & 0.5 for AntMaze \\
\midrule
\multirow{4}{*}{IQL specific}
  & Expectile $\tau$ & 0.7 for Gym locomotion \\
  & & 0.9 for AntMaze \\
  & Inverse temperature $\alpha$ & 3.0 for Gym locomotion \\
  & & 10.0 for AntMaze \\
\midrule
$\mathcal X$QL specific
  & Temperature $\alpha$ & 5.0 \\
\midrule
\multirow{2}{*}{Architecture}
  & Actor & input--256--256--output \\
  & Critic and value networks & input--256--256--1 \\
\bottomrule
\end{tabular}
\end{table}

The IQL implementation uses $\tau=0.7$ and $\alpha=3$ on Gym locomotion and
$\tau=0.9$ and $\alpha=10$ on AntMaze, following the original
configuration~\citep{kostrikov2021offline}. For the $\mathcal X$QL backbone, we
use temperature $5.0$ on the medium, medium-replay, and medium-expert datasets,
following~\citet{garg2023extreme}. CSDG uses $\lambda=0.25$ and $\mu=0.5$
throughout the main comparison. The coefficient $\nu$ is $0.1$ on medium,
medium-replay, and random datasets, $10$ on expert and medium-expert datasets,
and $0.5$ on AntMaze. Table~\ref{tab:app_hyperparameters} summarizes the full
configuration.

\section{Extended Related Work and Discussions}

\noindent\textbf{Model-free offline RL.}
Model-free offline RL methods commonly control distribution shift by limiting
the policy or modifying the learned values. Representative approaches include
importance weighting~\citep{precup2001off,sutton2016emphatic,liu2019off,
nachum2019dualdice,gelada2019off}, explicit policy constraints
~\citep{kumar2019stabilizing,wu2019behavior,ghasemipour2021emaq,
fujimoto2021minimalist,fakoor2021continuous}, and latent-action
representations~\citep{zhou2021plas,ajay2020opal}. Other methods apply
conservative value objectives~\citep{kumar2020conservative,ma2021conservative},
uncertainty-aware updates~\citep{wu2021uncertainty,zanette2021provable,
bai2022pessimistic}, or in-sample learning
~\citep{wang2018exponentially,chen2020bail,kostrikov2021offline}. These
approaches differ in how much they rely on values outside the empirical action
support. CSDG retains an in-sample expectile reference and adds a bounded local
correction constructed from two perturbation branches.

\medskip
\noindent\textbf{Model-based offline RL.}
Model-based methods first learn a transition model and then optimize a policy
with synthetic rollouts. Existing methods use model uncertainty
~\citep{yu2020mopo,kidambi2020morel,diehl2021umbrella}, conservative value
learning~\citep{yu2021combo}, representation learning
~\citep{lee2021representation,rafailov2021offline}, behavior regularization
~\citep{matsushima2020deployment}, or sequence modeling
~\citep{chen2021decision,janner2021offline,meng2023offline}. Their performance
depends on both policy learning and model accuracy, and the learned model adds
an additional training component. CSDG is model-free and modifies only the
value target and actor objective.

\subsection{Discussion of Recent Local-Generalization Methods}
\label{morework}

MCQ~\citep{lyu2022mildly} uses a conditional generative model to produce OOD
actions and constructs a mildly conservative critic objective. The generated
actions provide a learned approximation to the behavior distribution, while
the conservative coefficient determines their contribution to value learning.

SQOG~\citep{yao2025offline} introduces CHN geometry and connects perturbed
actions with neighboring dataset values through an auxiliary smoothing loss.
Its locality control is therefore implemented as an additional critic
regularizer.

DMG~\citep{DMG} defines a mildly generalized policy and blends its value with an
in-sample value inside the Bellman target. The mixture coefficient directly
controls the amount of generalized information propagated by bootstrapping.

CSDG combines the CHN geometric reference with an expectile-referenced target.
Two bounded perturbation branches form the smoothed value, and their difference
from the in-sample expectile defines the correction. The correction weight
$\lambda$ then determines how much of this difference enters each Bellman
update. Table~\ref{tab:app_mechanistic_comparison} summarizes these distinctions.

\begin{table}[t]
\centering
\caption{Mechanistic comparison of MCQ, SQOG, DMG, and CSDG.}
\label{tab:app_mechanistic_comparison}
\resizebox{\textwidth}{!}{%
\begin{tabular}{lcccc}
\toprule
\textbf{Property} & \textbf{MCQ} & \textbf{SQOG} & \textbf{DMG} & \textbf{CSDG} \\
\midrule
Generalized-action source
& Conditional VAE & Bounded noise & Generalized policy & Two-scale bounded noise \\
Geometric reference
& Learned support & CHN & Local policy support & State-conditional CHN \\
Control location
& Critic objective & Auxiliary critic loss & Bellman target & Bellman target \\
In-sample reference
& Conservative target & Neighboring values & In-sample maximum & Expectile value \\
Additional behavior model
& Yes & No & No & No \\
Propagation control
& Conservative coefficient & Smoothing regularizer & Mixture coefficient & Correction weight $\lambda$ \\
\bottomrule
\end{tabular}%
}
\end{table}

The target construction also distinguishes CSDG from DMG. DMG evaluates
actions sampled from its generalized policy, whereas CSDG evaluates two
perturbation scales around the target action. Figure~\ref{fig:app_dmg_csdg}
shows how this difference affects the return and learned Q-values as
$\lambda$ increases.

\section{Additional Experimental Results}
\label{appe}

\subsection{Learning Curves of CSDG during Offline Training}

The learning curves remain stable across the evaluated Gym locomotion and
AntMaze tasks. Figures~\ref{fig:dataset1} and~\ref{fig:dataset2} report the
mean normalized score over 10 random seeds, with shaded regions showing one
standard deviation. The curves also show that the final-score averages in the
main paper are representative of the late-training behavior rather than an
isolated evaluation point.
We also compare the training time of CQL, IQL, MCQ, SQOG, and CSDG for one
million gradient steps. Figure~\ref{fig:app_accuracy}(b) reports the runtime on
the same hardware configuration. CSDG does not train an additional behavior
model and remains close to the in-sample implementation in computational cost.

\subsection{Comparisons with Local-Generalization Methods}

CSDG is competitive with MCQ, SQOG, and DMG on the 12 Gym tasks for which all
four methods report results. Table~\ref{tab:app_local_methods} shows that CSDG
obtains the highest aggregate score on this shared subset. The largest gains
occur on walker2d-medium-replay and the medium-expert tasks, while the random
datasets remain more variable.

\begin{table}[t]
\small
\centering
\caption{Normalized scores on the 12-task Gym subset shared by MCQ, SQOG, DMG,
and CSDG. CSDG results are averaged over 10 random seeds.}
\label{tab:app_local_methods}
\begin{tabular}{lrrrr}
\toprule
Dataset-v2 & MCQ & SQOG & DMG & CSDG (Ours) \\
\midrule
halfcheetah-m   & 58.3 & \textbf{59.2} & 54.9 & 55.3$\pm$0.4 \\
hopper-m        & 73.6 & 100.8 & 100.6 & \textbf{101.2$\pm$1.6} \\
walker2d-m      & 88.4 & 82.9 & 92.4 & \textbf{93.1$\pm$1.8} \\
halfcheetah-m-r & \textbf{51.5} & 46.4 & 51.4 & \textbf{51.5$\pm$0.2} \\
hopper-m-r      & 99.6 & 100.9 & 101.9 & \textbf{102.5$\pm$1.4} \\
walker2d-m-r    & 83.3 & 88.3 & 89.7 & \textbf{96.3$\pm$2.5} \\
halfcheetah-m-e & 85.4 & 92.6 & 91.1 & \textbf{94.1$\pm$1.7} \\
hopper-m-e      & 106.2 & 109.2 & 110.4 & \textbf{111.3$\pm$1.6} \\
walker2d-m-e    & 110.3 & 109.0 & 114.4 & \textbf{114.8$\pm$0.8} \\
halfcheetah-r   & 23.6 & 25.6 & 28.8 & \textbf{33.7$\pm$6.8} \\
hopper-r        & \textbf{31.0} & 15.6 & 20.4 & 8.2$\pm$1.1 \\
walker2d-r      & 10.3 & \textbf{17.7} & 4.8 & 13.6$\pm$1.9 \\
\midrule
Total & 821.5 & 848.2 & 860.8 & \textbf{875.6} \\
\bottomrule
\end{tabular}
\end{table}

\subsection{Additional Baseline Comparisons}
\label{ABC}

Table~\ref{tab:app_recent_baselines} extends the comparison to IAC, SPOT,
SCAS, STR, CPI, CPED, and ANQ. CSDG achieves the largest aggregate score on
both benchmark groups among the methods shown. The AntMaze results are
particularly consistent on the two large-maze tasks, where CSDG obtains 56.1
and 61.2.

\begin{table}[t]
\centering
\caption{Normalized scores for additional recent baselines on Gym locomotion
and AntMaze. CSDG results are averaged over 10 random seeds.}
\label{tab:app_recent_baselines}
\resizebox{\textwidth}{!}{%
\begin{tabular}{lrrrrrrrr}
\toprule
Dataset-v2 & IAC & SPOT & SCAS & STR & CPI & CPED & ANQ & CSDG (Ours) \\
\midrule
halfcheetah-m   & $51.6\pm0.3$ & $58.4\pm1.0$ & $46.6\pm0.2$ & $51.8\pm0.3$ & \textbf{$64.4\pm1.3$} & $61.8\pm1.6$ & $61.8\pm1.4$ & $55.3\pm0.4$ \\
hopper-m        & $74.6\pm11.5$ & $86.0\pm8.7$ & \textbf{$102.5\pm0.3$} & $101.3\pm0.4$ & $98.5\pm3.0$ & $100.1\pm2.8$ & $100.9\pm0.6$ & $101.2\pm1.6$ \\
walker2d-m      & $85.2\pm0.4$ & $86.4\pm2.7$ & $82.3\pm3.0$ & $85.9\pm1.1$ & $85.8\pm0.8$ & $90.2\pm1.7$ & $82.9\pm1.5$ & \textbf{$93.1\pm1.8$} \\
halfcheetah-m-r & $47.2\pm0.3$ & $52.2\pm1.2$ & $44.0\pm0.3$ & $47.5\pm0.2$ & $54.6\pm1.3$ & \textbf{$55.8\pm2.9$} & $55.5\pm1.4$ & $51.5\pm0.2$ \\
hopper-m-r      & \textbf{$103.2\pm1.0$} & $100.2\pm1.9$ & $101.6\pm1.0$ & $100.0\pm1.2$ & $101.7\pm1.6$ & $98.1\pm2.1$ & $101.5\pm2.7$ & $102.5\pm1.4$ \\
walker2d-m-r    & $93.2\pm1.8$ & $91.6\pm2.8$ & $78.1\pm4.5$ & $85.7\pm2.2$ & $91.8\pm2.9$ & $91.9\pm0.9$ & $92.7\pm3.8$ & \textbf{$96.3\pm2.5$} \\
halfcheetah-m-e & $92.9\pm0.7$ & $86.9\pm4.3$ & $91.7\pm2.7$ & \textbf{$94.9\pm1.6$} & $94.7\pm1.1$ & $85.4\pm10.9$ & $94.2\pm0.8$ & $94.1\pm1.7$ \\
hopper-m-e      & $109.3\pm4.0$ & $99.3\pm7.1$ & $109.7\pm3.5$ & \textbf{$111.9\pm0.6$} & $106.4\pm4.3$ & $95.3\pm13.5$ & $107.0\pm4.9$ & $111.3\pm1.6$ \\
walker2d-m-e    & $110.1\pm0.1$ & $112.0\pm0.5$ & $108.4\pm3.7$ & $110.2\pm0.1$ & $110.9\pm0.4$ & $113.0\pm1.4$ & $111.7\pm0.2$ & \textbf{$114.8\pm0.8$} \\
halfcheetah-e   & $94.5\pm0.5$ & -- & -- & $95.2\pm0.3$ & \textbf{$96.5\pm0.2$} & -- & $95.9\pm0.4$ & $95.6\pm1.4$ \\
hopper-e        & $110.6\pm1.9$ & -- & -- & $111.2\pm0.3$ & $112.2\pm0.5$ & -- & $111.4\pm2.5$ & \textbf{$113.3\pm1.6$} \\
walker2d-e      & $114.8\pm1.2$ & -- & -- & $110.1\pm0.1$ & $110.6\pm0.1$ & -- & $111.8\pm0.1$ & \textbf{$114.9\pm0.3$} \\
halfcheetah-r   & $20.9\pm1.2$ & -- & $12.2\pm3.2$ & $20.6\pm1.1$ & $29.7\pm1.1$ & -- & $24.9\pm1.0$ & \textbf{$33.7\pm6.8$} \\
hopper-r        & $31.3\pm0.3$ & -- & \textbf{$31.4\pm0.1$} & $31.3\pm0.3$ & $29.5\pm3.7$ & -- & $31.1\pm0.2$ & $8.2\pm1.1$ \\
walker2d-r      & $3.0\pm1.3$ & -- & $1.4\pm1.1$ & $4.7\pm3.8$ & $5.9\pm1.7$ & -- & $11.2\pm9.5$ & \textbf{$13.6\pm1.9$} \\
\midrule
Locomotion total & 1142.4 & -- & -- & 1162.2 & 1193.2 & -- & 1194.5 & \textbf{1199.4} \\
\midrule
antmaze-u   & $77.6\pm3.8$ & $93.5\pm2.4$ & $90.4\pm4.3$ & $93.6\pm4.0$ & \textbf{$98.8\pm1.1$} & $96.8\pm2.6$ & $96.0\pm1.6$ & $92.6\pm1.9$ \\
antmaze-u-d & $71.2\pm8.6$ & $40.7\pm5.1$ & $63.8\pm16.7$ & $77.4\pm7.2$ & \textbf{$88.6\pm5.7$} & $55.6\pm2.2$ & $80.2\pm1.8$ & $80.1\pm7.1$ \\
antmaze-m-p & $72.0\pm7.6$ & $74.7\pm4.6$ & $76.6\pm3.9$ & $82.6\pm5.4$ & $82.4\pm5.8$ & \textbf{$85.1\pm3.4$} & $76.2\pm3.3$ & $81.6\pm4.1$ \\
antmaze-m-d & $74.2\pm4.1$ & $79.1\pm5.6$ & $80.4\pm5.4$ & \textbf{$87.0\pm4.2$} & $80.4\pm8.9$ & $72.1\pm2.9$ & $77.2\pm6.1$ & $74.2\pm4.7$ \\
antmaze-l-p & \textbf{$57.0\pm7.4$} & $35.3\pm8.3$ & $49.0\pm4.0$ & $42.8\pm8.7$ & $20.6\pm16.3$ & $34.9\pm5.3$ & $56.2\pm4.9$ & $56.1\pm4.3$ \\
antmaze-l-d & $47.2\pm9.4$ & $36.3\pm13.7$ & $50.6\pm7.2$ & $46.8\pm7.6$ & $45.2\pm6.9$ & $32.3\pm7.4$ & $55.8\pm4.0$ & \textbf{$61.2\pm3.7$} \\
\midrule
AntMaze total & 399.2 & 359.6 & 410.8 & 430.2 & 416.0 & 376.8 & 441.6 & \textbf{445.8} \\
\bottomrule
\end{tabular}%
}
\end{table}

\subsection{Comparison with DMG}

CSDG and DMG both control the contribution of generalized values, but they
construct the generalized branch differently. DMG evaluates actions from a
generalized policy, whereas CSDG averages two bounded perturbation branches
around the target action. Figure~\ref{fig:app_dmg_csdg} compares their return
and value estimates as $\lambda$ varies. The two methods are similar for small
correction weights, while their behavior separates as the generalized branch
receives more weight.

\begin{figure}[t]
\small
  \centering
  \begin{subfigure}[b]{0.48\textwidth}
    \centering
    \includegraphics[width=\linewidth]{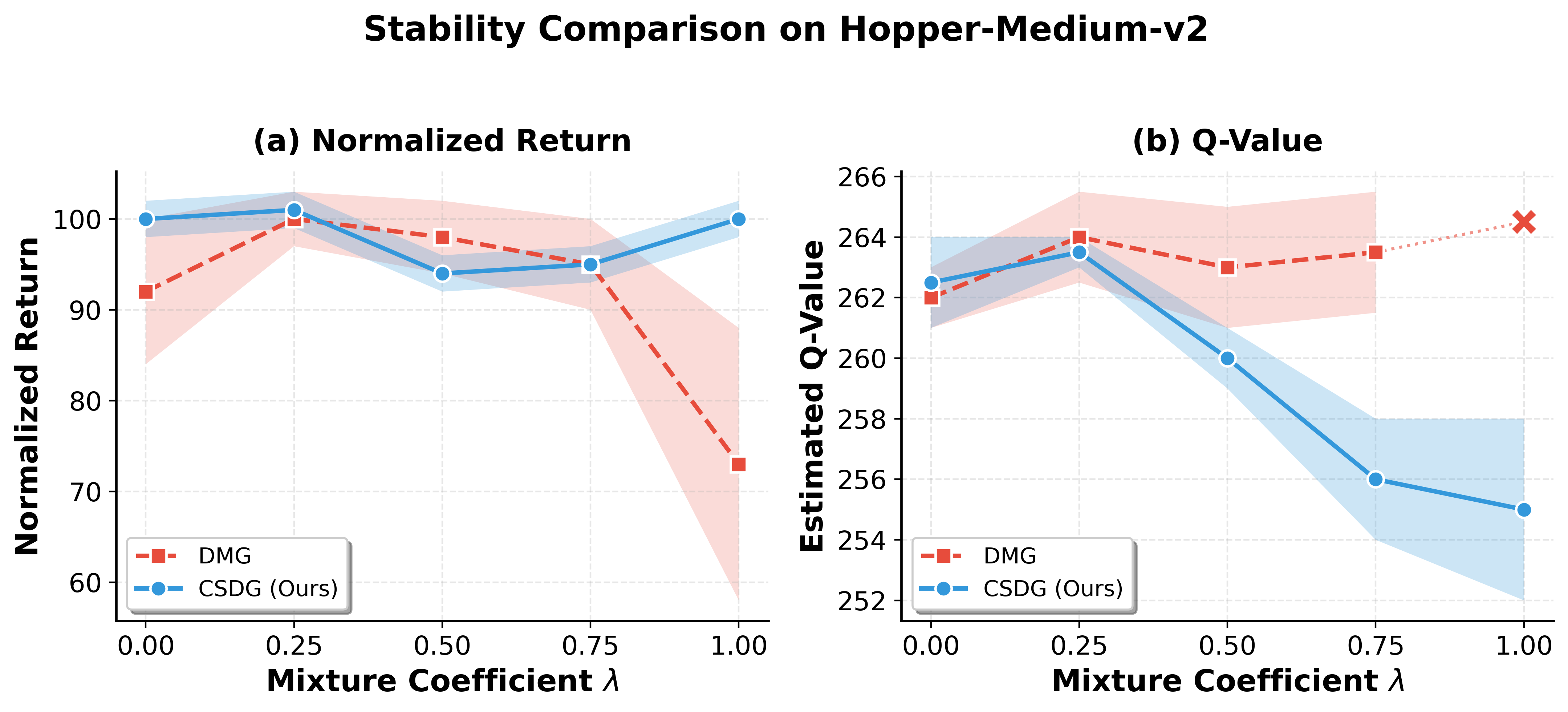}
  \end{subfigure}
  \hfill
  \begin{subfigure}[b]{0.48\textwidth}
    \centering
    \includegraphics[width=\linewidth]{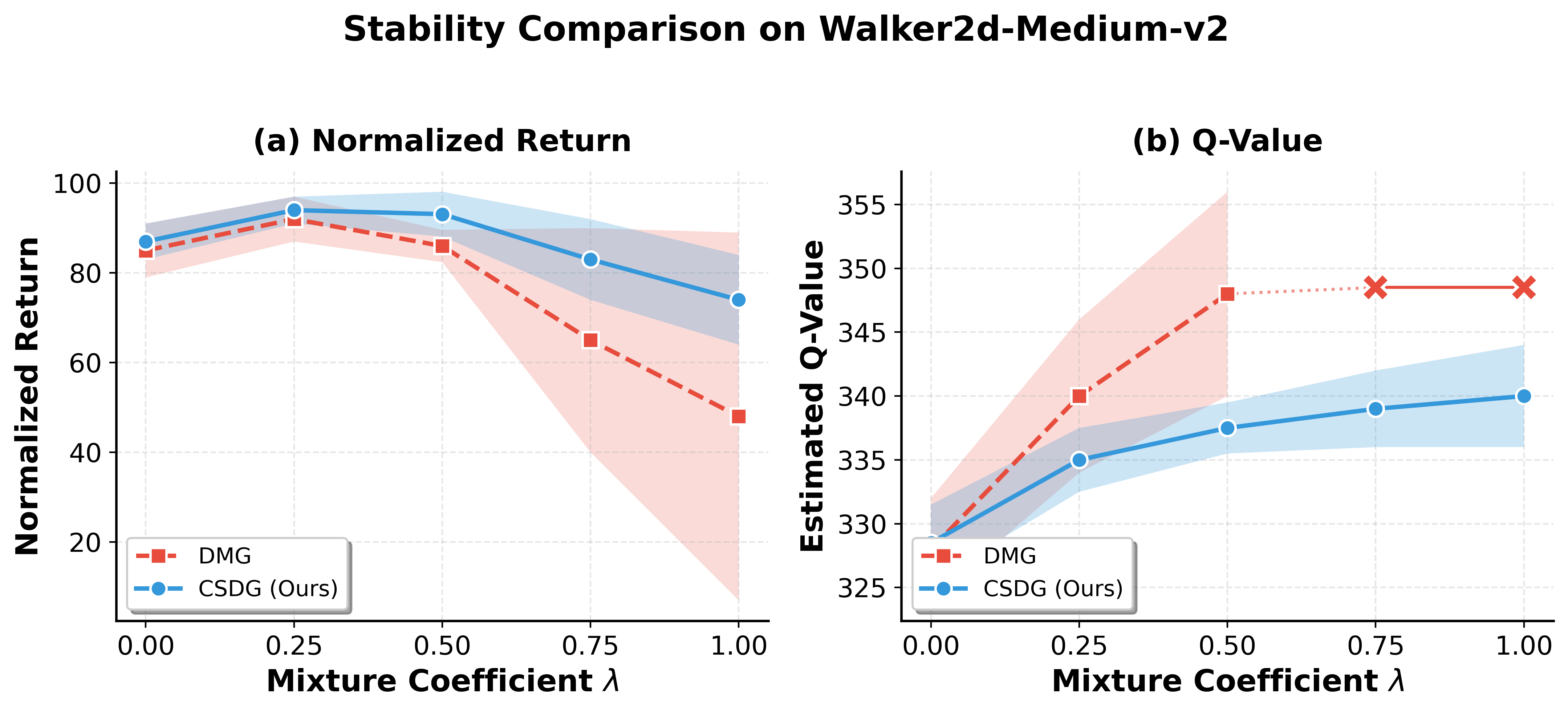}
  \end{subfigure}
  \caption{Return and Q-value estimates of DMG and CSDG under different values
  of the generalized-branch weight $\lambda$.}
  \label{fig:app_dmg_csdg}
\end{figure}

\subsection{Additional Hyperparameter Sensitivity and Ablation Studies}
\label{jianquanx}

\paragraph{Correction and candidate-mixture weights.}

Intermediate values of both weights produce the strongest results. We vary
$\lambda$ and $\mu$ over $\{0,0.25,0.5,0.75,1\}$ while fixing the other at its
default value. Figure~\ref{fig:app_lambda_mu} shows that small $\lambda$ values
underuse the smoothed correction, whereas large values place most of the target
weight on the generalized branch. For $\mu$, smaller values emphasize the
OOD-oriented candidate and larger values emphasize the in-sample-oriented
candidate. The middle range balances the two branches across both tasks.

\begin{figure}[htb]
\small
  \centering
  \begin{tabular}{cccc}
    \includegraphics[width=0.23\textwidth]{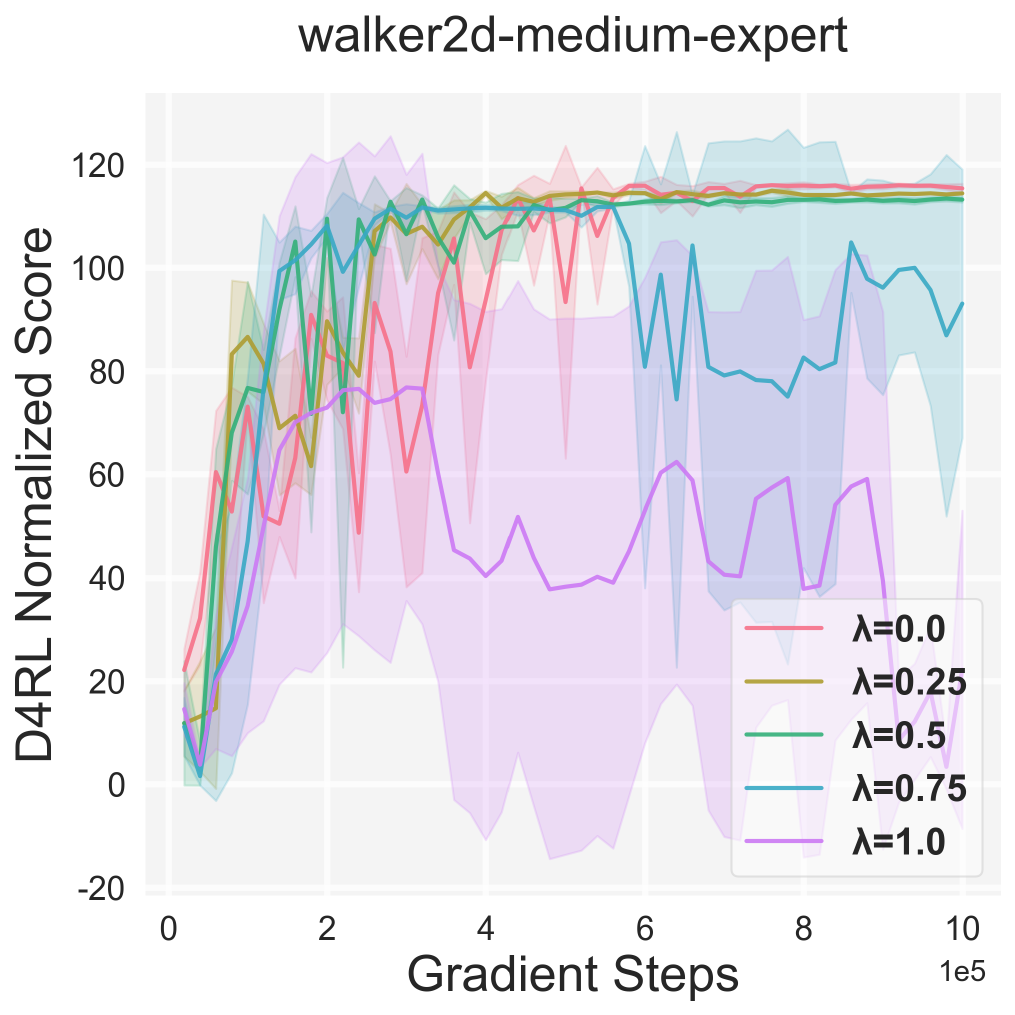} &
    \includegraphics[width=0.23\textwidth]{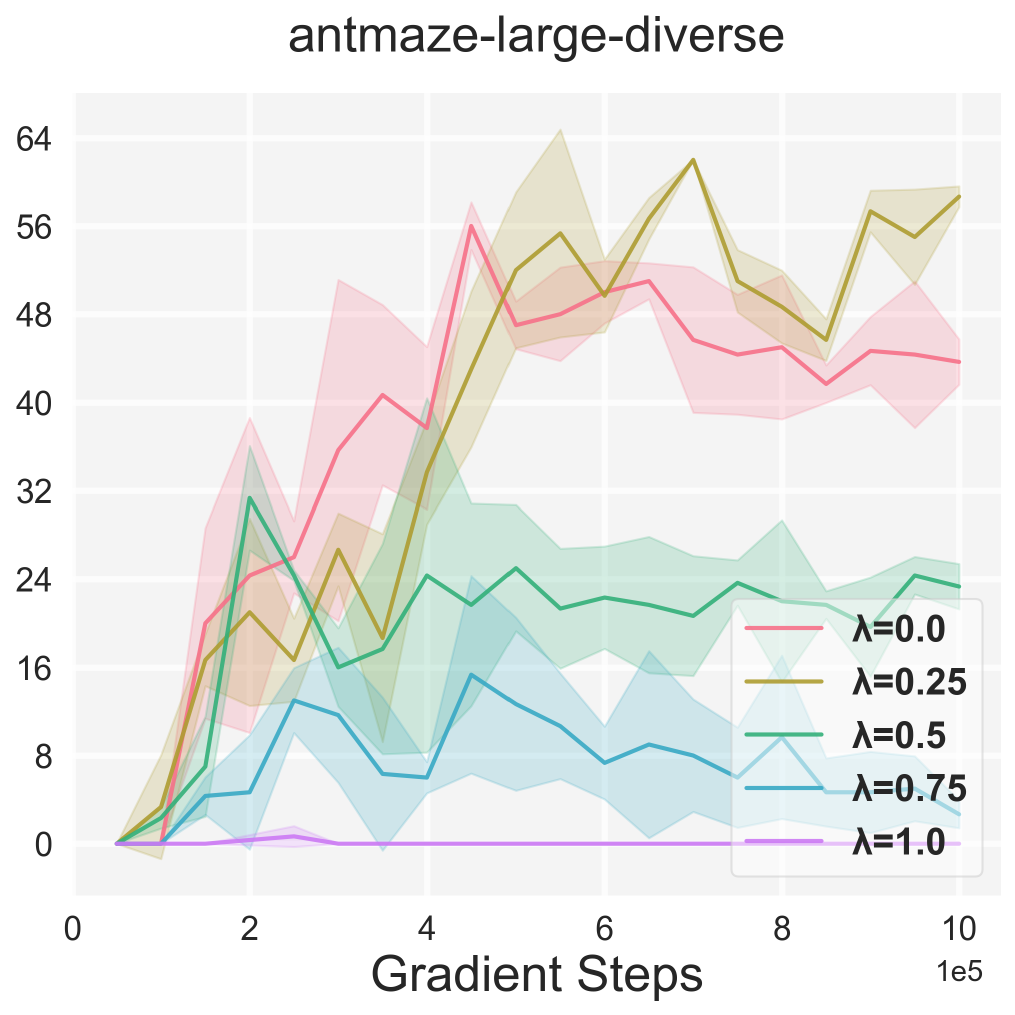} &
    \includegraphics[width=0.23\textwidth]{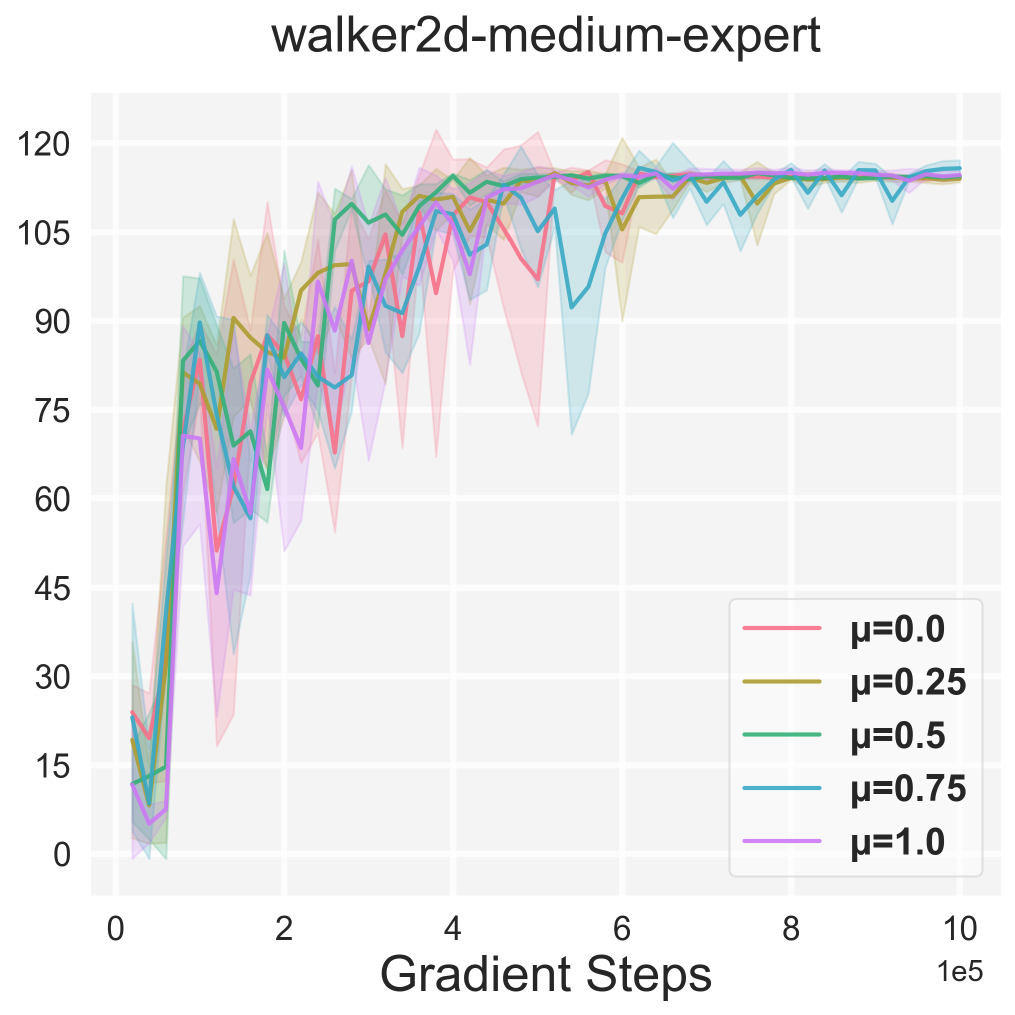} &
    \includegraphics[width=0.23\textwidth]{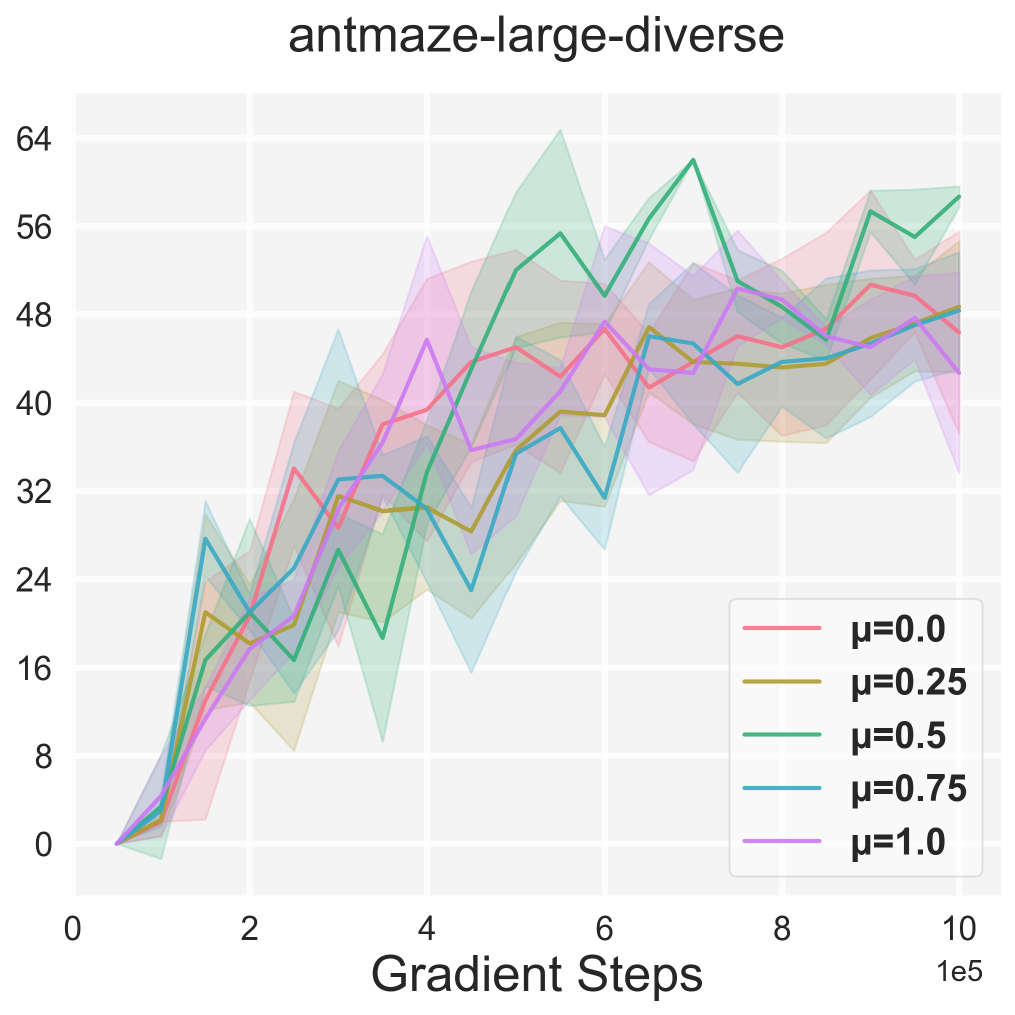} \\
    (a) & (b) & (c) & (d)
  \end{tabular}
  \caption{Sensitivity to the correction weight $\lambda$ (a--b) and the
  candidate mixture weight $\mu$ (c--d) on walker2d-medium-expert and
  antmaze-large-diverse. Results are averaged over five random seeds.}
  \label{fig:app_lambda_mu}
\end{figure}

\begin{figure}[htb]
\small
  \centering
  \includegraphics[width=0.4\textwidth]{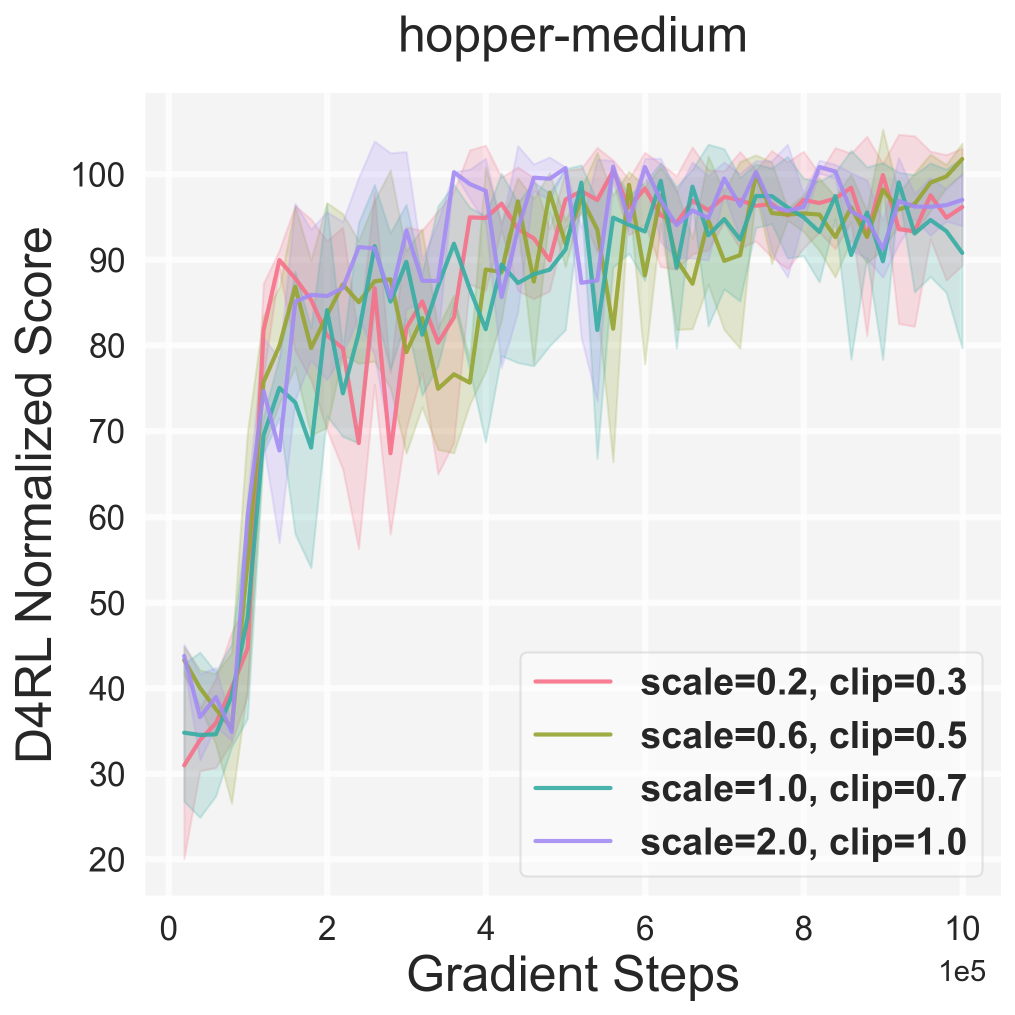}
  \includegraphics[width=0.4\textwidth]{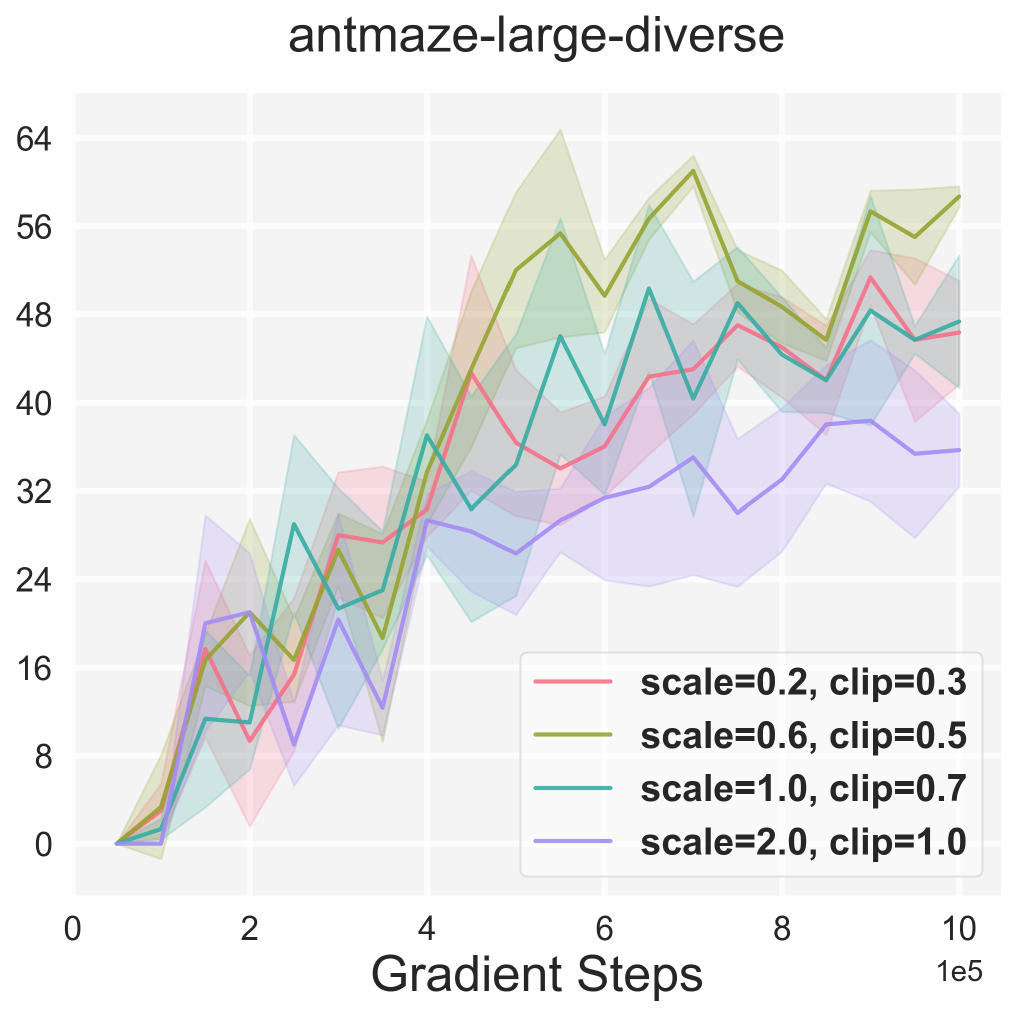}
  \caption{Performance under different Gaussian noise scales and clipping
  values on hopper-medium-v2 and antmaze-large-diverse-v2.}
  \label{fig:app_noise}
\end{figure}

\subsection{Additional Sanity Checks: Accuracy}

\paragraph{Q-value estimation accuracy and runtime.}

CSDG gives the smallest absolute Q-value bias among the four evaluated methods
on hopper-medium-v2. We estimate reference returns with Monte Carlo rollouts
and compare them with the learned Q-values in Figure~\ref{fig:app_accuracy}(a).
The corresponding runtime comparison in Figure~\ref{fig:app_accuracy}(b) shows
that CSDG remains close to the in-sample baseline and avoids the additional
behavior-model training used by MCQ.
\begin{figure}[htb]
\small
  \centering
  \begin{tabular}{cc}
    \includegraphics[width=0.45\textwidth]{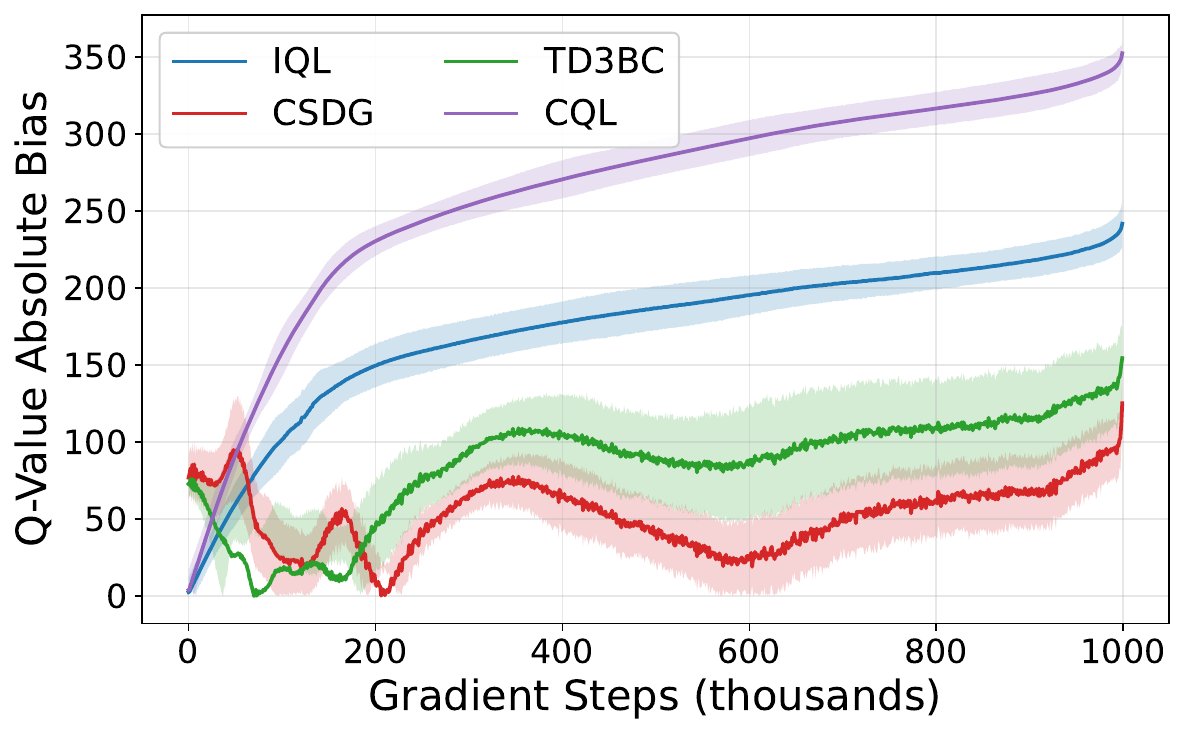} &
    \includegraphics[width=0.48\textwidth]{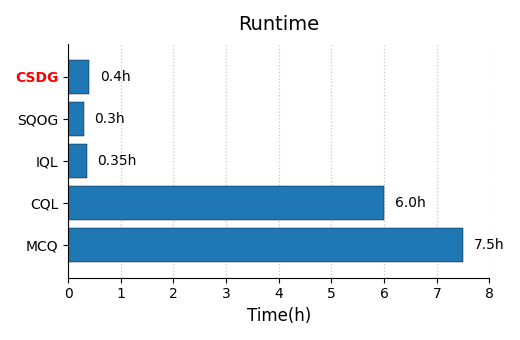} \\
    (a) & (b)
  \end{tabular}
  \caption{(a) Absolute Q-value bias on hopper-medium-v2. Shaded regions show
  one standard deviation. (b) Average training time on Gym locomotion using a
  GeForce RTX 4090.}
  \label{fig:app_accuracy}
\end{figure}
\begin{figure}[htb]
\small
  \centering
  \begin{subfigure}[b]{\linewidth}
    \centering
    \includegraphics[width=\linewidth]{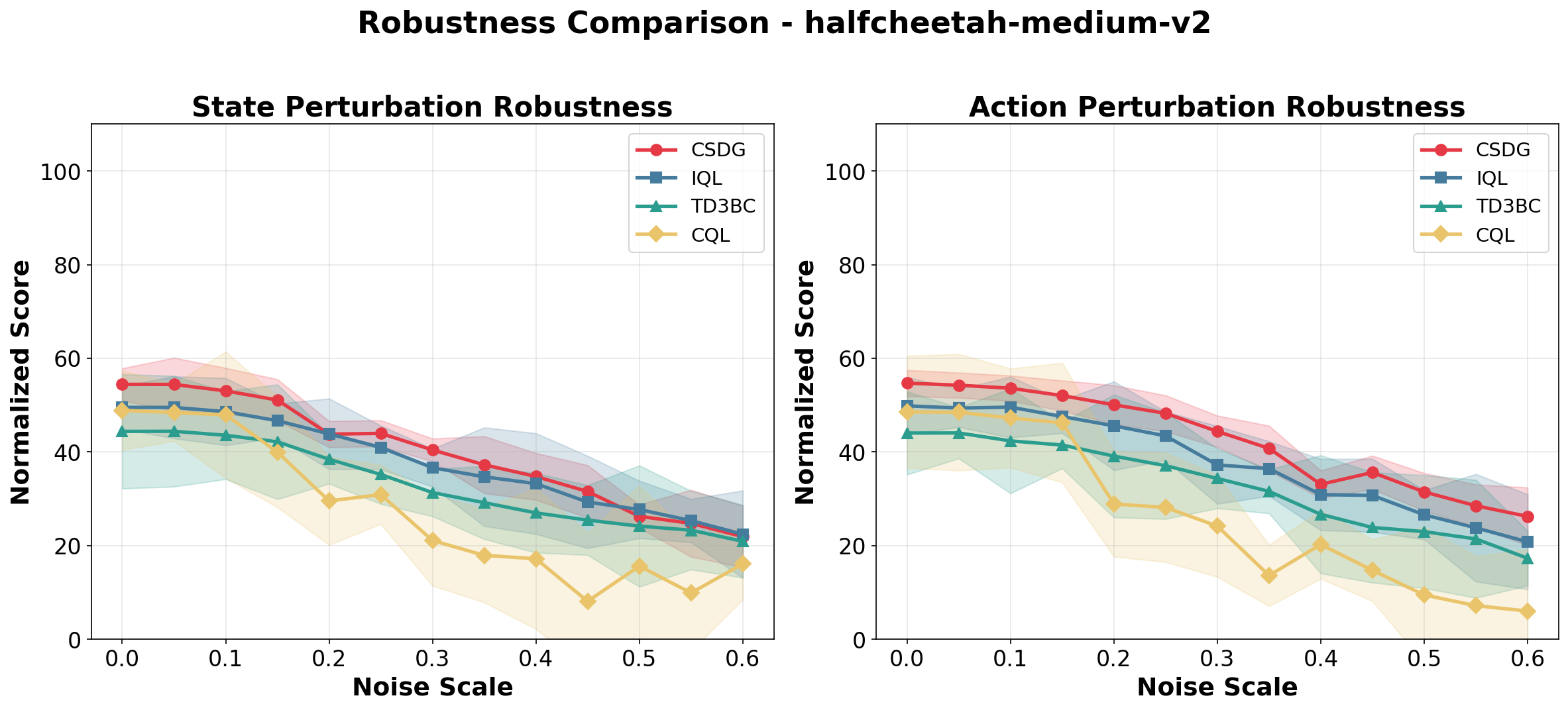}
  \end{subfigure}
  \begin{subfigure}[b]{\linewidth}
    \centering
    \includegraphics[width=\linewidth]{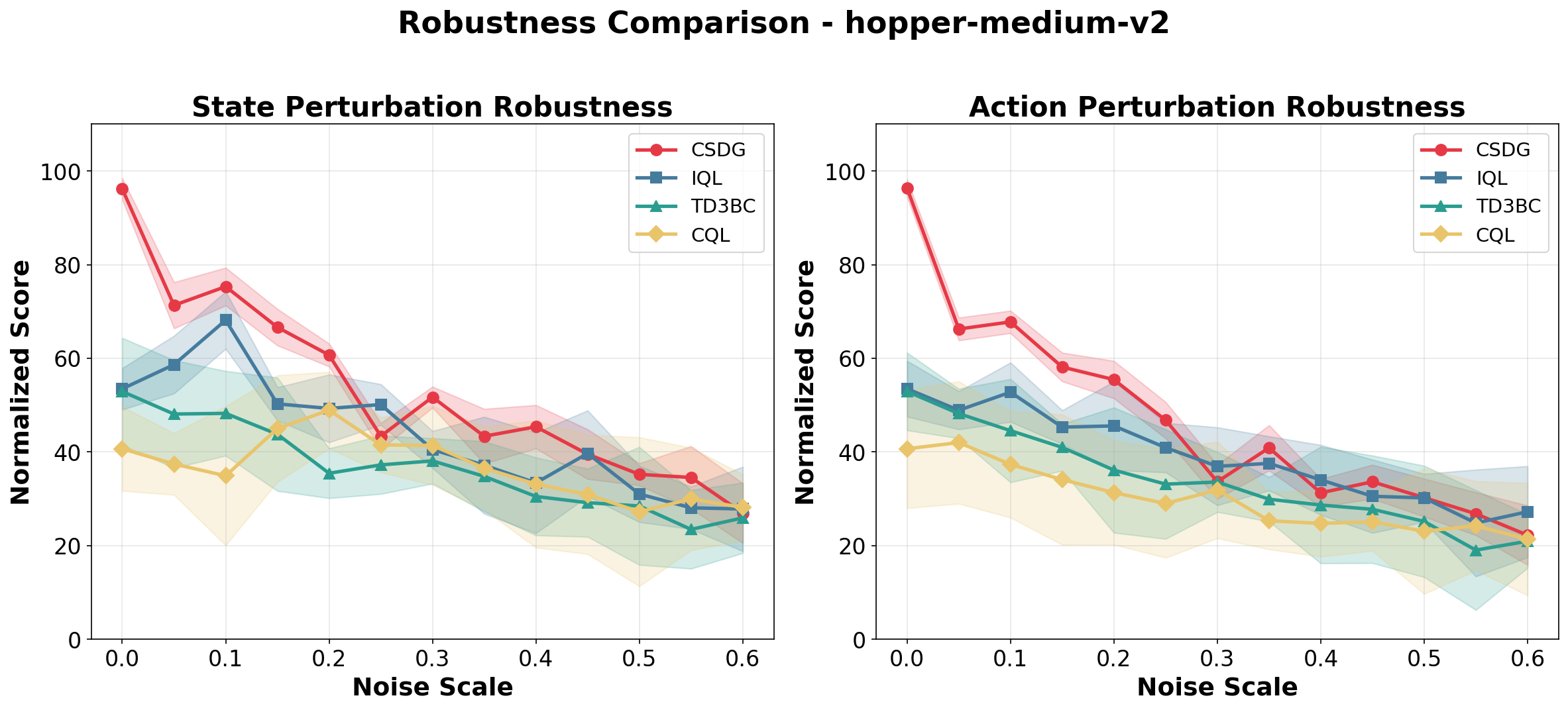}
  \end{subfigure}
  \begin{subfigure}[b]{\linewidth}
    \centering
    \includegraphics[width=\linewidth]{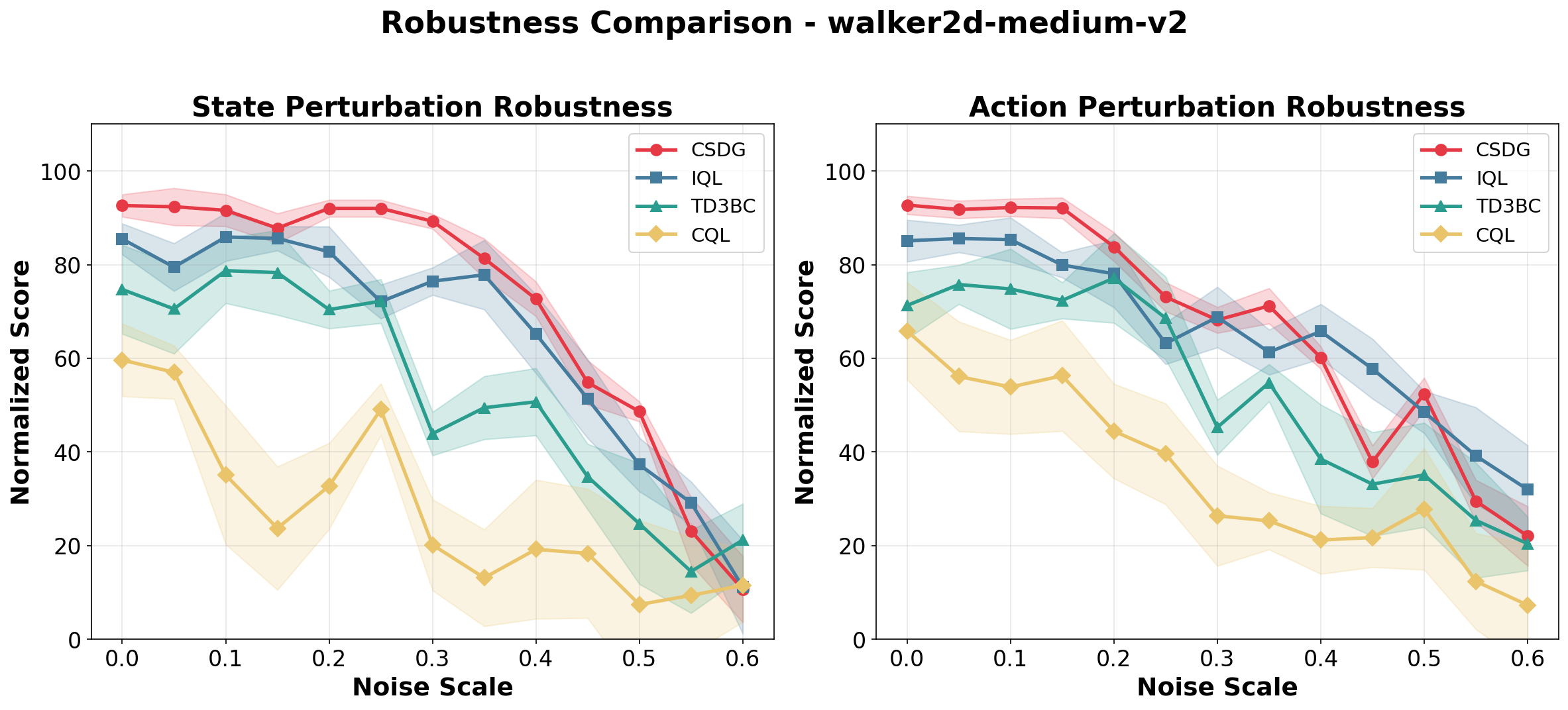}
  \end{subfigure}
  \caption{Normalized returns under observation and action perturbations on
  three Gym medium datasets. The policies are evaluated without retraining.}
  \label{fig:app_robustness}
\end{figure}
\begin{figure}[htb]
  \centering
  \begin{tabular}{ccccc}
    \includegraphics[width=0.18\textwidth]{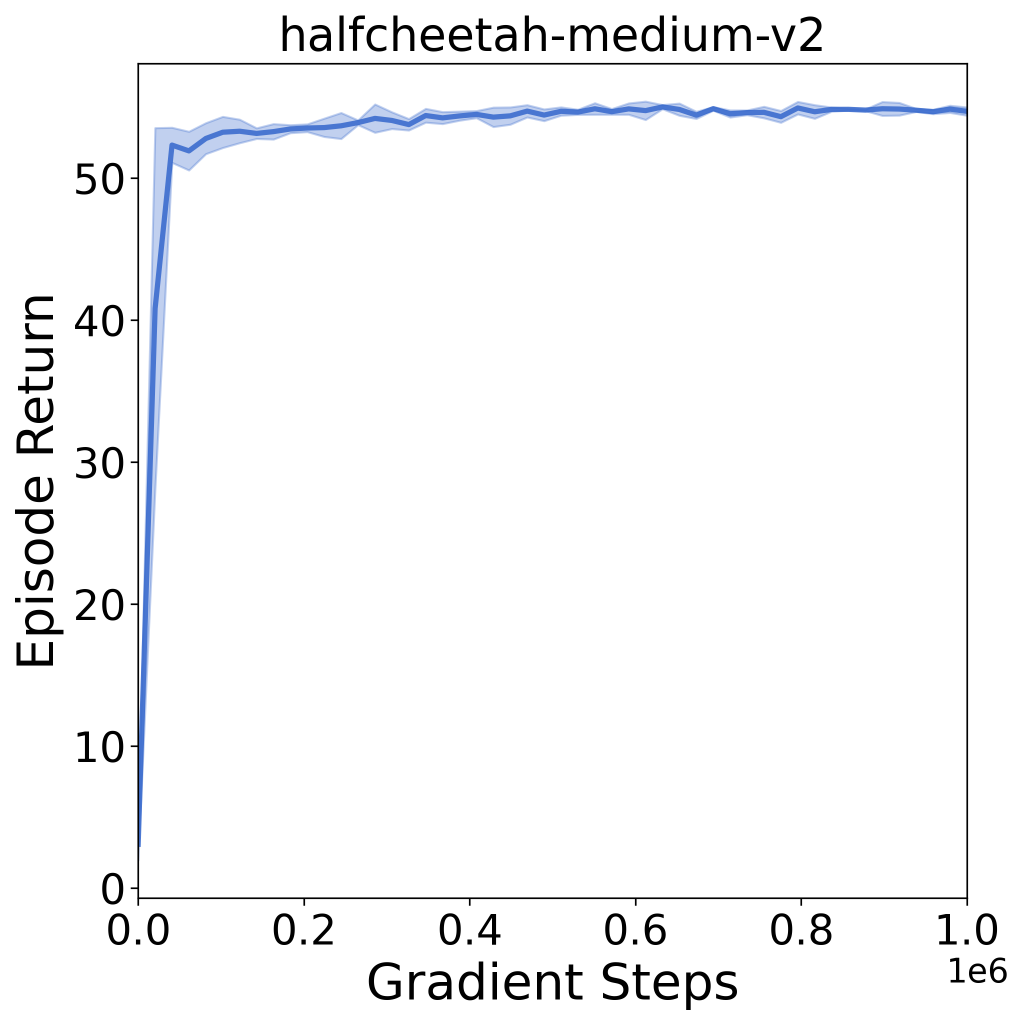} &
    \includegraphics[width=0.18\textwidth]{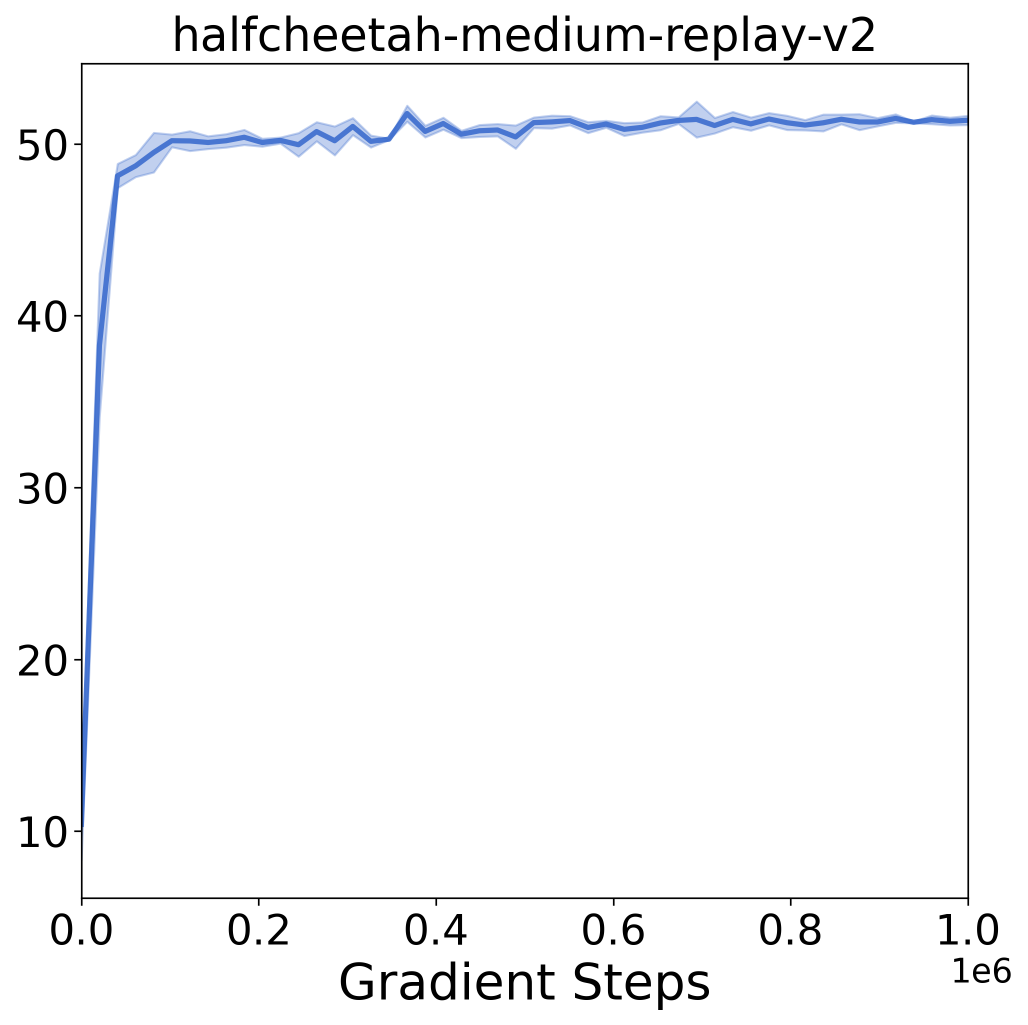} &
    \includegraphics[width=0.18\textwidth]{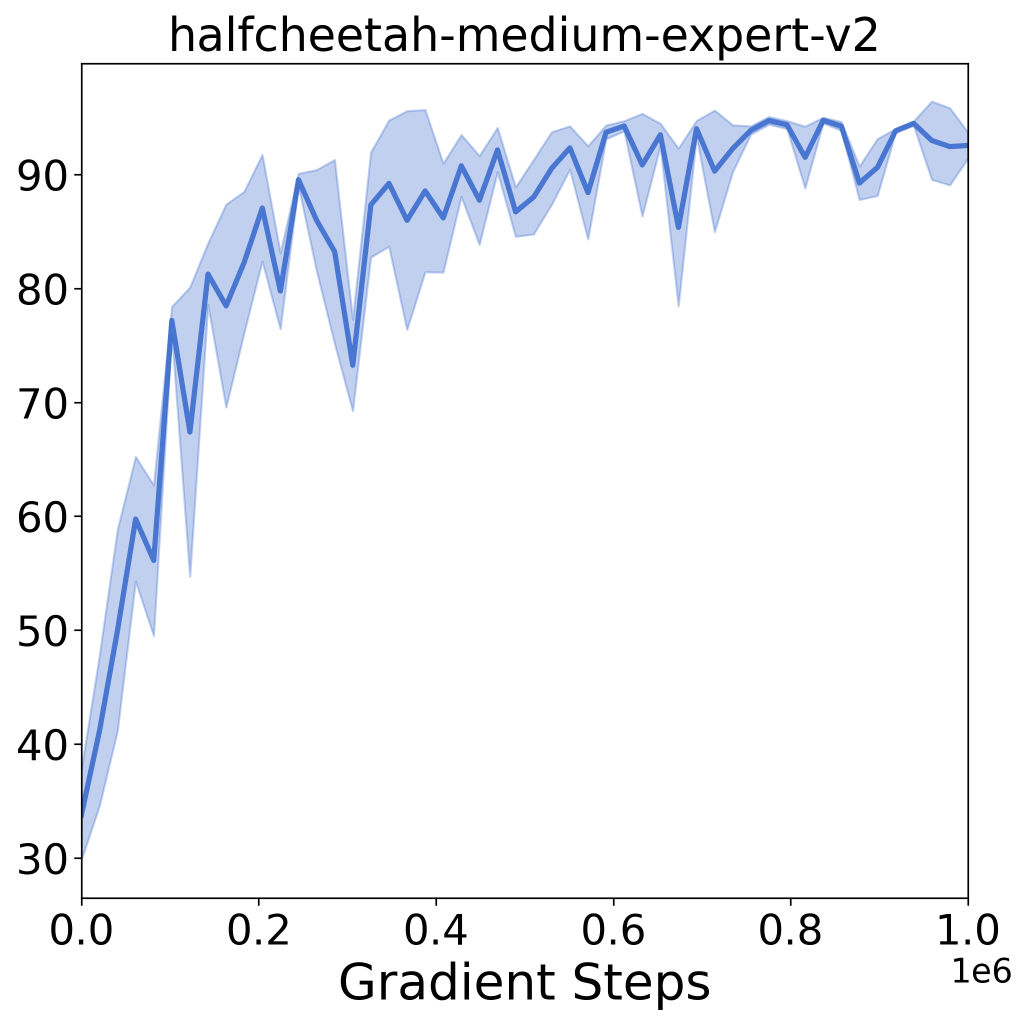} &
    \includegraphics[width=0.18\textwidth]{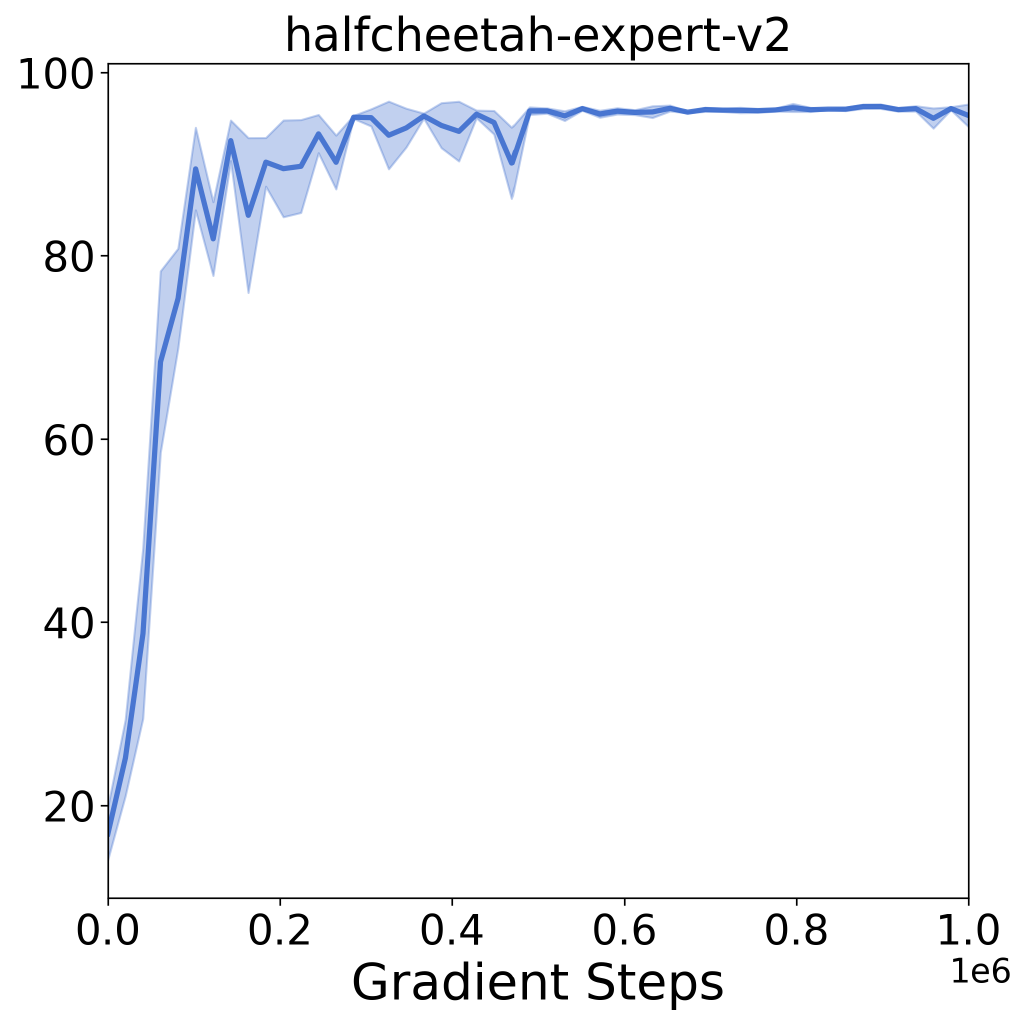} &
    \includegraphics[width=0.18\textwidth]{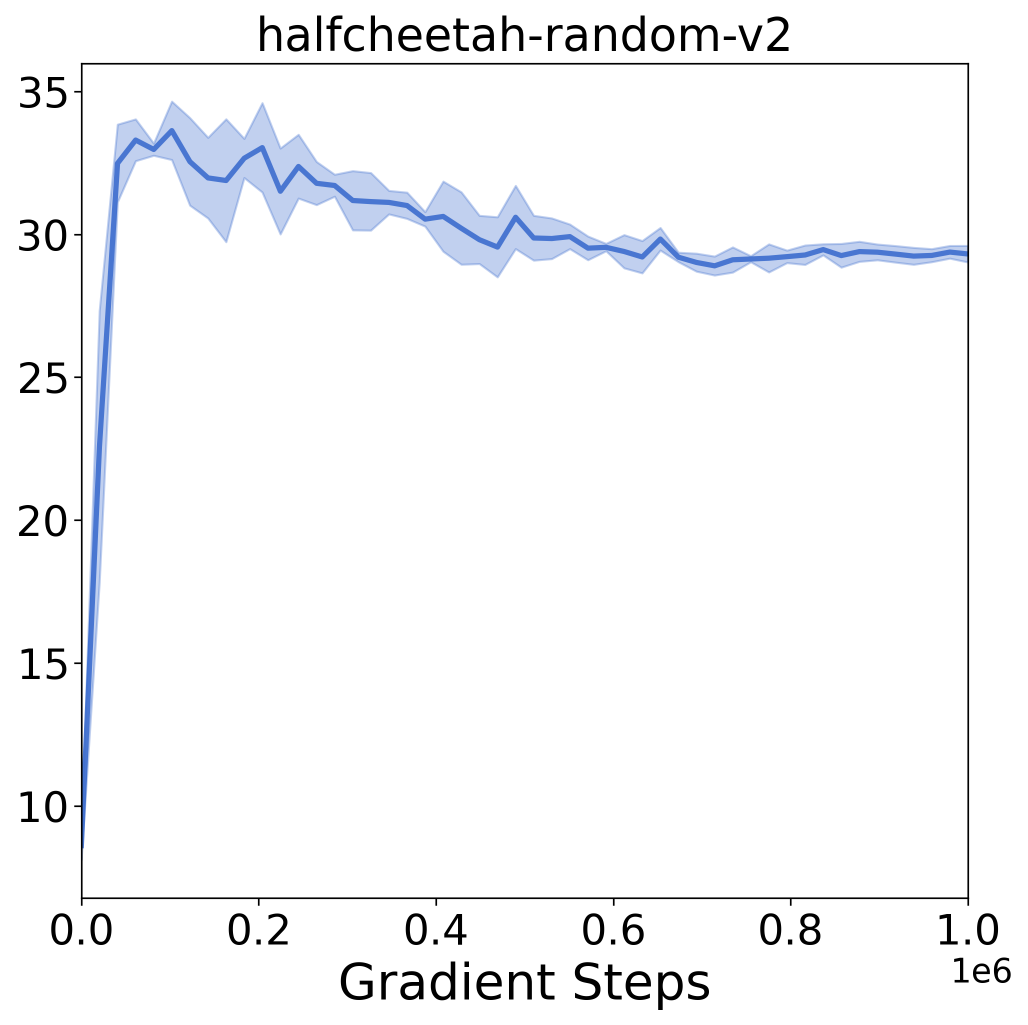} \\
    \includegraphics[width=0.18\textwidth]{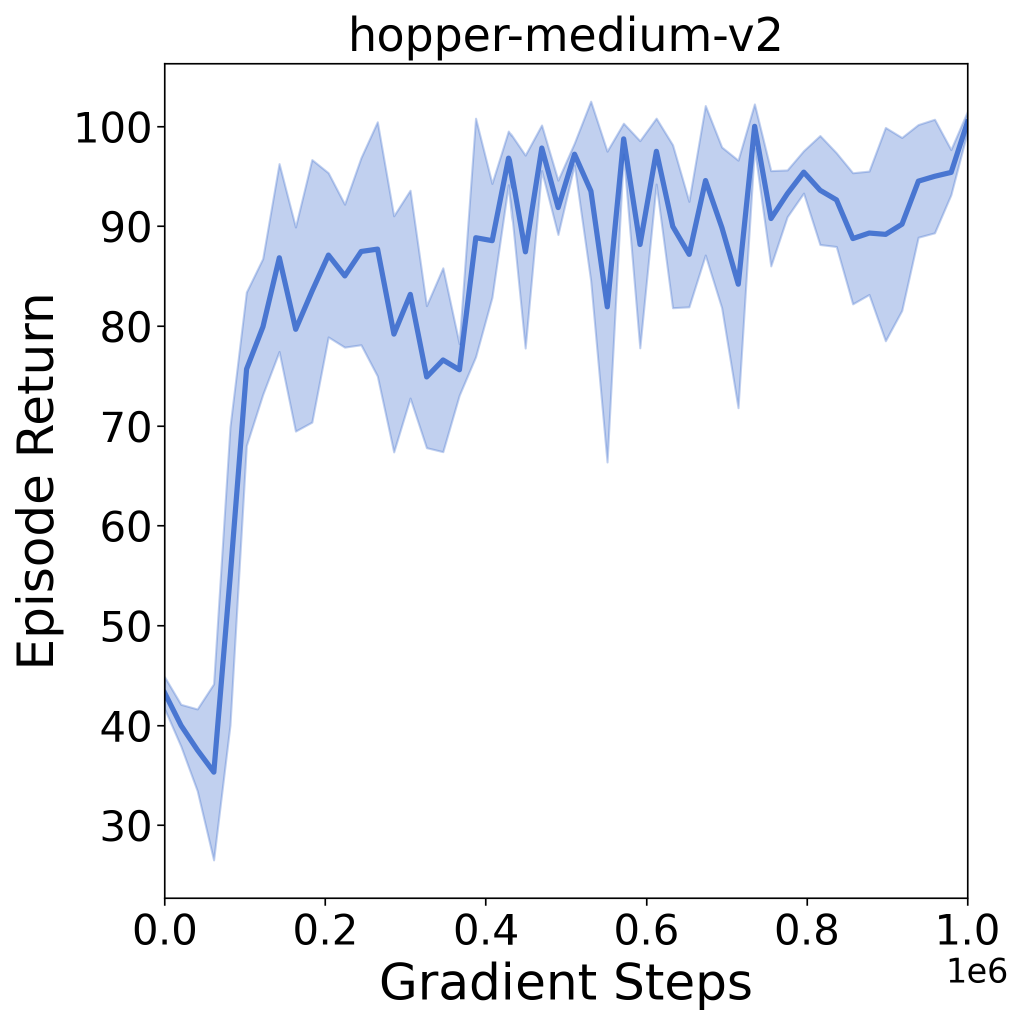} &
    \includegraphics[width=0.18\textwidth]{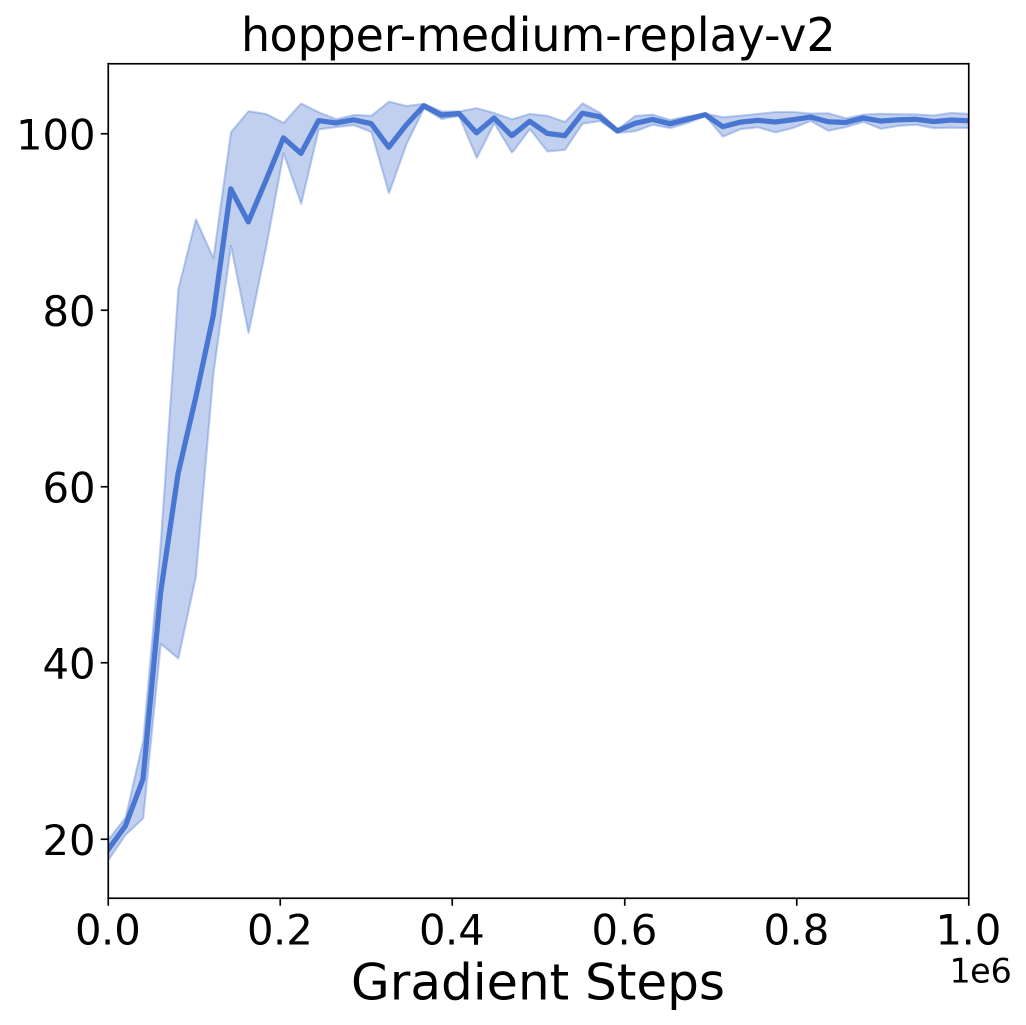} &
    \includegraphics[width=0.18\textwidth]{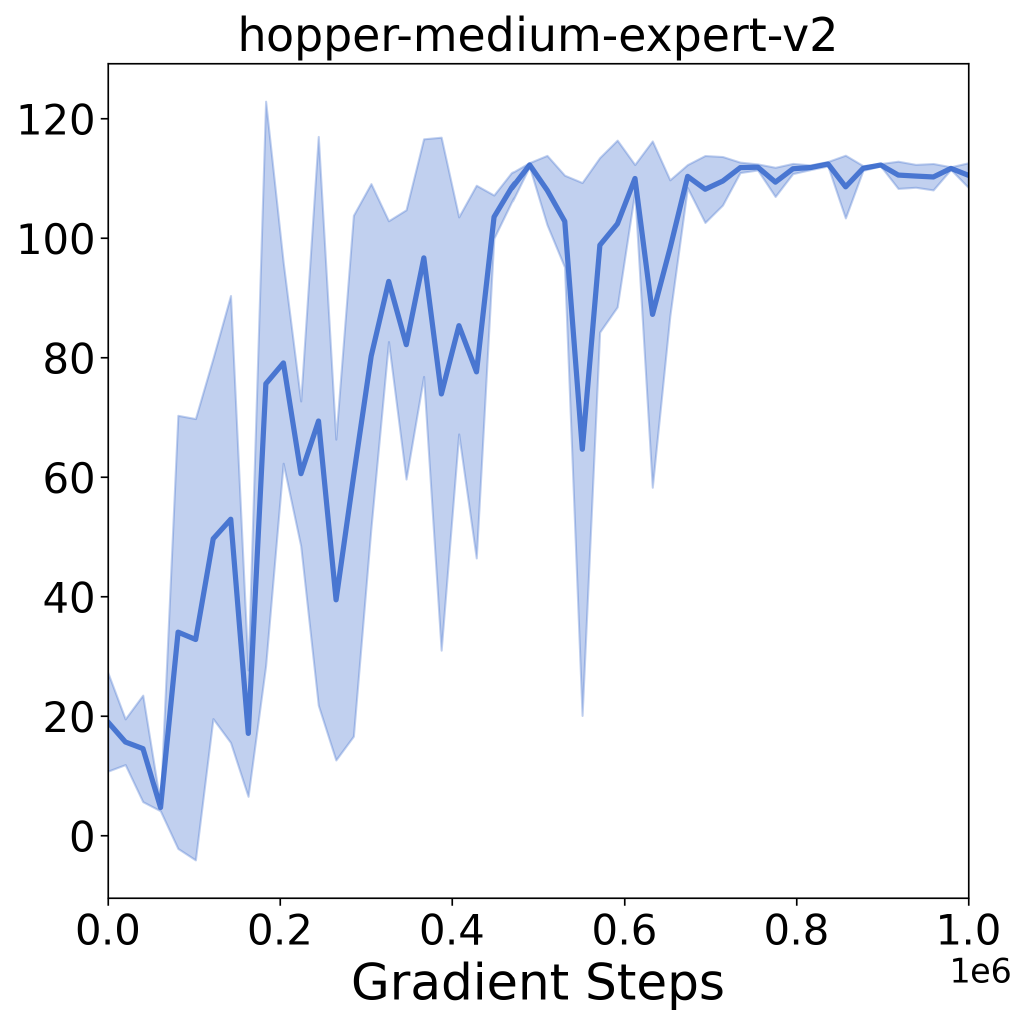} &
    \includegraphics[width=0.18\textwidth]{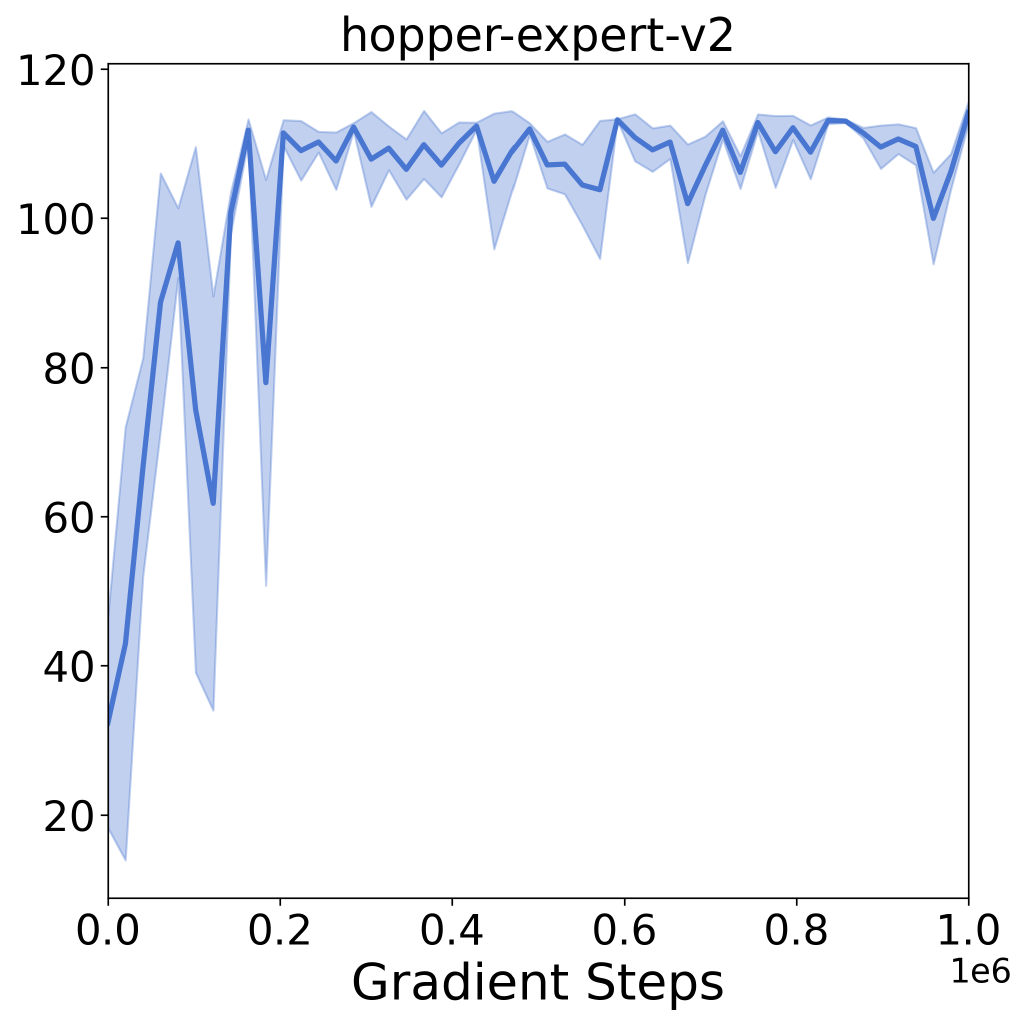} &
    \includegraphics[width=0.18\textwidth]{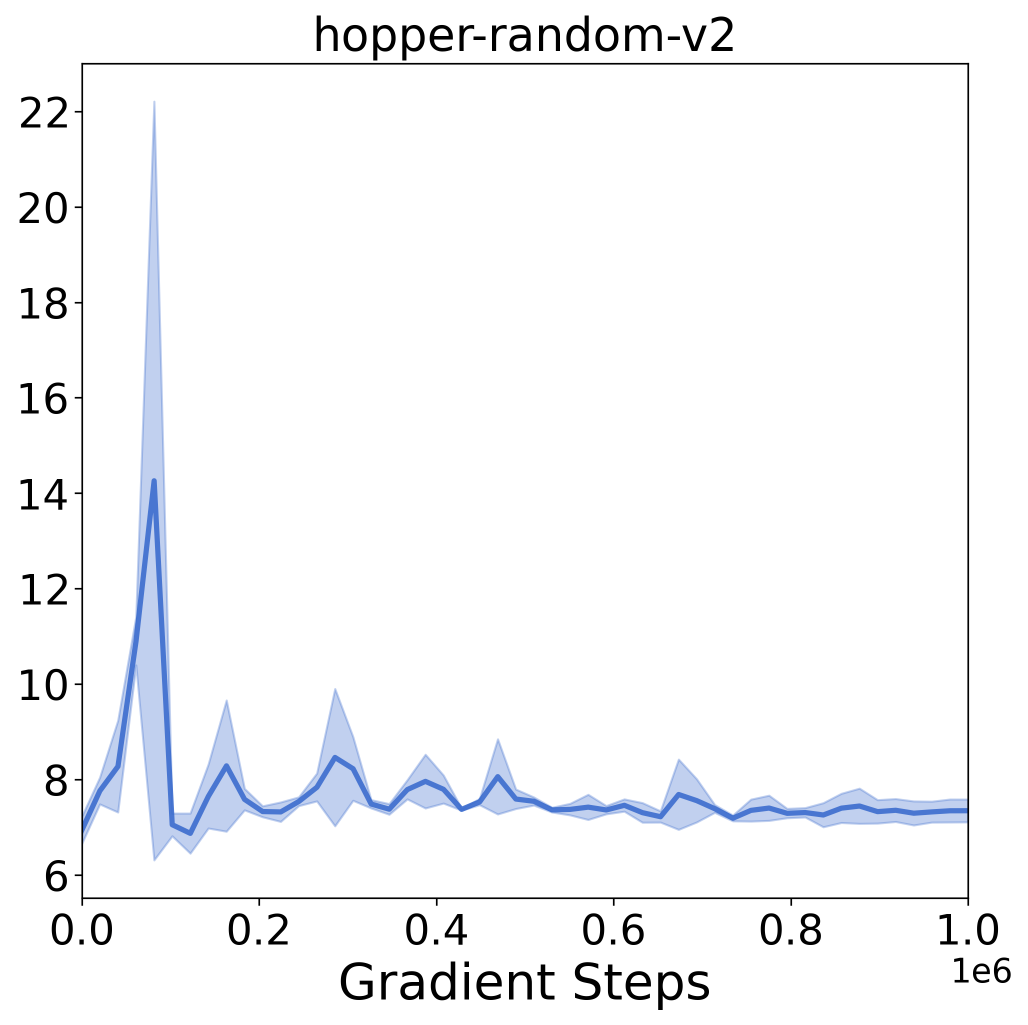} \\
    \includegraphics[width=0.18\textwidth]{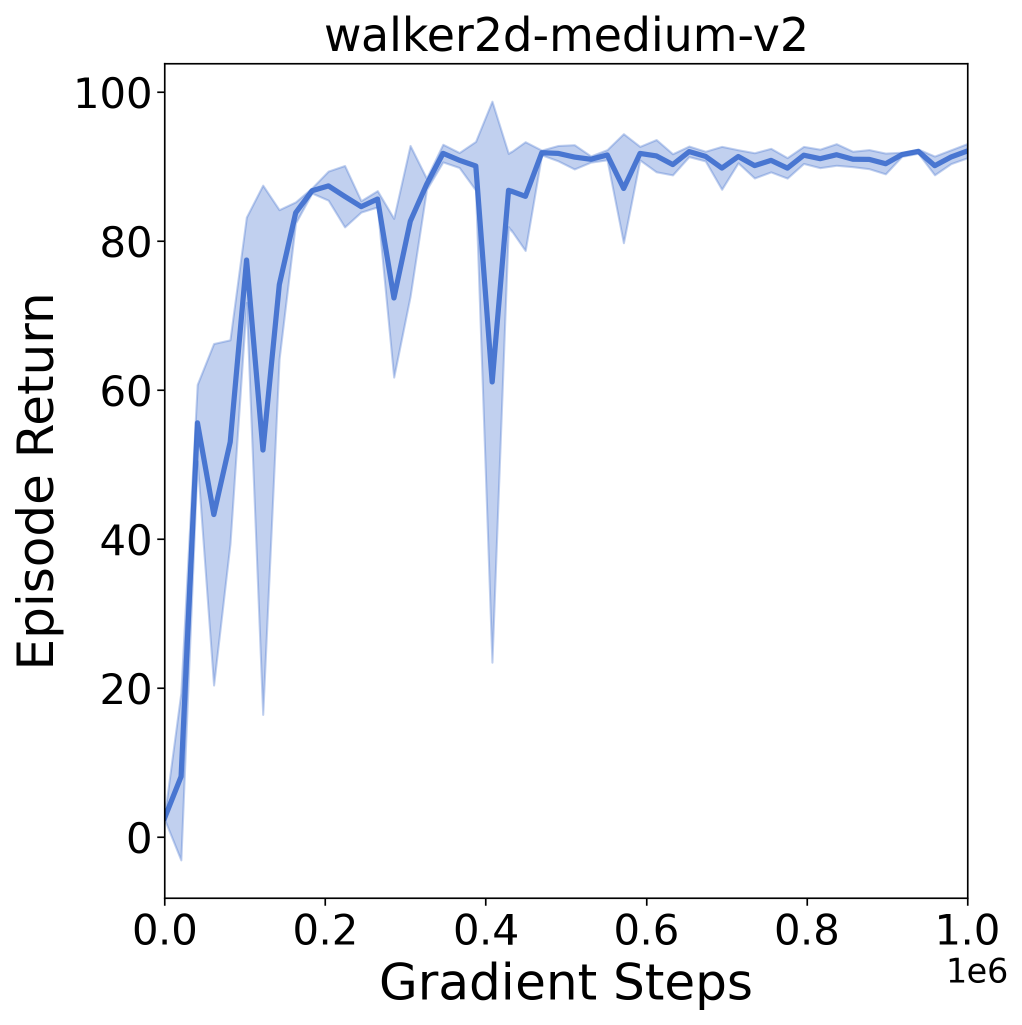} &
    \includegraphics[width=0.18\textwidth]{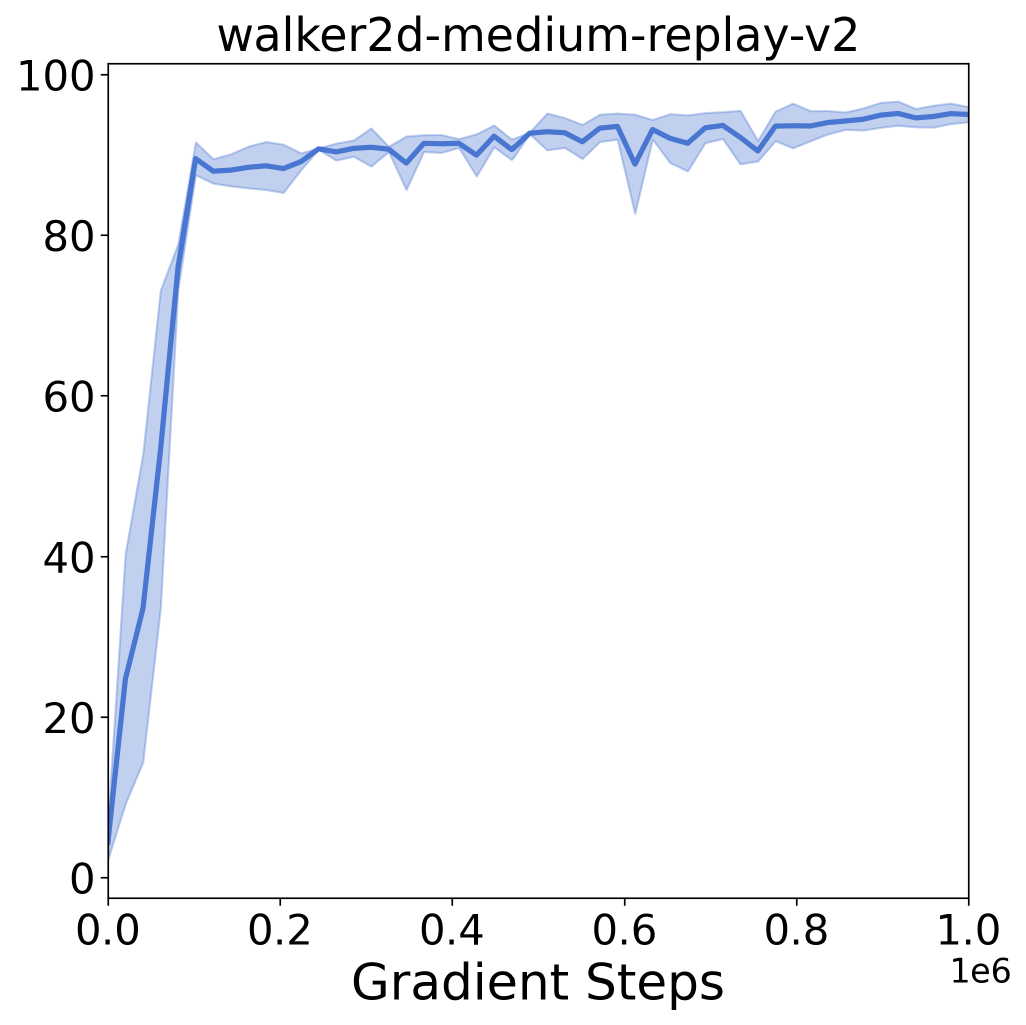} &
    \includegraphics[width=0.18\textwidth]{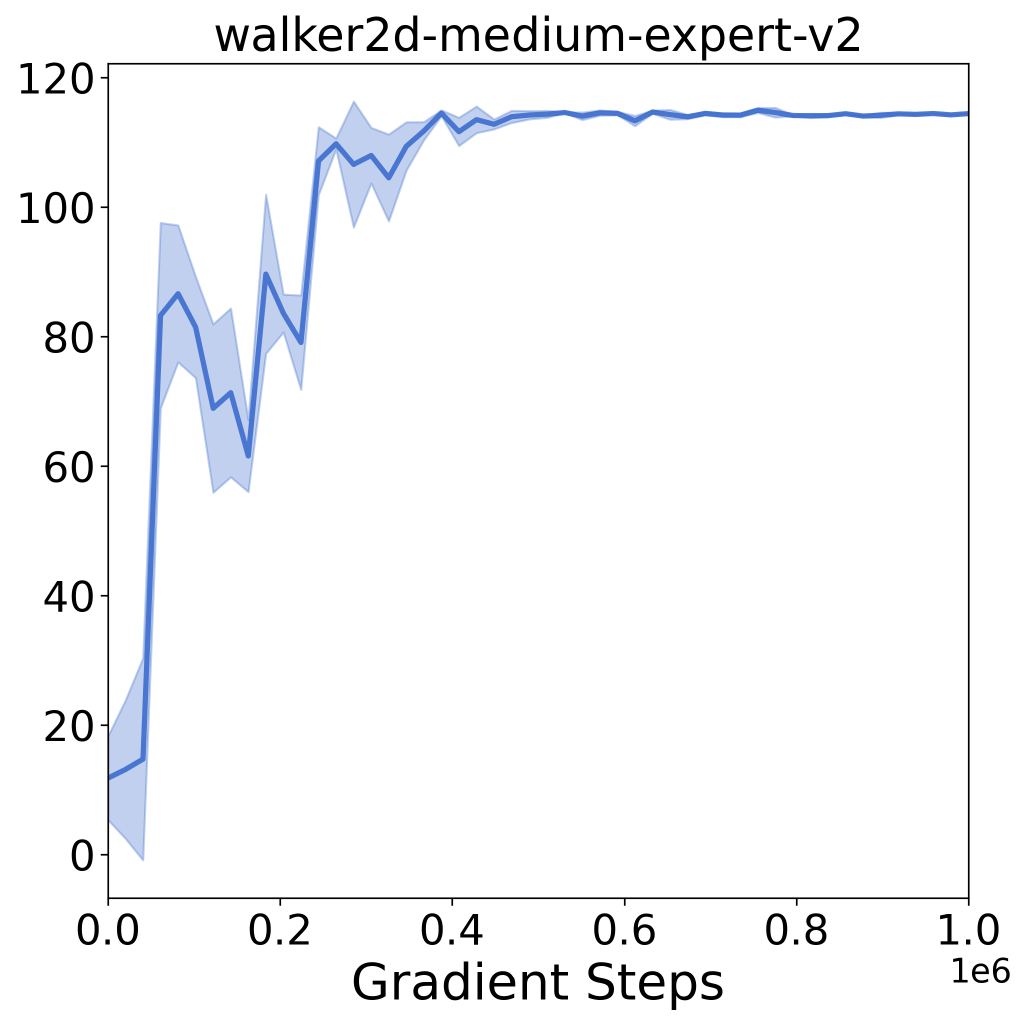} &
    \includegraphics[width=0.18\textwidth]{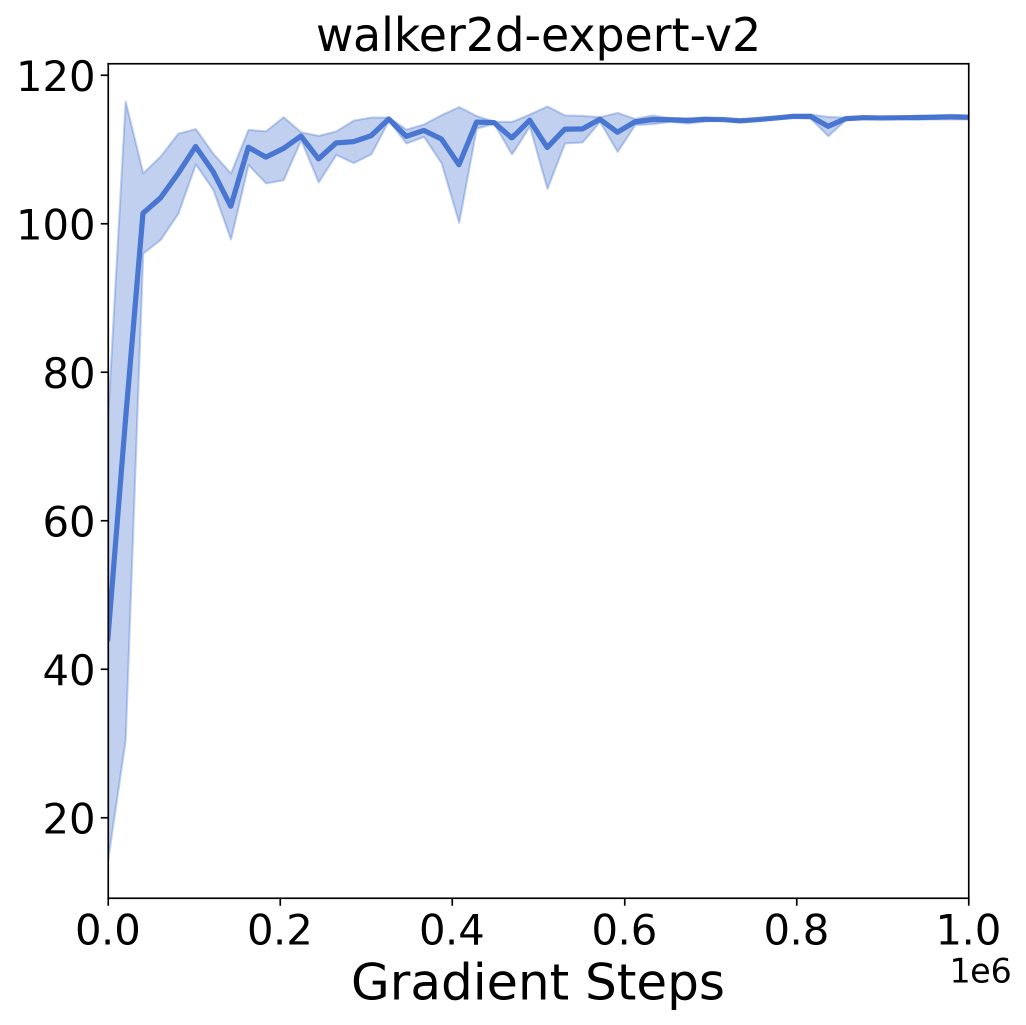} &
    \includegraphics[width=0.18\textwidth]{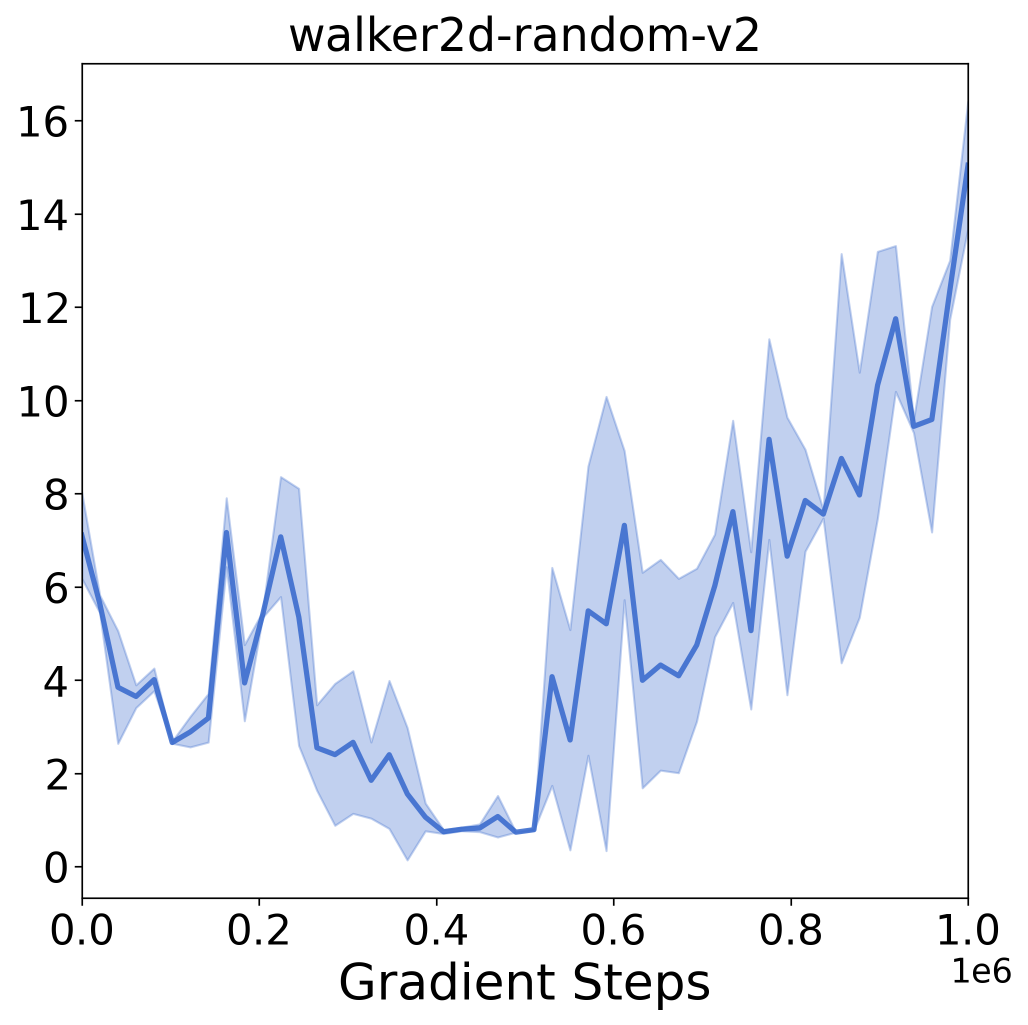}
  \end{tabular}
  \caption{Learning curves of CSDG on the Gym locomotion tasks.}
  \label{fig:dataset1}
\end{figure}

\begin{figure}[ht]
  \centering
  \begin{tabular}{ccc}
    \includegraphics[width=0.3\textwidth]{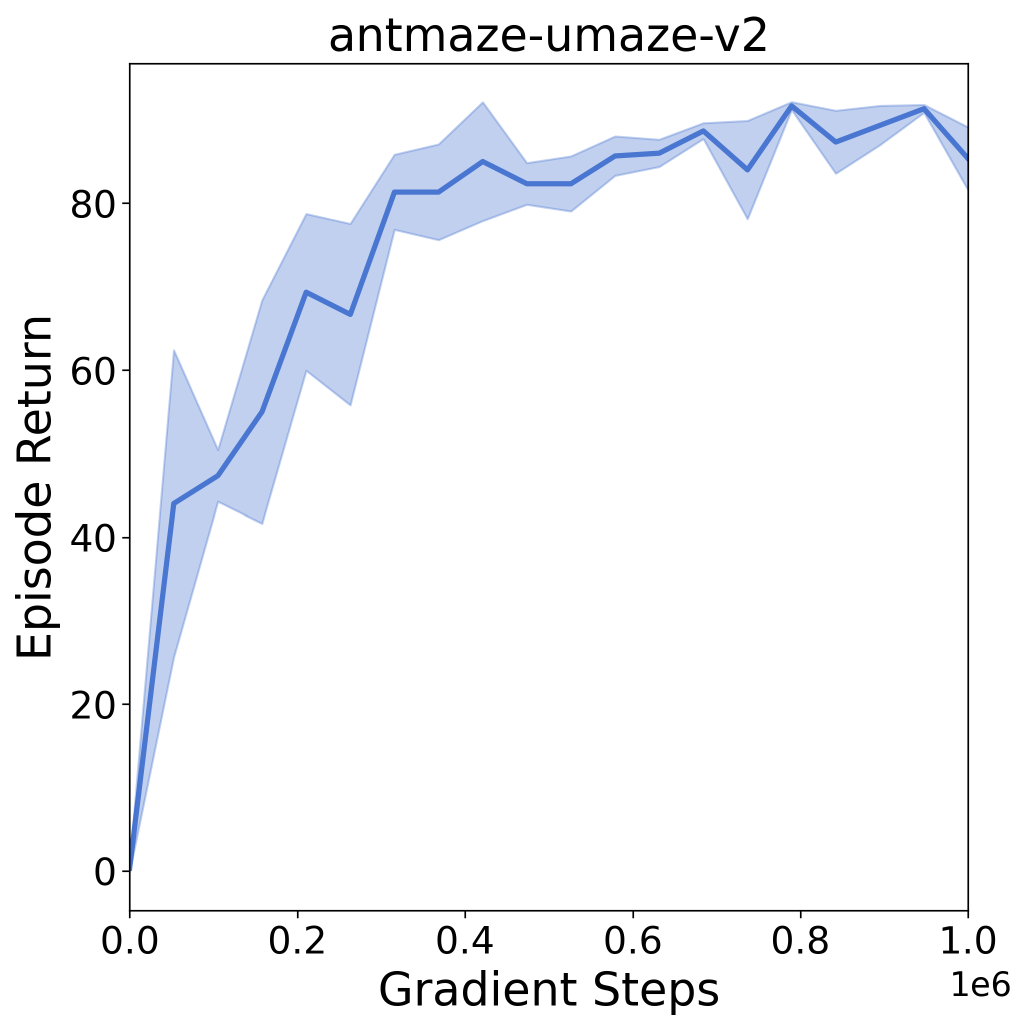} &
    \includegraphics[width=0.3\textwidth]{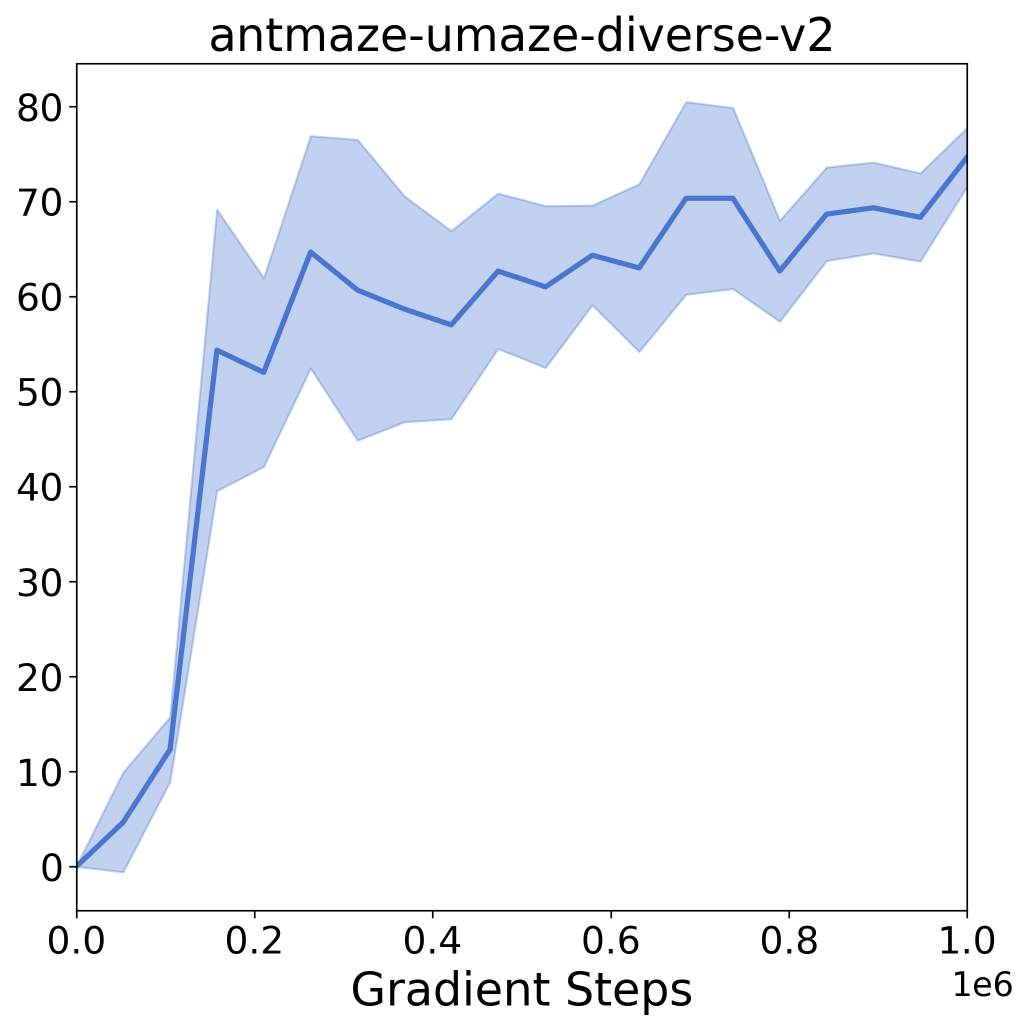} &
    \includegraphics[width=0.3\textwidth]{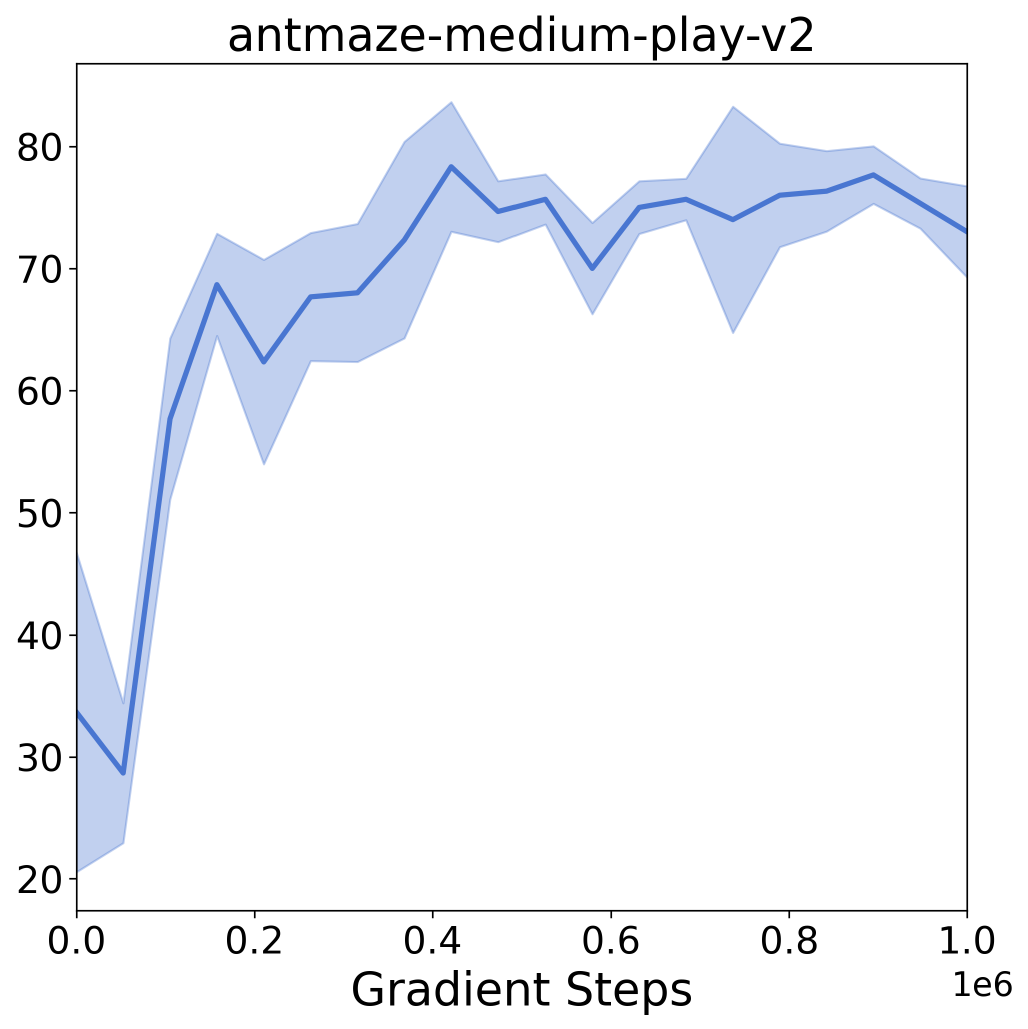} \\
    \includegraphics[width=0.3\textwidth]{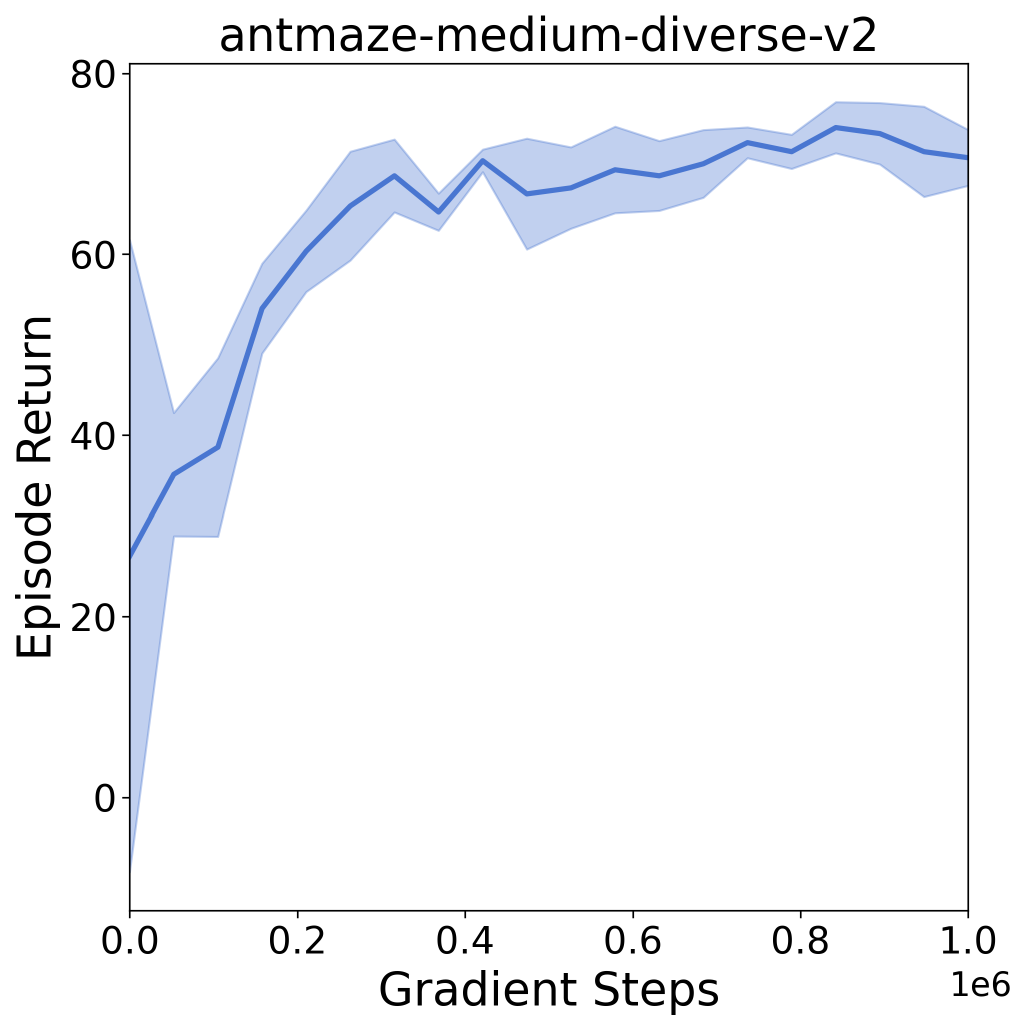} &
    \includegraphics[width=0.3\textwidth]{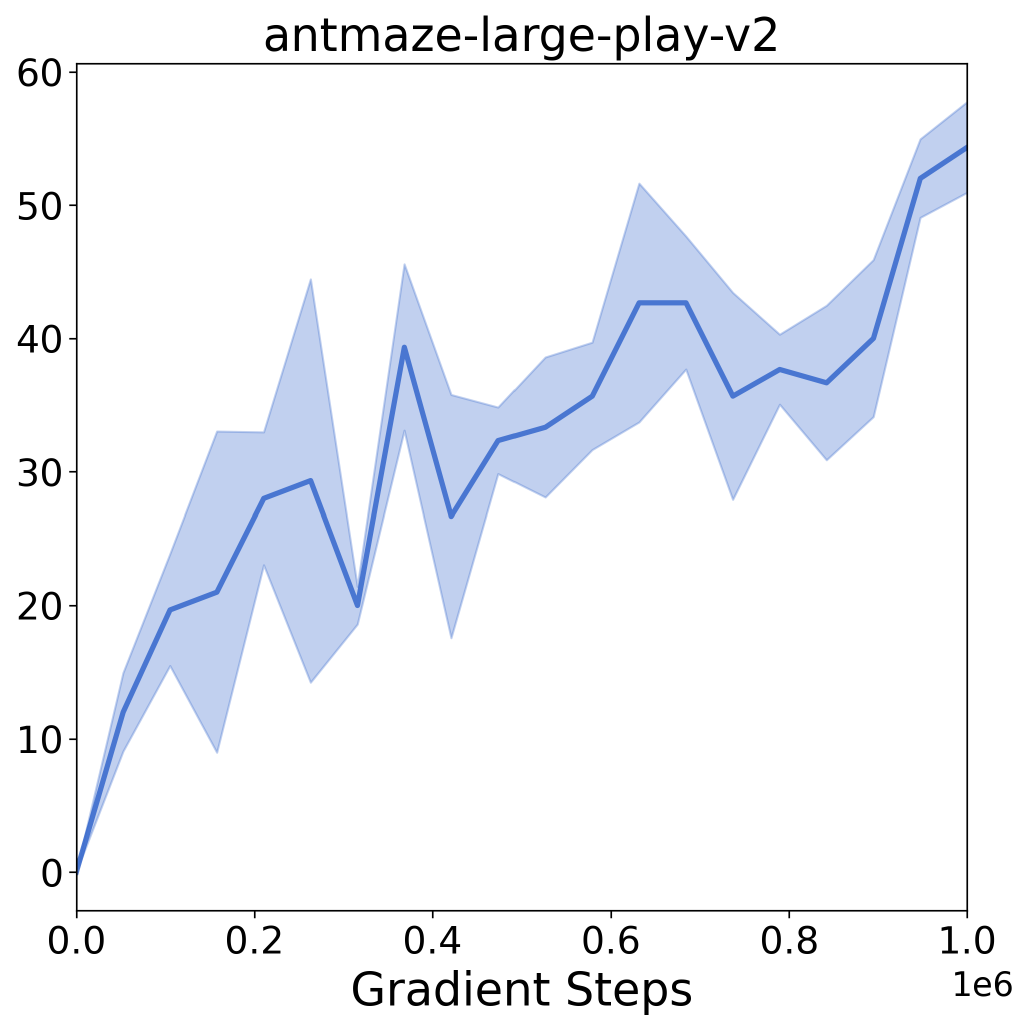} &
    \includegraphics[width=0.3\textwidth]{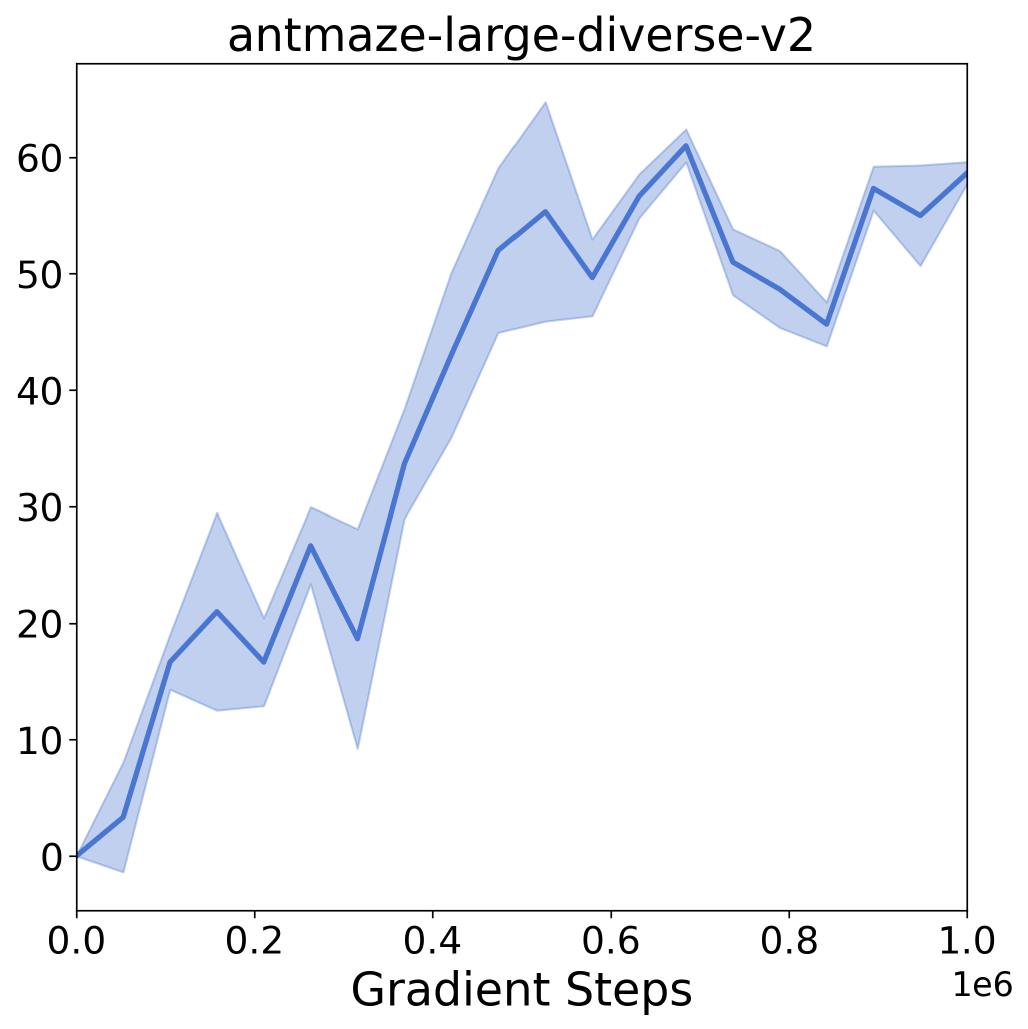}
  \end{tabular}
  \caption{Learning curves of CSDG on the AntMaze tasks.}
  \label{fig:dataset2}
\end{figure}

\paragraph{Robustness to observation and action perturbations.}
CSDG retains higher returns than IQL, TD3BC, and CQL as observation or action
noise increases in the three medium datasets. Figure~\ref{fig:app_robustness}
reports evaluations without retraining for Gaussian noise levels
$\sigma\in[0,0.6]$. The result complements the training-time noise study by
showing how the learned policy behaves under perturbations at evaluation time.
\newpage
\newpage
\section{Proofs of the Generalized Correction Analysis}
\label{app_sec:proofs}

This section proves the operator, fixed-point, and policy results in the main
paper. The target action selector and the two perturbation kernels remain fixed
throughout the analysis.

\subsection{Non-expansiveness and Contraction}

\paragraph{Well-posedness.}
For bounded measurable $Q$, define
\[
F_Q(s,v)
=
\mathbb E_{a\sim\hat\beta_k(\cdot|s)}
\bigl[h_\tau(Q(s,a)-v)\bigr],
\qquad
h_\tau(u)=w_\tau(u)u.
\]
The measurable nearest-neighbor selectors make $F_Q(\cdot,v)$ measurable for
every $v$, while $F_Q(s,\cdot)$ is continuous and strictly decreasing. Its
unique zero can be written as
$V_\tau^Q(s)=\inf\{q\in\mathbb Q:F_Q(s,q)\le0\}$, so $V_\tau^Q$ is measurable.
It lies between the smallest and largest local Q-values and is therefore
bounded. Kernel integration shows that $\mathcal S_\mu Q$ is also bounded and
measurable.
Moreover, the normalizer $Z_\tau^Q$ is positive and measurable, which makes
$\pi_\tau^Q$ a measurable action kernel. The fixed perturbation kernels also
define a measurable mixture kernel $\kappa_\mu$. Thus, both Bellman operators map
$\mathbb B_b(\mathcal S\times\mathcal A)$ into itself, and their induced
fixed-point policies are well defined.

\begin{theorem}[Non-expansive branch functionals]
\label{lem:app_nonexpansive}
For bounded $Q_1,Q_2$,
\[
\|\mathcal S_\mu Q_1-\mathcal S_\mu Q_2\|_\infty
\le \|Q_1-Q_2\|_\infty,
\qquad
\|V_\tau^{Q_1}-V_\tau^{Q_2}\|_\infty
\le \|Q_1-Q_2\|_\infty.
\]
Consequently, $\mathcal T_{\mathrm{CSDG}}$ and
$\mathcal T_{\mathrm{In}}^\tau$ are $\gamma$-contractions on
$\mathbb B_b(\mathcal S\times\mathcal A)$.
\end{theorem}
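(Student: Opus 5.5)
The proof has three parts: non-expansiveness of $\mathcal S_\mu$, non-expansiveness of the expectile functional, and the contraction of both Bellman operators. Throughout, the target selector $\pi'$ and the perturbation kernels are fixed. Write $\Delta=\|Q_1-Q_2\|_\infty$.

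\textbf{The smoother $\mathcal S_\mu$.} For fixed $s$ and each branch $j$, the candidate $a_j=\operatorname{clip}(\pi'(s)+\eta_j)$ is the same action under $Q_1$ and $Q_2$. Hence $|Q_1(s,a_j)-Q_2(s,a_j)|\le\Delta$ pointwise in $\eta_j$. Integrating against $\nu_j$ and taking the convex combination with weights $\mu,1-\mu$ gives
\[
|\mathcal S_\mu Q_1(s)-\mathcal S_\mu Q_2(s)|\le\Delta .
\]
Taking the supremum over $s$ finishes this part. The point is only that $\mathcal S_\mu$ is linear and integrates against a probability kernel that does not depend on $Q$.

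\textbf{The expectile functional.} This is the main step, because $V_\tau^Q$ is nonlinear in $Q$. I would use the first-order characterization already set up in the well-posedness paragraph: $V_\tau^Q(s)$ is the unique zero of $v\mapsto F_Q(s,v)=\mathbb E_{\hat\beta_k}[h_\tau(Q(s,a)-v)]$, and $h_\tau$ is nondecreasing. Two properties give the claim.

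\emph{Monotonicity.} If $Q_1\le Q_2$ pointwise, then $F_{Q_1}(s,v)\le F_{Q_2}(s,v)$ for every $v$. Since each $F$ is strictly decreasing in $v$, the zero of $F_{Q_1}$ lies at or below the zero of $F_{Q_2}$. So $V_\tau^{Q_1}\le V_\tau^{Q_2}$.

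\emph{Translation equivariance.} For a constant $c$, $F_{Q+c}(s,v)=F_Q(s,v-c)$, so $V_\tau^{Q+c}=V_\tau^Q+c$.

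Now apply both to the sandwich $Q_2-\Delta\le Q_1\le Q_2+\Delta$. This yields $V_\tau^{Q_2}-\Delta\le V_\tau^{Q_1}\le V_\tau^{Q_2}+\Delta$, which is the stated bound. The only care needed is to confirm strict monotonicity of $F_Q(s,\cdot)$, which guarantees the zero is unique. This holds because $\hat\beta_k$ is a nonempty finite measure and $h_\tau$ is strictly increasing with slopes $\tau$ and $1-\tau$.

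\textbf{Contraction.} Write the CSDG target as $\lambda\mathcal S_\mu Q(s')+(1-\lambda)V_\tau^Q(s')$. The rewards cancel in the difference. By the two bounds above and convexity,
\[
|\mathcal T_{\mathrm{CSDG}}Q_1(s,a)-\mathcal T_{\mathrm{CSDG}}Q_2(s,a)|\le\gamma\,\mathbb E_{s'}\bigl[\lambda\Delta+(1-\lambda)\Delta\bigr]=\gamma\Delta .
\]
The in-sample operator $\mathcal T_{\mathrm{In}}^\tau$ is the case $\lambda=0$. The well-posedness paragraph already shows that both operators map $\mathbb B_b(\mathcal S\times\mathcal A)$ into itself. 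Both are therefore $\gamma$-contractions on this Banach space, and Banach's fixed-point theorem gives their unique fixed points.
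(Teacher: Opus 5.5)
Your proposal is correct and follows the paper's proof essentially step for step: you use the same-candidate coupling for $\mathcal S_\mu$, then monotonicity plus translation equivariance of the unique zero of $F_Q(s,\cdot)$ applied to the sandwich $Q_2-\Delta\le Q_1\le Q_2+\Delta$ for the expectile, and finally the convex combination $\lambda\mathcal S_\mu Q+(1-\lambda)V_\tau^Q$ (with $\lambda=0$ giving $\mathcal T_{\mathrm{In}}^\tau$) for the contraction. Your shift identity $F_{Q+c}(s,v)=F_Q(s,v-c)$ is only a reparametrization of the paper's $F_{Q+c}(s,v+c)=F_Q(s,v)$.
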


\begin{proof}
Let $\delta=\|Q_1-Q_2\|_\infty$. For the smoothed branch, both functions are
evaluated at the same candidates generated by the fixed perturbation kernels.
For every state $s$, the triangle inequality gives
\[
\begin{aligned}
&\left|\mathcal S_\mu Q_1(s)-\mathcal S_\mu Q_2(s)\right|\\
&\quad\le
\mu\,\mathbb E_{\eta_{\mathrm{In}}}
\left|Q_1(s,a_{\mathrm{In}})-Q_2(s,a_{\mathrm{In}})\right|\\
&\qquad+
(1-\mu)\,\mathbb E_{\eta_{\mathrm{OOD}}}
\left|Q_1(s,a_{\mathrm{OOD}})-Q_2(s,a_{\mathrm{OOD}})\right|\\
&\quad\le \delta.
\end{aligned}
\]
Taking the supremum over $s$ proves the first non-expansive bound.

For the expectile map, recall
$F_Q(s,v)=\mathbb E_{a\sim\hat\beta_k}
[h_\tau(Q(s,a)-v)]$, where $h_\tau(u)=w_\tau(u)u$. Since
$\tau\in(0,1)$, the function $h_\tau$ is continuous and strictly increasing.
Consequently, $F_Q(s,v)$ is strictly decreasing in $v$. If
$Q_1(s,\cdot)\le Q_2(s,\cdot)$, then monotonicity of $h_\tau$ gives
$F_{Q_1}(s,v)\le F_{Q_2}(s,v)$ for every $v$. Evaluating this inequality at
the zero $V_\tau^{Q_2}(s)$ yields
$F_{Q_1}(s,V_\tau^{Q_2}(s))\le0$. Since $F_{Q_1}(s,\cdot)$ is strictly
decreasing and vanishes at $V_\tau^{Q_1}(s)$, it follows that
$V_\tau^{Q_1}(s)\le V_\tau^{Q_2}(s)$. Thus, the expectile functional is
monotone.

The same first-order equation also gives translation equivariance. For any
constant $c$,
$F_{Q+c}(s,v+c)=F_Q(s,v)$, and hence
$V_\tau^{Q+c}(s)=V_\tau^Q(s)+c$. By the definition of $\delta$,
$Q_2-\delta\le Q_1\le Q_2+\delta$. Applying monotonicity and translation
equivariance gives
\[
V_\tau^{Q_2}(s)-\delta
\le V_\tau^{Q_1}(s)
\le V_\tau^{Q_2}(s)+\delta.
\]
Therefore,
$\|V_\tau^{Q_1}-V_\tau^{Q_2}\|_\infty\le\delta$.

Finally, the target functional of $\mathcal T_{\mathrm{CSDG}}$ is
$(1-\lambda)V_\tau^Q+\lambda\mathcal S_\mu Q$. Combining the two bounds and
using that transition expectations are non-expansive in the supremum norm,
\[
\begin{aligned}
\|\mathcal T_{\mathrm{CSDG}}Q_1
-\mathcal T_{\mathrm{CSDG}}Q_2\|_\infty
&\le
\gamma\bigl[(1-\lambda)
\|V_\tau^{Q_1}-V_\tau^{Q_2}\|_\infty\\
&\qquad\quad+
\lambda\|\mathcal S_\mu Q_1-\mathcal S_\mu Q_2\|_\infty\bigr]\\
&\le \gamma\|Q_1-Q_2\|_\infty.
\end{aligned}
\]
The same argument with only the expectile branch proves the contraction of
$\mathcal T_{\mathrm{In}}^\tau$.
\end{proof}

\subsection{Proof of Theorem~\ref{th:unified_bound}}

\begin{proof}
Using the operator definition, write the CSDG update as
\[
\mathcal T_{\mathrm{CSDG}}Q(s,a)
=R(s,a)+\gamma\,
\mathbb E_{s'\sim P(\cdot|s,a)}
\left[V_\tau^Q(s')+
\lambda\bigl(\mathcal S_\mu Q(s')-V_\tau^Q(s')\bigr)\right].
\]
The in-sample operator contains the same reward and expectile terms,
\[
\mathcal T_{\mathrm{In}}^\tau Q(s,a)
=R(s,a)+\gamma\,
\mathbb E_{s'\sim P(\cdot|s,a)}[V_\tau^Q(s')].
\]
Subtracting the second identity from the first cancels these shared terms and
leaves
\[
\mathcal T_{\mathrm{CSDG}}Q(s,a)
-\mathcal T_{\mathrm{In}}^\tau Q(s,a)
=
\gamma\lambda\,
\mathbb E_{s'\sim P(\cdot|s,a)}
\bigl[\mathcal S_\mu Q(s')-V_\tau^Q(s')\bigr].
\]
This is the signed one-step identity. Moreover,
\[
\begin{aligned}
&\left|\mathcal T_{\mathrm{CSDG}}Q(s,a)
-\mathcal T_{\mathrm{In}}^\tau Q(s,a)\right|\\
&\quad\le
\gamma\lambda\,
\mathbb E_{s'\sim P(\cdot|s,a)}
\left|\mathcal S_\mu Q(s')-V_\tau^Q(s')\right|\\
&\quad\le \gamma\lambda e(Q).
\end{aligned}
\]
Taking the supremum over $(s,a)$ proves
Eq.~\eqref{eq:one_step_correction_bound}.
\end{proof}

\subsection{Proof of Theorem~\ref{th:deviation}}

\begin{proof}
Set $D_k=\|Q_{\mathrm{CSDG}}^k-Q_{\mathrm{In}}^k\|_\infty$. Insert
$\mathcal T_{\mathrm{In}}^\tau Q_{\mathrm{CSDG}}^k$ between the two updates.
The triangle inequality gives
\[
\begin{aligned}
D_{k+1}
&=
\|\mathcal T_{\mathrm{CSDG}}Q_{\mathrm{CSDG}}^k
-\mathcal T_{\mathrm{In}}^\tau Q_{\mathrm{In}}^k\|_\infty\\
&\le
\|\mathcal T_{\mathrm{CSDG}}Q_{\mathrm{CSDG}}^k
-\mathcal T_{\mathrm{In}}^\tau Q_{\mathrm{CSDG}}^k\|_\infty\\
&\quad+
\|\mathcal T_{\mathrm{In}}^\tau Q_{\mathrm{CSDG}}^k
-\mathcal T_{\mathrm{In}}^\tau Q_{\mathrm{In}}^k\|_\infty\\
&\le \gamma\lambda e_k+\gamma D_k,
\end{aligned}
\]
where the last line uses Theorem~\ref{th:unified_bound} for the first term and
Theorem~\ref{lem:app_nonexpansive} for the second. Since $D_0=0$, the first two
steps satisfy
$D_1\le\gamma\lambda e_0$ and
$D_2\le\gamma\lambda(e_1+\gamma e_0)$. Repeated substitution therefore gives
\[
D_k
\le
\gamma\lambda
\sum_{t=0}^{k-1}\gamma^{k-1-t}e_t.
\]
This is Eq.~\eqref{eq:deviation_bound}.

For the fixed points, insert
$\mathcal T_{\mathrm{In}}^\tau Q_{\mathrm{CSDG}}^*$ and use the same two
bounds. Since each fixed point equals its corresponding Bellman update,
\[
\begin{aligned}
\|Q_{\mathrm{CSDG}}^*-Q_{\mathrm{In}}^*\|_\infty
&\le
\|\mathcal T_{\mathrm{CSDG}}Q_{\mathrm{CSDG}}^*
-\mathcal T_{\mathrm{In}}^\tau Q_{\mathrm{CSDG}}^*\|_\infty\\
&\quad+
\|\mathcal T_{\mathrm{In}}^\tau Q_{\mathrm{CSDG}}^*
-\mathcal T_{\mathrm{In}}^\tau Q_{\mathrm{In}}^*\|_\infty\\
&\le
\gamma\lambda e_*+
\gamma\|Q_{\mathrm{CSDG}}^*-Q_{\mathrm{In}}^*\|_\infty.
\end{aligned}
\]
Moving the final term to the left yields
$(1-\gamma)\|Q_{\mathrm{CSDG}}^*-Q_{\mathrm{In}}^*\|_\infty
\le\gamma\lambda e_*$, which proves the first bound in
Eq.~\eqref{eq:fixed_point_residual_bounds}.

For any bounded $\bar Q$, insert
$\mathcal T_{\mathrm{CSDG}}\bar Q$ and
$\mathcal T_{\mathrm{In}}^\tau\bar Q$ between $\bar Q$ and
$Q_{\mathrm{In}}^*$. More precisely,
\[
\begin{aligned}
\bar Q-Q_{\mathrm{In}}^*
={}&(\bar Q-\mathcal T_{\mathrm{CSDG}}\bar Q)\\
&+(\mathcal T_{\mathrm{CSDG}}\bar Q
-\mathcal T_{\mathrm{In}}^\tau\bar Q)\\
&+(\mathcal T_{\mathrm{In}}^\tau\bar Q
-\mathcal T_{\mathrm{In}}^\tau Q_{\mathrm{In}}^*).
\end{aligned}
\]
Taking norms, applying the residual definition and
Theorem~\ref{th:unified_bound}, and then using the contraction of
$\mathcal T_{\mathrm{In}}^\tau$ gives
\[
\|\bar Q-Q_{\mathrm{In}}^*\|_\infty
\le
r_{\mathrm{CSDG}}(\bar Q)
+\gamma\lambda e(\bar Q)
+\gamma\|\bar Q-Q_{\mathrm{In}}^*\|_\infty.
\]
After moving the last term to the left and dividing by $1-\gamma$, we obtain
the residual bound in Eq.~\eqref{eq:fixed_point_residual_bounds}.
\end{proof}

\subsection{Expectile-Induced Policies}

\begin{theorem}[Expectile policy representation]
\label{lem:app_expectile_policy}
For bounded $Q$, define
\[
Z_\tau^Q(s)
=
\mathbb E_{a\sim\hat\beta_k(\cdot|s)}
\bigl[w_\tau(Q(s,a)-V_\tau^Q(s))\bigr].
\]
Then $Z_\tau^Q(s)>0$, and the normalized policy
\[
\pi_\tau^Q(da|s)
=
\frac{w_\tau(Q(s,a)-V_\tau^Q(s))}{Z_\tau^Q(s)}
\hat\beta_k(da|s)
\]
satisfies
$V_\tau^Q(s)=\mathbb E_{a\sim\pi_\tau^Q(\cdot|s)}[Q(s,a)]$.
\end{theorem}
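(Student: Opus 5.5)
The plan is to work directly from the first-order condition of the expectile problem in Eq.~\eqref{eq:theoretical_expectile}, and then read off the policy identity by rearranging it.

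\emph{Positivity of the normalizer.} Since $\tau\in(0,1)$, the weight satisfies $w_\tau(u)\ge\min\{\tau,1-\tau\}>0$ for every $u$. The empirical kernel $\hat\beta_k(\cdot|s)$ is a probability measure with $k\ge1$ atoms, so
\[
Z_\tau^Q(s)\ge\min\{\tau,1-\tau\}>0 .
\]
Hence $\pi_\tau^Q(\cdot|s)$ is a well-defined probability measure, absolutely continuous with respect to $\hat\beta_k(\cdot|s)$. Measurability in $s$ was already established in the well-posedness paragraph.

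\emph{First-order condition.} The loss $L_2^\tau(u)=w_\tau(u)u^2$ is continuously differentiable, with derivative $2w_\tau(u)u=2h_\tau(u)$. At $u=0$ both one-sided derivatives vanish, so there is no kink. The objective $v\mapsto\mathbb E_{\hat\beta_k}[L_2^\tau(Q(s,a)-v)]$ is a finite sum of such terms, so differentiation under the expectation is trivial. Its derivative is $-2F_Q(s,v)$. The function is strictly convex and coercive, so its unique minimizer $V_\tau^Q(s)$ is the unique zero of $F_Q(s,\cdot)$. This matches the characterization used in the well-posedness paragraph and in Theorem~\ref{lem:app_nonexpansive}. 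The condition reads
\[
\mathbb E_{a\sim\hat\beta_k(\cdot|s)}\bigl[w_\tau(Q(s,a)-V_\tau^Q(s))\,(Q(s,a)-V_\tau^Q(s))\bigr]=0 .
\]

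\emph{Rearranging.} Split the condition linearly to obtain
\[
\mathbb E_{\hat\beta_k}\bigl[w_\tau(\cdot)\,Q(s,a)\bigr]=V_\tau^Q(s)\,Z_\tau^Q(s).
\]
Dividing by $Z_\tau^Q(s)>0$ turns the left-hand side into the expectation of $Q(s,a)$ under the reweighted measure $\pi_\tau^Q(\cdot|s)$. This gives $V_\tau^Q(s)=\mathbb E_{a\sim\pi_\tau^Q(\cdot|s)}[Q(s,a)]$.

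The only point needing care is the first-order step. I would check differentiability of $L_2^\tau$ at $0$ explicitly, or alternatively argue via one-sided derivatives and the strict monotonicity of $F_Q(s,\cdot)$. Everything else is algebra on a finite sum.
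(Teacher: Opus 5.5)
Your proposal is correct and takes essentially the same route as the paper: you get $Z_\tau^Q(s)>0$ from $w_\tau\ge\min\{\tau,1-\tau\}>0$, then divide the expectile first-order condition by $Z_\tau^Q(s)$ to read off $V_\tau^Q(s)=\mathbb E_{a\sim\pi_\tau^Q(\cdot|s)}[Q(s,a)]$. Your explicit check that $L_2^\tau$ is continuously differentiable at $u=0$, so that the minimizer is exactly the zero of $F_Q(s,\cdot)$, fills in a step the paper simply asserts.
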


\begin{proof}
The weight $w_\tau$ is bounded below by $\min\{\tau,1-\tau\}>0$, so the
normalizer is positive. It is also finite because $w_\tau$ takes only the two
values $\tau$ and $1-\tau$. The proposed kernel is normalized since
\[
\int_{\mathcal A}\pi_\tau^Q(da|s)
=
\frac{1}{Z_\tau^Q(s)}
\mathbb E_{a\sim\hat\beta_k(\cdot|s)}
\bigl[w_\tau(Q(s,a)-V_\tau^Q(s))\bigr]
=1.
\]
Thus, $\pi_\tau^Q(\cdot|s)$ is a probability measure. The expectile
first-order condition is
\[
\mathbb E_{a\sim\hat\beta_k}
\bigl[w_\tau(Q(s,a)-V_\tau^Q(s))
(Q(s,a)-V_\tau^Q(s))\bigr]=0.
\]
Dividing by $Z_\tau^Q(s)$ and using the definition of $\pi_\tau^Q$ gives
\[
\begin{aligned}
0
&=
\mathbb E_{a\sim\pi_\tau^Q(\cdot|s)}
\bigl[Q(s,a)-V_\tau^Q(s)\bigr]\\
&=
\mathbb E_{a\sim\pi_\tau^Q(\cdot|s)}[Q(s,a)]-V_\tau^Q(s).
\end{aligned}
\]
Rearranging proves the stated expectation identity.
\end{proof}

\subsection{Proof of Theorem~\ref{th:policy_performance_comparison}}

\begin{proof}
At $Q_{\mathrm{In}}^*$, Theorem~\ref{lem:app_expectile_policy} gives
$V_\tau^{Q_{\mathrm{In}}^*}(s)
=\mathbb E_{a\sim\pi_{\mathrm{In}}^*(\cdot|s)}
[Q_{\mathrm{In}}^*(s,a)]$. Substituting this identity into the in-sample
fixed-point equation yields
\[
Q_{\mathrm{In}}^*(s,a)
=R(s,a)+\gamma\,
\mathbb E_{s'\sim P(\cdot|s,a),\,
b\sim\pi_{\mathrm{In}}^*(\cdot|s')}
[Q_{\mathrm{In}}^*(s',b)].
\]
This is the Bellman evaluation equation for $\pi_{\mathrm{In}}^*$.

For the CSDG fixed point, the expectile representation and
$\mathcal S_\mu Q(s)=
\mathbb E_{a\sim\kappa_\mu(\cdot|s)}[Q(s,a)]$ give
\[
\begin{aligned}
Q_{\mathrm{CSDG}}^*(s,a)
=R(s,a)+\gamma\,
\mathbb E_{s'\sim P(\cdot|s,a)}\bigl[&
(1-\lambda)
\mathbb E_{b\sim\pi_\tau^{Q_{\mathrm{CSDG}}^*}(\cdot|s')}
Q_{\mathrm{CSDG}}^*(s',b)\\
&+\lambda
\mathbb E_{b\sim\kappa_\mu(\cdot|s')}
Q_{\mathrm{CSDG}}^*(s',b)\bigr].
\end{aligned}
\]
By the definition of
$\pi_{\mathrm{CSDG}}^*=(1-\lambda)
\pi_\tau^{Q_{\mathrm{CSDG}}^*}+\lambda\kappa_\mu$, the bracketed expression
is the expectation under $\pi_{\mathrm{CSDG}}^*$. Hence, this is the Bellman
evaluation equation for $\pi_{\mathrm{CSDG}}^*$. Bellman evaluation is a
$\gamma$-contraction and has a unique bounded fixed point. Therefore,
$Q_{\mathrm{In}}^*=Q^{\pi_{\mathrm{In}}^*}$ and
$Q_{\mathrm{CSDG}}^*=Q^{\pi_{\mathrm{CSDG}}^*}$.

The normalized discounted occupancy used below is
\[
d_\pi^\gamma(B)
=
(1-\gamma)
\sum_{t=0}^{\infty}
\gamma^t
\Pr_\pi(s_t\in B\mid s_0\sim d_0).
\]
Apply the performance-difference lemma with
$\pi=\pi_{\mathrm{CSDG}}^*$ and reference policy
$\pi_{\mathrm{In}}^*$. With the normalized occupancy above, it gives
\[
J(\pi_{\mathrm{CSDG}}^*)-J(\pi_{\mathrm{In}}^*)
=
\frac{1}{1-\gamma}
\mathbb E_{\substack{
s\sim d_{\pi_{\mathrm{CSDG}}^*}^\gamma,\,
a\sim\pi_{\mathrm{CSDG}}^*(\cdot|s)}}
\bigl[A^{\pi_{\mathrm{In}}^*}(s,a)\bigr].
\]
For each state $s$, substituting the policy mixture separates the conditional
advantage expectation as
\[
\begin{aligned}
&\mathbb E_{a\sim\pi_{\mathrm{CSDG}}^*(\cdot|s)}
\bigl[A^{\pi_{\mathrm{In}}^*}(s,a)\bigr]\\
&\quad=
(1-\lambda)
\mathbb E_{a\sim\pi_\tau^{Q_{\mathrm{CSDG}}^*}(\cdot|s)}
\bigl[A^{\pi_{\mathrm{In}}^*}(s,a)\bigr]\\
&\qquad+
\lambda
\mathbb E_{a\sim\kappa_\mu(\cdot|s)}
\bigl[A^{\pi_{\mathrm{In}}^*}(s,a)\bigr].
\end{aligned}
\]
Averaging over
$s\sim d_{\pi_{\mathrm{CSDG}}^*}^\gamma$ gives
$(1-\lambda)\bar G_\tau+\lambda\bar G_\mu$. Substitution into the
performance-difference identity proves the equality in
Eq.~\eqref{eq:policy_performance_result}. Finally, the assumptions
$\bar G_\tau\ge-\xi$ and $\bar G_\mu\ge g_\mu$ imply
\[
\begin{aligned}
(1-\lambda)\bar G_\tau+\lambda\bar G_\mu
&\ge -(1-\lambda)\xi+\lambda g_\mu,\\
J(\pi_{\mathrm{CSDG}}^*)-J(\pi_{\mathrm{In}}^*)
&\ge
\frac{\lambda g_\mu-(1-\lambda)\xi}{1-\gamma}.
\end{aligned}
\]
The right-hand side is nonnegative whenever
$\lambda g_\mu\ge(1-\lambda)\xi$, which proves the final statement.
\end{proof}

\section{State-Conditional CHN Anchoring}
\label{chn}

The local action reference $\mathcal A_{\mathcal D,k}(s)$ is finite and
nonempty. Its convex hull
$C_k(s)=\operatorname{Conv}(\mathcal A_{\mathcal D,k}(s))$ is therefore
compact. Define the hull-to-reference coverage radius
\[
\kappa_k(s)
=
\sup_{c\in C_k(s)}
\operatorname{dist}
\bigl(c,\mathcal A_{\mathcal D,k}(s)\bigr).
\]

\begin{theorem}[CHN-to-anchor radius]
\label{prop:app_chn_anchor}
For every $a\in\mathcal A$,
\[
\operatorname{dist}
\bigl(a,\mathcal A_{\mathcal D,k}(s)\bigr)
\le
\operatorname{dist}(a,C_k(s))+\kappa_k(s).
\]
Consequently, $a\in\operatorname{CHN}_\delta(s)$ implies
$\operatorname{dist}(a,\mathcal A_{\mathcal D,k}(s))
\le\delta+\kappa_k(s)$.
\end{theorem}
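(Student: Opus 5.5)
The plan is a projection-plus-triangle-inequality argument, using only that $C_k(s)$ is compact and that $\mathcal A_{\mathcal D,k}(s)$ is finite and nonempty.

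\textbf{Step 1 (projection onto the hull).} Fix $a\in\mathcal A$. Since $C_k(s)$ is a nonempty compact convex set, the map $c\mapsto\|a-c\|_2$ attains its minimum on $C_k(s)$. We therefore pick $c^\star\in C_k(s)$ with $\|a-c^\star\|_2=\operatorname{dist}(a,C_k(s))$. Convexity makes $c^\star$ unique, namely the Euclidean projection, but compactness alone is enough here.

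\textbf{Step 2 (anchor for the projected point).} Because $\mathcal A_{\mathcal D,k}(s)$ is finite and nonempty, there is some $a_i\in\mathcal A_{\mathcal D,k}(s)$ with $\|c^\star-a_i\|_2=\operatorname{dist}(c^\star,\mathcal A_{\mathcal D,k}(s))$. By the definition of $\kappa_k(s)$ as a supremum over $C_k(s)$, this distance is at most $\kappa_k(s)$. The supremum is finite because $C_k(s)$ is bounded, which is the one thing I would check explicitly.

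\textbf{Step 3 (triangle inequality).} We then chain the two distances:
\[
\operatorname{dist}\bigl(a,\mathcal A_{\mathcal D,k}(s)\bigr)\le\|a-a_i\|_2\le\|a-c^\star\|_2+\|c^\star-a_i\|_2\le\operatorname{dist}(a,C_k(s))+\kappa_k(s).
\]
This is the main inequality.

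\textbf{Step 4 (consequence).} For the corollary, Definition~\ref{de1} says that $a\in\operatorname{CHN}_\delta(s)$ means $\min_{a'\in C_k(s)}\|a-a'\|_2\le\delta$, that is, $\operatorname{dist}(a,C_k(s))\le\delta$. Substituting this into the main inequality gives $\operatorname{dist}(a,\mathcal A_{\mathcal D,k}(s))\le\delta+\kappa_k(s)$.

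No step is a real obstacle. The only care needed is to justify that the two minima are attained, which follows from compactness of $C_k(s)$ and finiteness of $\mathcal A_{\mathcal D,k}(s)$. Even without attainment, the same argument works with an $\varepsilon$-approximate minimizer followed by $\varepsilon\to0$.
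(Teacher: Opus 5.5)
Your proposal is correct and follows the paper's proof essentially step for step. The paper likewise takes a minimizer $c^*$ of $\|a-c\|_2$ over the compact hull $C_k(s)$, then picks a nearest anchor to $c^*$ in the finite set $\mathcal A_{\mathcal D,k}(s)$, whose distance is at most $\kappa_k(s)$ by definition. It then chains the two distances with the triangle inequality and substitutes $\operatorname{dist}(a,C_k(s))\le\delta$ to obtain the CHN consequence.
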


\begin{proof}
Because $\mathcal A_{\mathcal D,k}(s)$ is finite and nonempty, its convex hull
$C_k(s)$ is nonempty and compact. The continuous map $c\mapsto\|a-c\|_2$
therefore attains its minimum on $C_k(s)$. Let $c^*$ be a minimizer, so that
$\|a-c^*\|_2=\operatorname{dist}(a,C_k(s))$.

The local action reference is finite, and hence the distance from $c^*$ to
$\mathcal A_{\mathcal D,k}(s)$ is also attained. Choose
$a_i^*\in\mathcal A_{\mathcal D,k}(s)$ such that
$\|c^*-a_i^*\|_2=
\operatorname{dist}(c^*,\mathcal A_{\mathcal D,k}(s))$. By the definition of
$\kappa_k(s)$, this distance is at most $\kappa_k(s)$. The triangle inequality
then gives
\[
\begin{aligned}
\operatorname{dist}
\bigl(a,\mathcal A_{\mathcal D,k}(s)\bigr)
&\le \|a-a_i^*\|_2\\
&\le \|a-c^*\|_2+\|c^*-a_i^*\|_2\\
&\le \operatorname{dist}(a,C_k(s))+\kappa_k(s).
\end{aligned}
\]
This proves the first claim. If $a\in\operatorname{CHN}_\delta(s)$, then
$\operatorname{dist}(a,C_k(s))\le\delta$. Substituting this bound proves
$\operatorname{dist}(a,\mathcal A_{\mathcal D,k}(s))
\le\delta+\kappa_k(s)$.
\end{proof}

The nearest-anchor selector in Definition~\ref{def:nearest_anchor_decomposition}
attains the distance on the left because the local reference is finite. The
main analysis evaluates the associated value difference directly through
$e_{\mathrm{geo}}(Q,s)$ and keeps the remaining anchor-to-expectile difference
in $e_{\mathrm{ref}}(Q,s)$. The radius above is an action-space statement;
$e_{\mathrm{geo}}(Q,s)$ also contains the state displacement and is therefore
retained explicitly in the operator bounds.

\section{Practical Noise Injection}
\label{PFNI}

The practical algorithm centers both perturbation branches at the target
action $\pi_{\phi'}(s)$. The following result relates their noise radii to the
state-conditional CHN and observed-action anchors.

\begin{theorem}[Perturbation and anchoring radii]
\label{prop:app_practical_anchoring}
Let $\bar a(s)\in\mathcal A$ satisfy
$\operatorname{dist}(\bar a(s),C_k(s))\le\rho_k(s)$. For
$j\in\{\mathrm{In},\mathrm{OOD}\}$, suppose $\|\eta_j\|_2\le\delta_j$ and
define $\tilde a_j=\Pi_\mathcal A(\bar a(s)+\eta_j)$. Then
\[
\operatorname{dist}(\tilde a_j,C_k(s))
\le\rho_k(s)+\delta_j,
\qquad
\operatorname{dist}
\bigl(\tilde a_j,\mathcal A_{\mathcal D,k}(s)\bigr)
\le\rho_k(s)+\delta_j+\kappa_k(s).
\]
\end{theorem}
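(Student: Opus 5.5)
Both inequalities follow from two facts: the Euclidean projection onto the action box is non-expansive, and the distance to a set is $1$-Lipschitz. The second inequality then reduces to Theorem~\ref{prop:app_chn_anchor}.

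For the first inequality, fix $s$ and $j$, and let $c^*\in C_k(s)$ be a point attaining $\operatorname{dist}(\bar a(s),C_k(s))$. It exists because $C_k(s)$ is nonempty and compact, being the convex hull of a finite nonempty set. Since $C_k(s)\subseteq\mathcal A$ and $\mathcal A$ is closed and convex, we have $\Pi_{\mathcal A}(c^*)=c^*$. Non-expansiveness of Euclidean projection onto a closed convex set then gives
\[
\|\tilde a_j-c^*\|_2=\|\Pi_{\mathcal A}(\bar a(s)+\eta_j)-\Pi_{\mathcal A}(c^*)\|_2\le\|\bar a(s)+\eta_j-c^*\|_2.
\]
By the triangle inequality, the right-hand side is at most $\|\bar a(s)-c^*\|_2+\|\eta_j\|_2\le\rho_k(s)+\delta_j$. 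Since $c^*\in C_k(s)$, it follows that $\operatorname{dist}(\tilde a_j,C_k(s))\le\|\tilde a_j-c^*\|_2\le\rho_k(s)+\delta_j$.

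Two remarks on this step. First, the inclusion $C_k(s)\subseteq\mathcal A$ is where convexity of $\mathcal A$ is used: dataset actions lie in $\mathcal A$, and $\mathcal A$ is convex, so their hull does too. Second, one could avoid the projection and use $\operatorname{dist}(\bar a(s)+\eta_j,C_k(s))\le\rho_k(s)+\delta_j$ directly, but clipping can move the point. The fixed-point property $\Pi_{\mathcal A}(c^*)=c^*$ is exactly what guarantees that clipping does not increase the distance to $C_k(s)$.

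For the second inequality, apply Theorem~\ref{prop:app_chn_anchor} with $a=\tilde a_j\in\mathcal A$. This gives $\operatorname{dist}(\tilde a_j,\mathcal A_{\mathcal D,k}(s))\le\operatorname{dist}(\tilde a_j,C_k(s))+\kappa_k(s)$. Substituting the first inequality yields the bound $\rho_k(s)+\delta_j+\kappa_k(s)$.

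The only step needing care is the projection step. Its two ingredients are non-expansiveness of the projection, a standard property of projections onto closed convex sets, and the fact that points of $C_k(s)$ are fixed by the projection; once both are stated, the rest is routine.
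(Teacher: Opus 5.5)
Your proof is correct and follows the paper's argument: both use $\Pi_{\mathcal A}(c)=c$ for $c\in C_k(s)\subseteq\mathcal A$, the non-expansiveness of the projection, and the triangle inequality, and then invoke Theorem~\ref{prop:app_chn_anchor} for the second bound. The only difference is cosmetic: you fix a minimizer $c^*$, whose existence follows from compactness of $C_k(s)$, whereas the paper bounds $\|\tilde a_j-c\|_2$ for every $c\in C_k(s)$ and then takes the infimum, so it does not need a minimizer to exist.
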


\begin{proof}
Every $c\in C_k(s)$ lies in the convex action set $\mathcal A$, so
$\Pi_\mathcal A(c)=c$. Since Euclidean projection onto a closed convex set is
non-expansive, for every $c\in C_k(s)$,
$\|\Pi_\mathcal A(\bar a(s)+\eta_j)-\Pi_\mathcal A(c)\|_2
\le\|\bar a(s)+\eta_j-c\|_2$. Taking the infimum over $c$ gives
\[
\begin{aligned}
\operatorname{dist}(\tilde a_j,C_k(s))
&=
\inf_{c\in C_k(s)}
\|\Pi_\mathcal A(\bar a(s)+\eta_j)-\Pi_\mathcal A(c)\|_2\\
&\le
\inf_{c\in C_k(s)}\|\bar a(s)+\eta_j-c\|_2\\
&\le
\operatorname{dist}(\bar a(s),C_k(s))+\|\eta_j\|_2.
\end{aligned}
\]
The final line follows from the triangle inequality: for each $c\in C_k(s)$,
$\|\bar a(s)+\eta_j-c\|_2
\le\|\bar a(s)-c\|_2+\|\eta_j\|_2$, after which the infimum is taken over
$c$. Using
$\operatorname{dist}(\bar a(s),C_k(s))\le\rho_k(s)$ and
$\|\eta_j\|_2\le\delta_j$ proves
$\operatorname{dist}(\tilde a_j,C_k(s))
\le\rho_k(s)+\delta_j$.

Applying Theorem~\ref{prop:app_chn_anchor} to $\tilde a_j$ then gives
\[
\begin{aligned}
\operatorname{dist}
\bigl(\tilde a_j,\mathcal A_{\mathcal D,k}(s)\bigr)
&\le
\operatorname{dist}(\tilde a_j,C_k(s))+\kappa_k(s)\\
&\le \rho_k(s)+\delta_j+\kappa_k(s),
\end{aligned}
\]
which proves the second inequality.
\end{proof}

In the implementation, $\bar a(s')=\pi_{\phi'}(s')$ and
$\rho_k(s')=\operatorname{dist}(\pi_{\phi'}(s'),C_k(s'))$. If Gaussian noise
is clipped elementwise at magnitude $c_j$ in a $d_a$-dimensional action space,
then
\[
\|\eta_j\|_2^2
=\sum_{\ell=1}^{d_a}\eta_{j,\ell}^2
\le d_a c_j^2,
\qquad
\|\eta_j\|_2\le\sqrt{d_a}\,c_j.
\]
Thus, the theoretical radius can be taken as
$\delta_j=\sqrt{d_a}\,c_j$, and
$c_{\mathrm{In}}<c_{\mathrm{OOD}}$ gives the two ordered locality bounds used
by CSDG.

\end{document}